\documentclass[11pt]{article}

\usepackage[T1]{fontenc}
\usepackage[utf8]{inputenc}
\usepackage{lmodern}
\usepackage{microtype}
\usepackage{amsfonts,amssymb,mathtools,bm,mathrsfs}
\usepackage{amsthm}
\usepackage{aliascnt}
\usepackage{geometry}
\usepackage{iftex,ifpdf}
\usepackage[font=footnotesize]{subcaption}
\usepackage[colorlinks=true,allcolors=blue!45!black]{hyperref}
\usepackage[nameinlink,capitalise,noabbrev]{cleveref}
\usepackage{booktabs,array,longtable,tabularx}
\usepackage[numbers,sort&compress]{natbib}
\usepackage{graphicx}
\usepackage{float}
\usepackage{placeins}
\usepackage{xcolor}
\definecolor{clientblue}{HTML}{2F6B9A}
\definecolor{controlorange}{HTML}{B45F06}
\definecolor{biasred}{HTML}{8E3B46}
\definecolor{inkgray}{HTML}{4F5965}
\definecolor{softgray}{HTML}{F2F4F7}
\usepackage{enumitem}
\usepackage{tikz}
\usetikzlibrary{arrows.meta,positioning,fit,calc}
\usepackage{fancyvrb}
\usepackage{url}

\ifpdf
  \DeclareGraphicsExtensions{.pdf,.png,.jpg,.eps}
\else
  \DeclareGraphicsExtensions{.eps}
\fi

\setlist[itemize]{topsep=3pt,itemsep=2pt,parsep=1pt}
\setlist[enumerate]{topsep=3pt,itemsep=2pt,parsep=1pt}

\newcommand{\email}[1]{\href{mailto:#1}{#1}}
\allowdisplaybreaks

\newtheorem{assumption}{Assumption}
\newtheorem{theorem}{Theorem}[section]

\newaliascnt{proposition}{theorem}
\newtheorem{proposition}[proposition]{Proposition}
\aliascntresetthe{proposition}

\newaliascnt{lemma}{theorem}
\newtheorem{lemma}[lemma]{Lemma}
\aliascntresetthe{lemma}

\newaliascnt{corollary}{theorem}
\newtheorem{corollary}[corollary]{Corollary}
\aliascntresetthe{corollary}

\theoremstyle{definition}

\newaliascnt{definition}{theorem}

\aliascntresetthe{definition}

\newaliascnt{remark}{theorem}
\newtheorem{remark}[remark]{Remark}
\aliascntresetthe{remark}

\newaliascnt{proofnote}{theorem}
\newtheorem{proofnote}[proofnote]{Proof note}
\aliascntresetthe{proofnote}

\AddToHook{env/theorem/begin}{\crefalias{section}{theorem}}
\AddToHook{env/proposition/begin}{\crefalias{section}{proposition}}
\AddToHook{env/lemma/begin}{\crefalias{section}{lemma}}
\AddToHook{env/corollary/begin}{\crefalias{section}{corollary}}
\AddToHook{env/definition/begin}{\crefalias{section}{definition}}
\AddToHook{env/remark/begin}{\crefalias{section}{remark}}
\AddToHook{env/proofnote/begin}{\crefalias{section}{proofnote}}

\crefname{assumption}{Assumption}{Assumptions}
\Crefname{assumption}{Assumption}{Assumptions}
\crefname{theorem}{Theorem}{Theorems}
\Crefname{theorem}{Theorem}{Theorems}
\crefname{proposition}{Proposition}{Propositions}
\Crefname{proposition}{Proposition}{Propositions}
\crefname{lemma}{Lemma}{Lemmas}
\Crefname{lemma}{Lemma}{Lemmas}
\crefname{corollary}{Corollary}{Corollaries}
\Crefname{corollary}{Corollary}{Corollaries}
\crefname{remark}{Remark}{Remarks}
\Crefname{remark}{Remark}{Remarks}
\crefname{proofnote}{Proof note}{Proof notes}
\Crefname{proofnote}{Proof note}{Proof notes}

\newcommand{\E}{\mathbb{E}}
\newcommand{\R}{\mathbb{R}}
\newcommand{\Var}{\operatorname{Var}}

\newcommand{\Scaf}{\textnormal{\textsc{SCAFFOLD}}}

\newcommand{\smallscale}{s_{\gamma,N}}
\newcommand{\remscale}{r_{\gamma,N}}
\newcommand{\fourthscale}{v_{\gamma,N}}
\newcommand{\bstat}{b_{\gamma,N,H}}
\newcommand{\QH}{Q_{\gamma,N,H}}
\newcommand{\Ubar}{\bar U}
\newcommand{\eps}{\varepsilon}

\newcommand{\delt}{\delta}

\newcommand{\statezero}{\mathcal{X}_0}

\hypersetup{
  colorlinks=true,
  linkcolor=blue!50!black,
  citecolor=blue!50!black,
  urlcolor=blue!50!black
}

\title{Beyond Client Averaging: A Client-Independent Second-Order Stationary-Bias Component in Stochastic SCAFFOLD}
\author{Yi-Ping Tang and Guan-Ju Peng
\thanks{Both authors are with the Graduate Institute of Data Science and Information Computing,
National Chung Hsing University, Taichung, Taiwan.
Email (Yi-Ping Tang): \email{s4109053021@gmail.com};
Email (Guan-Ju Peng): \email{gjpeng@email.nchu.edu.tw};
ORCID (Guan-Ju Peng): \url{https://orcid.org/0000-0001-5508-9485}.}
}
\date{}

\ifpdf
\hypersetup{
  hypertexnames=false,
  pdftitle={Beyond Client Averaging: A Client-Independent Second-Order Stationary-Bias Component in Stochastic SCAFFOLD},
  pdfauthor={Yi-Ping Tang; Guan-Ju Peng},
  pdfsubject={Stationary bias in stochastic SCAFFOLD},
  pdfkeywords={federated learning, SCAFFOLD, stationary bias, stochastic approximation, control variates}
}
\fi

\begin{document}
\maketitle
\begin{abstract}

Existing constant-step analysis of stochastic \Scaf{} identifies a leading $O(\gamma/N)$ stationary mean bias and shows that higher-order bias can persist as the client count increases, but does not identify the first client-independent contribution at coefficient level. For full-participation stochastic \Scaf{} with one-dimensional homogeneous clients, fixed local-step count $H$, and bounded additive gradient noise, we prove, uniformly over $N\ge2$,

$$
\begin{aligned}
\mathbb{E}_{\pi_{\gamma,N,H}}[x]-x^\star
={}&
-\frac{f'''(x^\star)\sigma^2}{4f''(x^\star)^2}\frac{\gamma}{N}\\
&-
\frac{f'''(x^\star)\sigma^2}{12f''(x^\star)}
\frac{(H-1)(5H-1)}{H}\gamma^2
+O_H\!\left(\frac{\gamma^2}{N}+\gamma^3\right).
\end{aligned}
$$

Hence client averaging suppresses the leading $O(\gamma/N)$ bias but does not remove the client-independent $O(\gamma^2)$ component when its coefficient is nonzero.

The mechanism is indirect: although the direct control contribution cancels pathwise in the linear global average, the controls still alter within-round local trajectories and their second moments. Fresh gradient noise and persistent control fluctuations therefore generate local second-moment corrections that nonquadratic curvature converts into stationary mean bias. The coefficient vanishes for quadratic objectives. Numerical experiments are consistent with the predicted coefficient, its persistence as client count increases, and the stated joint remainder. The result is restricted to the one-dimensional homogeneous fixed-$H$ setting.

\end{abstract}

\section{Introduction}
\label{sec:intro}

Federated learning trades communication for local computation: each participating client performs several stochastic-gradient steps before the server averages the resulting updates. This design can substantially reduce synchronization, but the local trajectories may drift toward client-specific directions. \Scaf{} addresses this problem by attaching control variates to local updates, enabling clients to track a common optimization direction more closely \citep{karimireddy2020scaffold}. Most analyses study finite-time convergence. Under a constant step size, however, stochastic iterates continue to fluctuate around the optimum, and their long-run behavior is described by an invariant distribution whose mean need not equal the minimizer.

Recent constant-step analysis of stochastic \Scaf{} makes the usual ``more clients help'' intuition precise and identifies its higher-order limit. \citet{mangold2025scaffold} establish stationarity, a client-number linear speed-up up to higher-order terms, and a higher-order stationary-bias component that is not eliminated by increasing the client count. In the scalar homogeneous specialization, their leading stationary mean-bias layer is of order $\gamma/N$, where $\gamma$ is the step size and $N$ is the number of participating clients. Still, their analysis leaves the client-independent higher-order contribution unresolved at the coefficient level. We ask whether client averaging suppresses that first client-independent contribution and, if not, which term first carries it and what mechanism generates it.

Our answer is negative in this setting. For full-participation stochastic \Scaf{} with one-dimensional homogeneous clients, fixed local-step count $H$, sufficiently small constant step size $\gamma$, and bounded additive gradient noise, we prove
\begin{equation}
\label{eq:intro-main}
\E_{\pi_{\gamma,N,H}}[x]-x^\star
= -\frac{\tau\sigma^2}{4a^2}\frac{\gamma}{N}
-\frac{\tau\sigma^2}{12a}\frac{(H-1)(5H-1)}{H}\gamma^2
+O_H\!\left(\frac{\gamma^2}{N}+\gamma^3\right),
\end{equation}
where $a=f''(x^\star)$ and $\tau=f'''(x^\star)$. The first displayed layer decreases as the client count increases; the second does not. The uniform remainder further shows that, after the known $\gamma/N$ layer is removed, the normalized residual converges to the displayed $\gamma^2$ coefficient along every joint sequence $N\to\infty$ and $\gamma\to0$.

\paragraph{Why this matters.}
Client averaging reduces both stationary fluctuations and the known leading mean-bias layer, which suggests that increasing participation should continually improve long-run accuracy. Equation~\eqref{eq:intro-main} identifies a limit of that mechanism within the expansion: when its coefficient is nonzero, the client-independent $\gamma^2$ component is not reduced by further averaging after the $\gamma/N$ layer becomes smaller. Increasing $N$ alone cannot reduce this component; decreasing the step size does so, and coefficient-aware bias correction is a natural direction for future work.

Let $Y$ denote a local iterate obtained by sampling uniformly over clients $c$ and local-step indices $h=0,\ldots, H-1$ under stationarity. The exact stationary mean-balance identity gives $\E[f'(Y)]=0$, and a local Taylor expansion then gives the heuristic
\[
0=\E[f'(Y)]
\approx a\,\E[Y]+\frac{\tau}{2}\E[Y^2],
\qquad
\E[Y]\approx-\frac{\tau}{2a}\E[Y^2].
\]
Thus, the second-moment-to-mean intuition follows directly from exact stationary balance plus a local Taylor expansion: nonquadratic curvature converts a second-moment correction into a mean shift. In \Scaf{}, the controls cancel out in the linear client average, but they still affect the intermediate local trajectories. Fresh gradient noise and persistent control fluctuations therefore modify the second moments of local trajectories before curvature converts them into stationary bias; \cref{fig:mechanism} summarizes this indirect channel.

\begin{figure}[H]
\centering
\includegraphics[width=\linewidth]{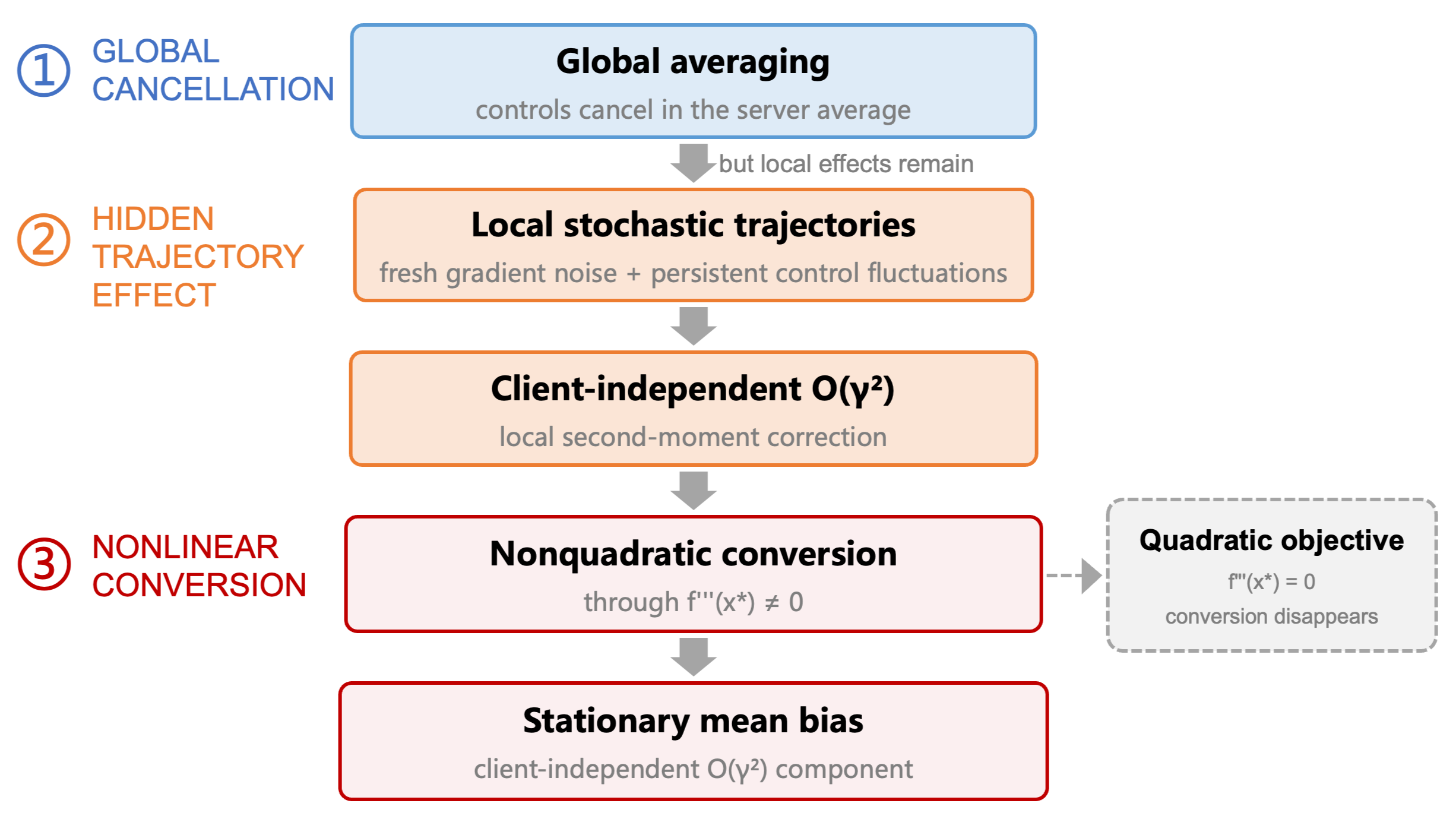}
\caption{Mechanism behind the client-independent stationary-bias layer. Linear control cancellation at the server does not erase the controls' effect on intermediate local trajectories. Fresh noise and persistent control fluctuations alter local second moments, which nonquadratic curvature converts into a mean shift.}
\label{fig:mechanism}
\end{figure}

The contributions are as follows.
\begin{itemize}
  \item \textbf{A limit to client averaging at second order.} We identify an explicit client-independent $\gamma^2$ stationary-bias coefficient, with a remainder uniform jointly in $(\gamma,1/N)$ for fixed $H$.
  \item \textbf{A trajectory-level mechanism inside \Scaf{}.} We trace the coefficient to two local second-moment sources: fresh stochastic-gradient noise and persistent control fluctuations.
  \item \textbf{Targeted numerical checks.} We test persistence with increasing client count, convergence toward the predicted coefficient, and behavior when stochasticity or nonquadratic curvature is removed.
\end{itemize}

The scalar analysis is informative beyond one dimension because it isolates three ingredients that also occur in higher-dimensional dynamics: persistent control second moments, within-round local trajectories, and nonlinear curvature. Their interaction would involve covariance--third-derivative contractions; establishing such an extension remains open.

\paragraph{Claim boundaries.}
The theorem concerns full participation, one-dimensional homogeneous clients, bounded additive noise, and fixed $H$. It is uniform in $N$ and small $\gamma$, but not as $H\to\infty$, and it does not identify the complete finite-$N$ coefficient of $\gamma^2$. The source decomposition is internal to \Scaf{} and is not a theorem-level comparison with FedAvg. The crossover of the two displayed layers is an asymptotic scale comparison, not a monotonicity statement for the total finite-step bias. The numerical experiments are finite-setting consistency checks. Multidimensional, heterogeneous-client, state-dependent-noise, and debiasing extensions remain open.

\Cref{sec:related} positions the result. \Cref{sec:setting} connects the standard \Scaf{} controls to the zero-sum parametrization used in the proof. \Cref{sec:main-result,sec:mechanism} state the theorem and derive its mechanism, while \cref{sec:proof-overview,sec:numerics} summarize the proof and numerical evidence.

\section{Related work}
\label{sec:related}

\paragraph{Federated averaging and local SGD.}
Federated Averaging (FedAvg) introduced iterative local model averaging as a communication-efficient primitive for federated learning \citep{mcmahan2017fedavg}; the broader optimization and systems landscape is surveyed by \citet{kairouz2021federated}. Its optimization core is closely related to LocalSGD, in which workers take several stochastic gradient steps before synchronization. Early theory established that LocalSGD can retain the statistical rate of minibatch SGD while communicating less \citep{stich2019localsgd}, and subsequent analyses clarified the distinct roles of identical and heterogeneous client objectives \citep{khaled2020localsgd} and the regimes in which local updates do or do not improve on minibatch SGD \citep{woodworth2020localsgd}.

\paragraph{SCAFFOLD and control correction.}
\Scaf{} augments local updates with control variates to correct client drift \citep{karimireddy2020scaffold}. More recent work sharpens the finite-time picture: \citet{luo2025revisiting} obtain improved LocalSGD and \Scaf{} rates under gradient/Hessian similarity and higher-order smoothness conditions, while \citet{mangold2025sharper} give a refined quadratic analysis for arbitrary local-step counts. These results explain when local computation and control correction improve finite-time optimization, but they do not characterize the mean of the invariant distribution generated by constant-step stochastic \Scaf{}.

\paragraph{Constant-step stochastic optimization and stationary laws.}
Under suitable stability and regularity conditions, constant-step stochastic-gradient methods are naturally studied in terms of an invariant distribution. This viewpoint appears in diffusion approximations of constant-step SGD \citep{mandt2017sgd} and in rigorous small-step characterizations of stationary stochastic-gradient dynamics \citep{dieuleveut2020bridging,chen2022stationary}. In particular, \citet{chen2022stationary} obtain Gaussian/Lyapunov characterizations of scaled stationary laws for smooth strongly convex SGD and related stochastic-approximation models under their conditions, while \citet{dieuleveut2020bridging} derive stationary/asymptotic moment expansions and use Richardson--Romberg extrapolation to reduce step-size bias. The same perspective has recently been developed for federated and decentralized algorithms. \citet{mangold2025fedavg} analyze constant-step FedAvg through its Markov structure, characterize stationary bias and variance, derive a first-order bias expansion, and construct a federated Richardson--Romberg correction. \citet{mangold2025scaffold} extend this framework to stochastic \Scaf{}, proving existence and geometric convergence of a stationary law, client-number variance reduction, and a first-order stationary mean-bias expansion whose scalar homogeneous leading layer is $O(\gamma/N)$. \citet{versini2026dsgd} obtain related first-order stationary bias and variance expansions for decentralized SGD, separating stochastic, heterogeneity, and network effects.

\paragraph{Nonlinearity and higher-order stationary bias.}
The mechanism studied here belongs to a broader class of noise--nonlinearity interactions in constant-step stochastic approximation. \citet{allmeier2024bias} allow Markovian noise to depend on the iterate, prove an $O(\alpha)$ stationary-bias bound, and refine the Polyak--Ruppert time-averaged bias to $\alpha V+O(\alpha^2)$. \citet{huo2024collusion} show that Markovian memory and nonlinearity can interact to create distinct components of the constant-step stationary bias. These results establish that persistent stochastic fluctuations can be converted into mean shifts by nonlinear dynamics. They do not, however, determine the algorithm-specific higher-order coefficient generated by \Scaf{}'s local control recursion.

\paragraph{Position of the present result.}
The closest starting point is \citet{mangold2025scaffold}. We use their stationary-law existence and geometric convergence, coarse iterate-moment bounds, leading control second moment, and known $O(\gamma/N)$ bias coefficient. Mangold et al. establish that stochastic \Scaf{} retains a higher-order stationary bias that is not eliminated by increasing the client count. Still, their analysis does not identify the client-independent contribution at the coefficient level. In the one-dimensional homogeneous fixed-$H$ setting, we resolve this structure explicitly: the first client-independent term is $N^0\gamma^2$, with a closed-form coefficient that decomposes into fresh within-round local-gradient noise and persistent \Scaf{} control second moments. The result is complementary to the quadratic theory of \citet{mangold2025sharper}: quadratic objectives support sharper arbitrary-$H$ convergence analysis, but $f'''\equiv0$ removes the second-moment-to-mean conversion isolated here. The source decomposition is internal to \Scaf{}; it is not a theorem-level comparison with FedAvg, which would require a matched second-order FedAvg expansion.

\section{Setting and exact identities}
\label{sec:setting}

We deliberately use a homogeneous scalar regime, removing client heterogeneity as a confounding source of bias and isolating the stochastic effect of the \Scaf{} control recursion itself. All $N$ clients participate in every communication round. Even with identical objectives, the stochastic client controls remain nontrivial because they are continually refreshed from noisy local trajectories. The unique minimizer is translated to $x^\star=0$, all $N\ge2$ clients share the same objective $f_c=f$, and the number of local steps $H\ge2$ is fixed. The objective satisfies $f\in C^5(\R)$ and
\[
0<\mu\le f''(y)\le L<\infty,
\]
with bounded third through fifth derivatives. We write
\[
a:=f''(0)>0,\qquad \tau:=f'''(0),\qquad \kappa:=f''''(0).
\]
At local step $h$ of client $c$, the stochastic gradient is
\[
\nabla F_{c,h}(y)=f'(y)+\eps_{c,h},
\]
where the fresh noises are independent across clients, local steps, and communication rounds, independent of the round-start state, satisfy $\E[\eps_{c,h}]=0$ and $\E[\eps_{c,h}^2]=\sigma^2$, and are almost surely bounded. No symmetry assumption is imposed.

At the start of a communication round, the server model is $x$, and every client initializes at $\theta_{c,0}=x$. Each client then takes $H$ corrected stochastic-gradient steps. We first connect the parametrization used below to the standard \Scaf{} controls.

\begin{center}
\fbox{\begin{minipage}{0.93\linewidth}
\textbf{Equivalent full-participation SCAFFOLD round.}
Let $c_{\rm srv}$ be the server control and $c_c$ the control stored by client $c$, with
$c_{\rm srv}=N^{-1}\sum_c c_c$. Standard \Scaf{} uses the correction
$c_{\rm srv}-c_c$ in each local step. Define
\[
\xi_c:=c_{\rm srv}-c_c,
\qquad \frac1N\sum_{c=1}^N\xi_c=0.
\]
With full participation, global mixing parameter one, and the standard Option-II control refresh \citep{karimireddy2020scaffold},
\[
c_c^+=c_c-c_{\rm srv}+\frac{x-\theta_{c,H}}{\gamma H},
\qquad
c_{\rm srv}^+=c_{\rm srv}+\frac1N\sum_{j=1}^N(c_j^+-c_j).
\]
Using $x^+=N^{-1}\sum_j\theta_{j,H}$ gives
\[
\xi_c^+=c_{\rm srv}^+-c_c^+
=\xi_c+\frac{\theta_{c,H}-x^+}{\gamma H}.
\]
Thus the zero-sum variables $\{\xi_c\}$ are an exact reparametrization of the usual server/client controls in the regime analyzed here.
\end{minipage}}
\end{center}

Using this parametrization, the local, server, and control updates are
\begin{align}
\theta_{c,h+1}
&=\theta_{c,h}-\gamma\{f'(\theta_{c,h})+\xi_c+\eps_{c,h+1}\},
\qquad \theta_{c,0}=x,\label{eq:local-update-main}\\
x^+&=\frac1N\sum_{c=1}^N\theta_{c,H},\label{eq:global-update-main}\\
\xi_c^+&=\xi_c+\frac{\theta_{c,H}-x^+}{\gamma H}.\label{eq:control-update-main}
\end{align}
Summing Equation~\eqref{eq:control-update-main} over clients shows directly that the zero-sum condition is preserved.

Two exact identities expose the distinction between global cancellation and local influence. First, averaging the unrolled local recursion gives
\begin{equation}
\label{eq:global-cancel-main}
x^+-x
=-\gamma\sum_{h=0}^{H-1}\overline{f'(\theta_h)}
-\gamma\sum_{h=0}^{H-1}\bar\eps_{h+1},
\end{equation}
where bars denote client averages. The controls cancel pathwise from this linear average. Second, stationarity gives
\begin{equation}
\label{eq:stationary-balance-main}
0=\sum_{h=0}^{H-1}\frac1N\sum_{c=1}^N\E[f'(\theta_{c,h})].
\end{equation}
These identities contain the central tension of the paper. Equation~\eqref{eq:global-cancel-main} rules out a direct linear control contribution to the global update, but it does not erase the effect of $\xi_c$ on the intermediate trajectories $\theta_{c,h}$. Equation~\eqref{eq:stationary-balance-main} is where those trajectory-level changes re-enter the stationary mean through the nonlinear map $f'$. Formal assumptions and the earlier stationary results used below are stated in Appendix~\ref{sec:scope}.

\section{Main result: a client-independent second-order layer}
\label{sec:main-result}

Let $\pi_{\gamma, N, H}$ denote the unique stationary law of the global/control process, whose existence for sufficiently small $\gamma$ follows from \cref{prop:anchor-inputs}, and define
\[
\bstat:=\E_{\pi_{\gamma,N,H}}[x]-x^\star.
\]

\begin{theorem}[Uniform joint stationary-bias expansion]
\label{thm:main}
Under the assumptions of \cref{sec:setting}, including full participation, for fixed $H\ge2$ there exist $\gamma_{0,H}>0$ and $C_H<\infty$, independent of $N$ and $\gamma$, such that for every $N\ge2$ and $0<\gamma\le\gamma_{0,H}$,
\begin{equation}
\label{eq:main-expansion}
\boxed{
\bstat
=-\frac{\tau\sigma^2}{4a^2}\frac{\gamma}{N}
-\frac{\tau\sigma^2}{12a}\frac{(H-1)(5H-1)}{H}\gamma^2
+R_{\gamma, N , H}
}
\end{equation}
with
\begin{equation}
\label{eq:main-remainder}
\boxed{
|R_{\gamma,N,H}|\le C_H\left(\frac{\gamma^2}{N}+\gamma^3\right).
}
\end{equation}
\end{theorem}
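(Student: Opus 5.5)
The plan is to start from the exact stationary balance \eqref{eq:stationary-balance-main} and Taylor-expand $f'$ about $x^\star=0$ along the local iterates. Write $\E^\sharp[g]:=\frac{1}{NH}\sum_{h=0}^{H-1}\sum_{c=1}^N\E_{\pi_{\gamma,N,H}}[g(\theta_{c,h})]$ for the average over clients and local steps. Using $f\in C^5$ with bounded derivatives,
\[
0=a\,\E^\sharp[\theta]+\tfrac{\tau}{2}\E^\sharp[\theta^2]+\tfrac{\kappa}{6}\E^\sharp[\theta^3]+O\bigl(\E^\sharp[\theta^4]\bigr).
\]
This reduces the theorem to three tasks: (i) relate $\E^\sharp[\theta]$ to $\bstat$; (ii) compute $\E^\sharp[\theta^2]$ up to $O_H(\gamma^2/N+\gamma^3)$; (iii) show that the third- and fourth-moment terms are $O_H(\gamma^2/N+\gamma^3)$ uniformly in $N$.

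\textbf{Step (i): from the local mean to the server mean.} Unroll \eqref{eq:local-update-main}:
\[
\theta_{c,h}=x-\gamma h\,\xi_c-\gamma\sum_{k<h}\eps_{c,k+1}-\gamma\sum_{k<h}f'(\theta_{c,k}).
\]
Averaging over $c$ kills the $\xi_c$ term exactly, since the controls are zero-sum, and the noise has mean zero. The drift contribution is $\gamma\sum_{k<h}\E[\overline{f'(\theta_k)}]$. Expanding $f'$ once more, this is $\gamma\cdot O(|\bstat|+\E^\sharp[\theta^2])$. The coarse moment bounds of \cref{prop:anchor-inputs} give $\E x^2=O(\gamma/N)$ and $\E\xi_c^2=O(1)$, so this is $O(\gamma^2/N+\gamma^3)$. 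Hence $\E^\sharp[\theta]=\bstat+O_H(\gamma^2/N+\gamma^3)$.

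\textbf{Step (ii): the second moment.} Square the unrolled form. The cross terms $x\xi_c$ vanish after averaging over $c$ by the zero-sum property. The terms $x\eps$ and $\xi_c\eps$ vanish because fresh noise is independent of the round-start state. The drift cross terms are of size $\gamma h a\E x^2$ or $\gamma^2 h\E[\xi_c f'(\theta)]$, and are absorbed into the remainder after one more substitution. This leaves
\[
\E^\sharp[\theta^2]=\E x^2+\gamma^2\Bigl(\tfrac{H-1}{2}\sigma^2+\tfrac{(H-1)(2H-1)}{6}\,\E\xi_c^2\Bigr)+O_H\!\left(\tfrac{\gamma^2}{N}+\gamma^3\right).
\]
I then invoke the anchor inputs of \cref{prop:anchor-inputs} for two facts. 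The first is the leading global second moment $\E x^2=\frac{\sigma^2\gamma}{2aN}+O(\gamma^2/N)$; inserted into the balance with $-\tau/(2a)$, it reproduces the known $-\frac{\tau\sigma^2}{4a^2}\frac{\gamma}{N}$ layer. The second is the leading control second moment. From \eqref{eq:control-update-main}, $\xi_c$ at stationarity tracks the negative of the client's round-averaged noise deviation, so $\E\xi_c^2=\frac{\sigma^2}{H}(1-\frac1N)+O(\gamma)$. The $1/N$ and $O(\gamma)$ corrections feed only into the remainder. The bracket then equals $\sigma^2\frac{(H-1)(5H-1)}{6H}$, and multiplying by $-\tau/(2a)$ gives the displayed $\gamma^2$ coefficient. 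If the anchor result gives the control moment only at the coarse level, I would derive it directly. The plan is to write the one-round linearized recursion $\xi_c^+=\xi_c+\frac{\theta_{c,H}-x^+}{\gamma H}$ with $\theta_{c,H}-x^+=-\gamma H\xi_c-\gamma\sum_h(\eps_{c,h}-\bar\eps_h)+O(\gamma^2)$, so the leading part of $\xi_c$ is a fresh round-noise average plus contracting corrections, and then take second moments under stationarity.

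\textbf{Step (iii): higher moments, the main obstacle.} I need $\E^\sharp[\theta^3]=O(\gamma^2/N+\gamma^3)$ and $\E^\sharp[\theta^4]=O(\gamma^2/N+\gamma^3)$, uniformly in $N$. The fourth moment is straightforward from coarse bounds: $\E x^4=O(\gamma^2/N^2+\gamma^2/N\cdot\gamma)$ via a Burkholder-type argument on the averaged noise, and the local increments contribute $O(\gamma^4)$. The third moment is delicate. The naive bound $\E|x|^3\sim(\gamma/N)^{3/2}$ is not of the required order when $N\lesssim\gamma^{-1}$. So I would expand $\E x^3$ through the stationary recursion for $x^3$. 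Its leading driving terms are the noise third moment, contributing $\gamma^3\E\bar\eps^3=O(\gamma^3/N^2)$, and the quadratic drift term $\tau\gamma\E[x^2]\cdot\E x^2$. Dividing by the contraction rate $\gamma a$ gives $\E x^3=O(\gamma^2/N^2+\gamma^2/N)$. Local-increment contributions to $\E^\sharp[\theta^3]$ are $O(\gamma^3)$, or $O(\gamma\cdot\E x^2\cdot\gamma)$ for mixed terms, both within the remainder. Throughout I must keep every constant independent of $N$, which needs the coarse moment bounds of \cref{prop:anchor-inputs} in their $N$-uniform form. This uniformity bookkeeping is where I expect most of the work.
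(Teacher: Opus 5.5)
Your architecture matches the paper's: exact balance plus a Taylor expansion, a mean bootstrap, the exact square giving $\gamma^2(h\sigma^2+h^2Q_{\gamma,N,H})$, the control moment $\frac{\sigma^2}{H}(1-\frac1N)$, and third and fourth moments via stationary recursions. Your coefficient arithmetic is also correct.

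The genuine gap is in step (ii), where you take $\E x^2=\frac{\sigma^2\gamma}{2aN}+O(\gamma^2/N)$ as an input from \cref{prop:anchor-inputs}. No such input exists. (L2) gives only $\E x^2=O_H(\gamma)$, and (L4) gives the $\gamma/N$ layer with remainder $O(\gamma^2H+\gamma^{3/2})$, which says nothing at order $\gamma^2$. This is exactly the paper's key identifiability step (\cref{lem:G5b}). Since $\bstat$ contains $-\frac{\tau}{2a}\E x^2$, an unresolved $N$-independent term $c\gamma^2$ in $\E x^2$ would shift the coefficient by $-\tau c/(2a)$, and nothing in your argument rules this out.

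Moreover, the form you state is incompatible with the expansion you are proving when $\tau\sigma^2\neq0$. We have $\E x^2\ge\bstat^2$, and $\bstat$ stays of order $\gamma^2$ as $N\to\infty$ at fixed $\gamma$. So $\E x^2$ must have an $N$-independent part, and the correct target is a remainder $O_H(\gamma^2/N+\gamma^3)$. The same objection applies to your coarse claims $\E x^2=O(\gamma/N)$ and $\E x^4=O(\gamma^2/N^2+\gamma^3/N)$. Step (i) also needs $|\bstat|\le C_H(\gamma/N+\gamma^2)$, which the anchors do not give. The paper gets it from the exact recursion $m_{h+1}=(1-a\gamma)m_h-\gamma\nu_h$ with $m_H=m_0$ (\cref{lem:G8}).

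What must be supplied is a proof from the exact split $x^+=\rho_\gamma x+\eta+\Delta$. This gives $(1-\rho_\gamma^2)\E x^2=\E\eta^2+2\rho_\gamma\E[x\Delta]+2\E[\eta\Delta]+\E\Delta^2$, and each nonlinear term must be shown to be $O_H(\gamma(\gamma^2/N+\gamma^3))$. Cauchy--Schwarz does not suffice for either covariance.

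For $\E[x\Delta]$ with $\E\Delta^2=O(\gamma^4)$, Cauchy--Schwarz gives $\gamma^{5/2}N^{-1/2}+\gamma^3$. After dividing by the restoring factor $\asymp a\gamma$, this leaves exactly an unresolved $N^0\gamma^2$ term. The paper instead expands $q$ about the round-start point, $\bar q_h=q(x)+q'(x)\bar\delt_h+\bar r_h$. It then uses $|\E x^3|+\E x^4=O_H(\gamma^2/N+\gamma^3)$ together with $\E\bar\delt_h^2\le C_H\gamma^2(\smallscale+1/N)$, where the controls cancel in $\bar\delt_h$.

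For $\E[\eta\Delta]$, Cauchy--Schwarz gives $\gamma^3/\sqrt N$, i.e.\ $\gamma^2/\sqrt N$ in $\E x^2$. This breaks the joint-uniform remainder (take $N=\gamma^{-1}$). The paper uses coordinate replacement, $\E[\eps_iR]=\E[\eps_i\{R(Z)-R(Z^{(i,0)})\}]$. Zeroing one noise coordinate moves only one client's path by $O(\gamma|\eps_i|)$, hence $\Delta$ by $O(\gamma^2/N)$, which yields $|\E[\eta\Delta]|\le C_H\gamma^3/N$.

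Your step (iii) also glosses over the nonlinear increment in the fourth-moment recursion. There the paper needs the sharper bound $\E\Delta^4\le C_H\gamma^7$ from local sixth moments to close the absorption. The decisive gap, though, is the missing sharp second moment.
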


The client-independent $\gamma^2$ coefficient is
\begin{equation}
\label{eq:B20-main}
\boxed{
B_{20}^{\mathrm{SCAF}}(H)
=-\frac{f'''(x^\star)\sigma^2}{12f''(x^\star)}
\frac{(H-1)(5H-1)}{H}.
}
\end{equation}
The subscript $20$ records the bivariate order $\gamma^2N^0$: second order in the step size and independent of the client count.

\subsection{Interpretation for client averaging and local computation}
\label{sec:main-interpretation}

\paragraph{More clients suppress only the leading displayed layer.}
The known $O(\gamma/N)$ contribution decreases at rate $1/N$, whereas the displayed $\gamma^2$ contribution is independent of $N$. Thus, within the theorem's stationary and small-step regime, client averaging suppresses the leading bias layer but does not remove the client-independent second-order layer when its coefficient is nonzero. When $\tau\sigma^2\neq0$, equating the magnitudes of the two displayed terms gives
\[
N_\times
=\frac{3H}{a(H-1)(5H-1)}\,\gamma^{-1}.
\]
The two displayed coefficients then have the same sign. This is a crossover scale for the two displayed asymptotic layers, not an exact finite-step phase transition or a monotonicity statement for the full bias; \cref{fig:client-averaging-schematic} visualizes this scale comparison.

\begin{figure}[H]
\centering
\begin{tikzpicture}[
  x=1.36cm,
  y=0.93cm,
  axis/.style={-{Latex[length=2.3mm]},line width=0.75pt,draw=inkgray},
  guide/.style={densely dotted,line width=0.75pt,draw=inkgray!70},
  decreasing/.style={line width=1.55pt,draw=clientblue},
  constant/.style={line width=1.55pt,draw=controlorange,dashed},
  label/.style={font=\small,align=center},
  note/.style={font=\footnotesize,align=center,text=inkgray},
  crossover/.style={font=\footnotesize,align=left,text=inkgray,
                    fill=white,rounded corners=1pt,inner sep=2.5pt}
]
  \draw[axis] (0.55,0.25) -- (8.45,0.25);
  \draw[axis] (0.55,0.25) -- (0.55,4.48);
  \node[font=\small,below=8pt] at (4.50,0.25)
    {number of participating clients $N$};
  \node[font=\small,rotate=90,anchor=south] at (0.12,2.40)
    {schematic magnitude of displayed bias layers};

  \draw[decreasing]
    plot[smooth,domain=0.72:8.05,samples=100]
    (\x,{3.20/(\x+0.35)+0.20});
  \draw[constant] (0.72,1.10) -- (8.05,1.10);

  \draw[guide] (3.20,0.25) -- (3.20,3.82);
  \fill[inkgray] (3.20,1.10) circle (1.15pt);
  \node[note,fill=white,inner sep=1pt] at (3.20,0.53) {$N_{\times}$};
  \node[crossover,anchor=west] at (3.55,3.62)
    {\textbf{Displayed-layer crossover}\\[-1pt]
     $\displaystyle N_{\times}
       =\frac{3H}{a(H-1)(5H-1)}\,\gamma^{-1}$};

  \node[label,fill=white,rounded corners=1pt,inner sep=2pt,
        text=clientblue!85!black] at (1.85,2.55)
    {$O(\gamma/N)$ component};
  \node[note,fill=white,inner sep=1pt] at (1.85,2.15)
    {decreases with $N$};

  \node[label,fill=white,rounded corners=1pt,inner sep=2pt,
        text=controlorange!90!black] at (6.05,1.58)
    {client-independent $O(\gamma^2)$ component\\[-1pt]
     \footnotesize does not decay with $N$};
\end{tikzpicture}
\caption{Schematic magnitude of the two displayed terms in \cref{thm:main}. The $O(\gamma/N)$ component decreases with $N$, whereas the client-independent $O(\gamma^2)$ component does not. When both are nonzero, equating their displayed coefficients gives the marked crossover scale. It is not an exact finite-step threshold or a curve for the total bias.}
\label{fig:client-averaging-schematic}
\end{figure}

\paragraph{The coefficient records a local-computation effect.}
For fixed problem parameters, the factor
\[
\frac{(H-1)(5H-1)}{H}
\]
is increasing over integers $H\ge2$, while the theorem treats each $H$ as fixed. The coefficient therefore links the client-independent stationary effect to computation performed inside each communication round.

\paragraph{Quadratic models hide the mechanism.}
When the objective is quadratic, $f'''(x^\star)=0$ and both displayed coefficients vanish. A purely quadratic analysis can therefore characterize stability and convergence sharply while remaining blind to the second-moment-to-mean conversion isolated here.

\subsection{Precise joint-limit meaning}
The uniform remainder gives the exact asymptotic interpretation. After subtracting the known $\gamma/N$ layer and normalizing by $\gamma^2$,
\begin{equation}
\label{eq:normalized-remainder}
\left|
\frac{\bstat+\frac{\tau\sigma^2}{4a^2}\frac{\gamma}{N}}{\gamma^2}
-B_{20}^{\mathrm{SCAF}}(H)
\right|
\le C_H\left(\frac1N+\gamma\right).
\end{equation}
Hence, the normalized residual converges to the value in Equation~\eqref{eq:B20-main} along any joint sequence $N\to\infty$, $\gamma\to0$, without a relative-rate condition. Terms of order $\gamma^2/N$ remain in the remainder, so this identifies the client-independent coefficient but not the complete fixed-$N$ second-order expansion.

\section{Mechanism: how controls survive linear cancellation}
\label{sec:mechanism}

\Cref{fig:mechanism} gives the qualitative picture. The coefficient follows from making the two sources of its fluctuations precise.

\paragraph{Linear aggregation.}
Under the zero-sum control state, Equation~\eqref{eq:global-cancel-main} shows that the client-average control term cancels pathwise from the global update. There is no direct linear control term shifting the server model.

\paragraph{Within-round local dynamics.}
Each $\xi_c$ nevertheless changes the intermediate trajectory
\[
\theta_{c,0},\theta_{c,1},\ldots,\theta_{c,H},
\]
and hence the points at which subsequent stochastic gradients are evaluated. The nonlinear local updates act along these control-dependent paths before the server observes their average.

\paragraph{Two second-moment sources.}
Let
\[
Q_{\gamma,N,H}:=\frac1N\sum_{c=1}^N\E[\xi_c^2]
\]
be the stationary average control second moment. In the homogeneous additive-noise setting,
\begin{equation}
\label{eq:QH-main}
Q_{\gamma,N,H}=\frac{\sigma^2}{H}\left(1-\frac1N\right)+O_H(\gamma).
\end{equation}
The controls therefore retain nonzero fluctuations as the number of clients grows. For the change in the local second moment relative to the round-start server model,
\[
\bar U_h:=\frac1N\sum_{c=1}^N\bigl(\E[\theta_{c,h}^2]-\E[x^2]\bigr),
\]
we obtain
\begin{equation}
\label{eq:U-main}
\bar U_h
=\gamma^2\bigl(h\sigma^2+h^2Q_{\gamma,N,H}\bigr)
+O_H\!\left(\frac{\gamma^2}{N}+\gamma^3\right).
\end{equation}
The first displayed term is generated by fresh within-round gradient noise. The persistent round-start controls generate the second. Because Equation~\eqref{eq:QH-main} approaches $\sigma^2/H$ as $N$ grows, both sources contribute at the client-independent $\gamma^2$ scale.

\paragraph{Nonlinear conversion.}
Expanding the gradient near the optimum,
\[
f'(y)=ay+\frac{\tau}{2}y^2+O(y^3),
\]
and inserting the local moments into the exact stationary balance in Equation~\eqref{eq:stationary-balance-main} converts the second-moment correction into a mean shift. The resulting coefficient separates into
\begin{align}
B_{20}^{\mathrm{local}}(H)
&=-\frac{\tau\sigma^2}{4a}(H-1),\label{eq:Blocal-main}\\
B_{20}^{\mathrm{ctrl}}(H)
&=-\frac{\tau\sigma^2}{12a}\frac{(H-1)(2H-1)}{H},\label{eq:Bctrl-main}\\
B_{20}^{\mathrm{SCAF}}(H)
&=B_{20}^{\mathrm{local}}(H)+B_{20}^{\mathrm{ctrl}}(H).\label{eq:Bsum-main}
\end{align}
The first component records direct local-gradient noise; the second records the additional second moment carried by the \Scaf{} controls. Their sum is the coefficient in Equation~\eqref{eq:B20-main}.

\section{Proof overview}
\label{sec:proof-overview}

The coefficient suggested by a formal Taylor expansion is not automatically identifiable. The stationary balance contains the global second moment, within-round local second-moment corrections, and higher signed moments. Any one of these quantities could, in principle, carry an additional client-independent $\gamma^2$ contribution. The proof must therefore isolate the two intended local sources of fluctuation and rule out all competing channels at the same scale.

The argument proceeds in the order shown in the dependency map in \cref{fig:proof-map} of Appendix~\ref{app:main-proof}. Coarse moment bounds first place the global iterate, controls, and local displacements on scales that are uniform in $N$. The local recursion is then expanded sharply enough to expose the two terms in Equation~\eqref{eq:U-main}: fresh noise and the round-start control second moment. Before these terms can be converted to the stationary mean, the proof controls the global fourth moment and the signed third moment, ensuring that cubic and quartic Taylor contributions remain within the target remainder.

The key identifiability step is the sharp global second moment
\begin{equation}
\label{eq:sharpM2-overview}
\E[x^2]=\frac{\sigma^2}{2a}\frac{\gamma}{N}
+O_H\!\left(\frac{\gamma^2}{N}+\gamma^3\right).
\end{equation}
A coarser $O(\gamma/N+\gamma^2)$ estimate would leave open an additional client-independent $\gamma^2$ term in $\E[x^2]$, which would contaminate the coefficient attributed to local trajectories. Establishing Equation~\eqref{eq:sharpM2-overview} rules out that competing source.

The delicate covariance in this step couples fresh round noise with the nonlinear increment. A generic Cauchy--Schwarz bound is too loose because it loses the extra factor $1/N$ required by the joint remainder. The proof uses a coordinate-replacement argument that sets one fresh-noise coordinate to zero. Only one client's within-round path changes, and the subsequent client average contributes an additional $1/N$ sensitivity. Summing over all noise coordinates then recovers the required scale.

With the global second moment sharpened and the local cubic and quartic terms placed inside the remainder, the exact stationary balance reduces to
\[
\bstat
=-\frac{\tau}{2a}\E[x^2]
-\frac{\tau}{2aH}\sum_{h=0}^{H-1}\bar U_h
+O_H\!\left(\frac{\gamma^2}{N}+\gamma^3\right).
\]
Substituting Equations~\eqref{eq:sharpM2-overview} and~\eqref{eq:U-main}, then summing $h$ and $h^2$, yields \cref{thm:main}. The full proof begins in Appendix~\ref{sec:scope} and is assembled in Appendix~\ref{app:main-proof}.

\section{Numerical checks of the predicted effect}
\label{sec:numerics}

The numerical study follows the paper's main claim in the same order: first, whether the debiased statistic remains nonzero as the client count grows; second, whether it approaches the predicted coefficient as $(\gamma,1/N)\to(0,0)$; and third, whether the observations are consistent with removing the effect when stochasticity or nonquadratic curvature is absent.

We use the smooth, strongly convex objective
\begin{equation}
\label{eq:sim-objective}
f'(x)=x+\frac12\log\cosh(x),
\end{equation}
for which
\[
\frac12\le f''(x)=1+\frac12\tanh(x)\le\frac32,
\qquad a=1,\qquad \tau=\frac12.
\]
The additive noise is independent Rademacher noise with variance one. The predicted client-independent coefficient becomes
\begin{equation}
\label{eq:sim-B}
B_{20}(H)=-\frac{(H-1)(5H-1)}{24H},
\end{equation}
so $B_{20}(2)=-0.1875$ and $B_{20}(4)=-0.59375$.

We estimate the stationary mean both directly and through the exact mean-balance identity. Unless stated otherwise, $\hat b_{\gamma, N, H}$ in the main text denotes the mean-balance estimate; the direct sample mean is retained as a diagnostic. The primary statistic removes the known leading layer and normalizes the residual:
\begin{equation}
\label{eq:Csim}
C_{\gamma, N, H}
:=\frac{\hat b_{\gamma,N,H}+\frac18\frac{\gamma}{N}}{\gamma^2}.
\end{equation}
Thus, the theorem predicts that this normalized residual should approach a nonzero constant as $N$ grows and $\gamma$ decreases.
By \cref{thm:main},
\begin{equation}
\label{eq:Csim-prediction}
C_{\gamma,N,H}=B_{20}(H)+O_H\!\left(\frac1N+\gamma\right).
\end{equation}
Each setting uses eight independent chains, and uncertainty is reported by 95\% Student-$t$ intervals across chain estimates. The complete burn-in, batching, estimator agreement, and stability protocol are given in Appendix~\ref{sec:numerical-supp}.

\subsection{Does the component survive increasing client count?}
\label{sec:num-client}

We fix $H=2$ and $\gamma=1/24$ and increase $N\in\{8,16,32,64,128\}$. After removing the known $\gamma/N$ contribution, the normalized residual approaches a nonzero plateau near the predicted value $-0.1875$; see \cref{fig:nonvanishing}. A regression in $1/N$ gives large-$N$ intercept $-0.189145$ with 95\% bootstrap interval $[-0.190398,-0.188035]$. This is the numerical check that most directly mirrors the title claim: averaging suppresses the leading layer but does not remove the client-independent component.

\begin{figure}[H]
\centering
\includegraphics[width=0.55\linewidth]{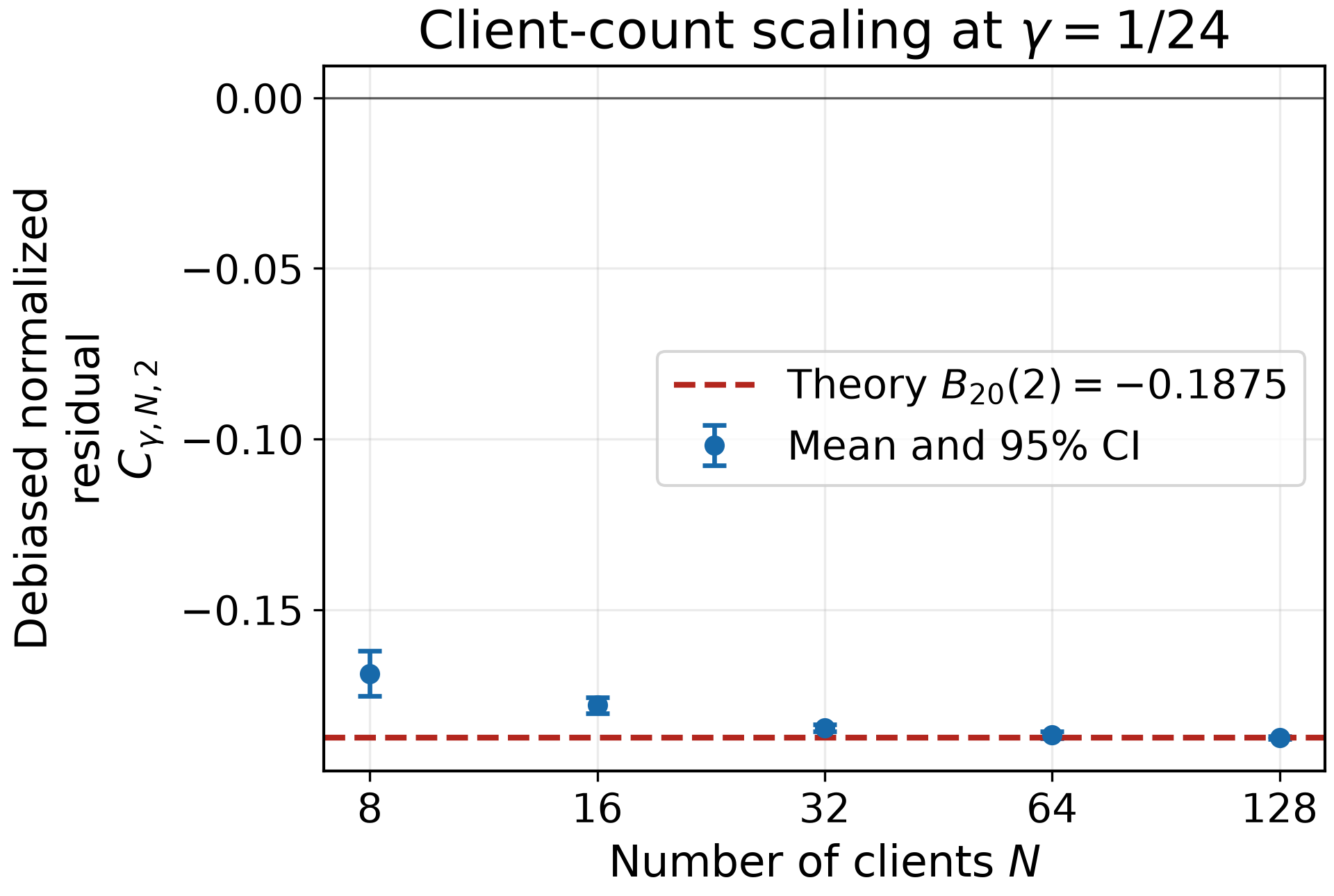}
\caption{Client-count persistence at $H=2$ and $\gamma=1/24$. The normalized residual approaches a nonzero plateau; larger $N$ lies to the right on the horizontal axis.}
\label{fig:nonvanishing}
\end{figure}

\subsection{Does the normalized residual approach the coefficient?}
\label{sec:num-coefficient}

Along the joint path $N=1/\gamma$, Equation~\eqref{eq:Csim-prediction} permits an $O_H(\gamma)$ error. The coarse-grid $H=4$ offset is therefore compatible with finite-step effects allowed by the theorem; the smaller-step experiment tests whether the discrepancy contracts at the predicted order. For $H=2$, the initial grid $\gamma\in\{1/12,1/16,1/24,1/32,1/48\}$ gives extrapolated intercept $-0.187214$ with 95\% bootstrap interval $[-0.189250,-0.185175]$, consistent with $B_{20}(2)$.

The same coarse grid at $H=4$ showed a measurable finite-step offset, so we ran a separately specified smaller-step study with fresh chains at
\[
(\gamma,N)\in\{(1/64,64),(1/96,96),(1/128,128),(1/192,192)\}.
\]
The normalized statistic moves toward $B_{20}(4)$, while the scaled error
$\{C_{\gamma,1/\gamma,4}-B_{20}(4)\}/\gamma$ remains between $1.38$ and $1.52$; see \cref{fig:H4-confirm} in Appendix~\ref{sec:numerical-supp}. A linear fit over these four settings yields an intercept of $-0.592953$ with a 95\% bootstrap interval of $[-0.595526,-0.590236]$. The smaller-step figure, initial discrepancy, follow-up criteria, and numerical table are retained in Appendix~\ref{sec:numerical-supp}.

\subsection{Do the required ingredients matter?}
\label{sec:num-removal}

The supplementary negative controls remove both ingredients from the mechanism. With deterministic gradients, the stationary-mean intervals contain zero. With a quadratic objective, where $f'''(x^\star)=0$, direct-mean intervals also contain zero. The settings, intervals, and an additional $H=1$ boundary check are reported in Appendix~\ref{sec:numerical-supp}.

Together, the experiments are consistent with three distinct implications of the expansion: persistence as the client count increases, convergence toward the predicted coefficient, and dependence on stochastic, nonquadratic local dynamics.

\FloatBarrier

\section{Conclusion}
\label{sec:conclusion}
In the homogeneous scalar fixed-$H$ setting, stochastic \Scaf{} has two stationary-bias layers: the known $O(\gamma/N)$ contribution, which client averaging suppresses, and a client-independent $O(\gamma^2)$ contribution that is unaffected by the client count when its coefficient is nonzero. Fresh gradient noise and persistent control fluctuations generate the latter through local second moments, and nonquadratic curvature converts it into a stationary mean shift.

Control cancellation is only linear: the controls vanish from the server average but still change local trajectories. Numerically, the residual forms a client-count plateau, approaches the coefficient along smaller-step paths, and disappears without stochasticity or nonlinearity. In higher dimensions, the scalar second-moment-to-mean conversion would be replaced by contractions between stationary covariance structure and the third-derivative tensor, while client heterogeneity may introduce additional weighted moment channels absent from the present homogeneous argument. Multidimensional analysis, client heterogeneity, and coefficient-based bias cancellation remain open.

\clearpage
\begingroup
\raggedright
\bibliographystyle{plainnat}
\bibliography{references}
\endgroup

\clearpage
\appendix
\section{Scope, notation, and imported results}
\label{sec:scope}

\subsection{Theorem scope}

We translate the optimum to the origin and work only in the following restricted setting.

\begin{assumption}[One-dimensional homogeneous fixed-$H$ setting]
\label{ass:scope}
Let $d=1$, let $N\ge 2$, and fix an integer $H\ge 2$. All $N$ clients participate in every communication round and have the same objective $f_c=f$. The unique minimizer is translated to $x^\star=0$, so $f'(0)=0$. The integer $H$ is fixed: no bound in this document is claimed to be uniform as $H\to\infty$.
\end{assumption}

\begin{assumption}[Strong convexity, smoothness, and higher regularity]
\label{ass:smooth}
There exist $0<\mu\le L<\infty$ such that
\[
  \mu\le f''(y)\le L,\qquad y\in\R.
\]
Moreover, $f\in C^5(\R)$ and, for $j=3,4,5$, there are finite constants $K_j$ such that
\[
  \sup_{y\in\R}|f^{(j)}(y)|\le K_j.
\]
Write
\[
  a:=f''(0)>0,\qquad \tau:=f'''(0),\qquad \kappa:=f''''(0).
\]
\end{assumption}

\begin{assumption}[Bounded additive fresh gradient noise]
\label{ass:noise}
At local step $h$ of client $c$, the stochastic gradient is
\[
  \nabla F_{c,h}(y)=f'(y)+\eps_{c,h}.
\]
The variables $\{\eps_{c,h}\}$ are independent across clients, local steps, and communication rounds, independent of the round-start state, and satisfy
\[
  \E[\eps_{c,h}]=0,\qquad
  \E[\eps_{c,h}^2]=\sigma^2,\qquad
  |\eps_{c,h}|\le B_\eps\quad\text{a.s.}
\]
No symmetry assumption is imposed. In particular, $\E[\eps^3]$ may be nonzero.
\end{assumption}

\begin{remark}[Sample-loss representation of additive noise]
The additive stochastic-gradient oracle in \cref{ass:noise} can equivalently be represented by the sample loss
\[
  F_{c,h}(y)=f(y)+\eps_{c,h}y.
\]
Indeed,
\[
  \nabla F_{c,h}(y)=f'(y)+\eps_{c,h},
  \qquad
  F_{c,h}''(y)=f''(y).
\]
Hence each sample loss inherits the same $L$-smoothness bound as $f$. This verifies the sample-wise smoothness condition required by the imported SCAFFOLD results from \citet{mangold2025scaffold}.
\end{remark}

\begin{assumption}[Zero-sum control state]
\label{ass:zerosum}
The SCAFFOLD state is restricted to the invariant subspace
\[
  \statezero
  :=\left\{(x,\xi_1,\ldots,\xi_N)\in\R^{N+1}:
  \sum_{c=1}^N\xi_c=0\right\}.
\]
Equivalently, the initial controls satisfy $\sum_c\xi_c^0=0$.
\end{assumption}

\begin{remark}[Role of the zero-sum state]
The control sum is conserved exactly by the SCAFFOLD control update; see \cref{lem:exact-recursion}. If the initial sum were nonzero, the global recursion would retain a deterministic linear control drift. The earlier stationary analysis used below is formulated on the corresponding zero-sum state space. This restriction is therefore part of the theorem interface, not merely a proof convenience.
\end{remark}

\subsection{Stationary convention and scale notation}

Throughout the proof, $(x,\Xi)=(x,\xi_1,\ldots,\xi_N)$ denotes a round-start state distributed according to the stationary law $\pi_{\gamma, N, H}$, and all expectations also include the fresh noises drawn during the next communication round. The communication-round index is suppressed.

\begin{table}[H]
\centering
\small
\caption{Core notation used in the main text and appendices.}
\label{tab:core-notation}
\begin{tabularx}{\linewidth}{@{}>{\raggedright\arraybackslash}p{0.22\linewidth}X@{}}
\toprule
Symbol & Meaning\\
\midrule
$x$ & Round-start server model; under stationarity, the global coordinate of $\pi_{\gamma,N,H}$.\\
$\theta_{c,h}$ & Local iterate of client $c$ after $h$ local steps, with $\theta_{c,0}=x$.\\
$\xi_c$ & Zero-sum control correction $c_{\rm srv}-c_c$ used in the local update.\\
$Q_{\gamma,N,H}$ & Stationary average control second moment, $N^{-1}\sum_c\E[\xi_c^2]$.\\
$\bar U_h$ & Local second-moment correction, $N^{-1}\sum_c\{\E[\theta_{c,h}^2]-\E[x^2]\}$.\\
$B_{20}^{\mathrm{SCAF}}(H)$ & Client-independent $\gamma^2N^0$ coefficient in the stationary mean-bias expansion.\\
\bottomrule
\end{tabularx}
\end{table}

Define the three scales
\begin{equation}
\label{eq:scales}
  \smallscale:=\frac{\gamma}{N}+\gamma^2,
  \qquad
  \remscale:=\frac{\gamma^2}{N}+\gamma^3=\gamma\smallscale,
  \qquad
  \fourthscale:=\frac{\gamma^2}{N^2}+\gamma^3.
\end{equation}
The notation $O_H(\cdot)$ means that the hidden constant may depend on fixed $H$ and the fixed problem/noise parameters, but not on $N$ or $\gamma$.

\subsection{Earlier stationary results used in the analysis}

The following inputs are taken from \citet{mangold2025scaffold}; they are not new claims of this document.

\begin{table}[H]
\centering
\small
\caption{Imported-result interface. The middle column records how the assumptions underlying the earlier results are satisfied in the present setting; the last column states how each input is used here.}
\label{tab:imported-interface}
\begin{tabularx}{\linewidth}{@{}>{\raggedright\arraybackslash}p{0.25\linewidth}>{\raggedright\arraybackslash}p{0.34\linewidth}X@{}}
\toprule
Imported result & Assumptions matched here & Role in the present analysis\\
\midrule
Stationarity and geometric $W_2$ convergence
& \Cref{ass:scope,ass:smooth,ass:noise,ass:zerosum}, with the additive oracle represented by the sample loss above
& Defines $\pi_{\gamma,N,H}$ and permits stationary expectation identities.\\
Coarse global/local iterate moments
& The same interface, including bounded noise and sample-wise smoothness
& Supplies the initial moment scales used in the mean and higher-moment bootstraps.\\
Control second-moment expansion
& One-dimensional homogeneous additive-noise specialization of \cref{ass:scope,ass:smooth,ass:noise,ass:zerosum}
& Supplies the persistent control second moment entering the local correction.\\
Leading $O(\gamma/N)$ bias layer
& Scalar homogeneous specialization under \cref{ass:scope,ass:smooth,ass:noise,ass:zerosum}
& Provides the established leading layer that the present uniform expansion refines.\\
\bottomrule
\end{tabularx}
\end{table}

\begin{proposition}[Earlier stationary inputs]
\label{prop:anchor-inputs}
Under the matched assumptions summarized in \cref{tab:imported-interface} and sufficiently small step size, \citet{mangold2025scaffold} establish the following facts.
\begin{enumerate}[label=\textup{(L\arabic*)}]
  \item \textbf{Stationarity.} The joint global/control process is a time-homogeneous Markov chain with a unique stationary distribution and geometric convergence in $W_2$; see \citet[Theorem~4.2]{mangold2025scaffold}.
  \item \textbf{Coarse moments.} The stationary global/local iterates have coarse second moments of order $O_H(\gamma)$, the control second moments are $O_H(1)$, and Appendix~B of \citet{mangold2025scaffold} provides coarse fourth/sixth-moment estimates for the global/local iterates. We use those iterate moment bounds, but we do \emph{not} rely on an imported control sixth-moment bound.
  \item \textbf{Control second moment.} In the one-dimensional homogeneous additive-noise specialization of \citet[Lemma~5.1 and Appendix~E.2]{mangold2025scaffold},
  \begin{equation}
  \label{eq:anchor-QH}
    \QH:=\frac1N\sum_{c=1}^N\E[\xi_c^2]
    =\frac{\sigma^2}{H}\left(1-\frac1N\right)+O_H(\gamma).
  \end{equation}
  In this homogeneous setting, the ideal controls satisfy $\xi_c^\star=-f'(0)=0$, so the squared deviation from the ideal control is the raw second moment displayed above.
  \item \textbf{Known first bias layer.} \citet[Theorem~5.3]{mangold2025scaffold} prove a leading stationary bias of order $\gamma/N$ and leaves a higher-order remainder $O(\gamma^2H+\gamma^{3/2})$. In the present scalar homogeneous specialization, the leading coefficient is
  \begin{equation}
  \label{eq:anchor-bias-leading}
    -\frac{\tau\sigma^2}{4a^2}\frac{\gamma}{N}.
  \end{equation}
\end{enumerate}
\end{proposition}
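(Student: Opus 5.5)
The plan is to treat the proposition as an interface statement. First I would check that the present setting is a special case of the framework of \citet{mangold2025scaffold}. Then I would specialize their general formulas to the scalar homogeneous case. Finally, as an independent check, I would re-derive the two explicit constants in (L3) and (L4) from the exact recursions \eqref{eq:local-update-main}--\eqref{eq:control-update-main}.

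\textbf{Matching the hypotheses.} The additive oracle is written as the sample loss $F_{c,h}(y)=f(y)+\eps_{c,h}y$. This gives sample-wise $\mu$-strong convexity and $L$-smoothness by \cref{ass:smooth}. Bounded noise (\cref{ass:noise}) supplies every noise-moment condition they require. Homogeneity means all client optima coincide at $x^\star=0$, so the heterogeneity terms vanish. The ideal controls are $\xi_c^\star=-f'(0)+\frac1N\sum_j f'(0)=0$, consistent with the zero-sum state space of \cref{ass:zerosum}. Full participation and global step one match the Option-II reparametrization in the boxed round.

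\textbf{Reading off (L1)--(L2).} With the hypotheses matched, (L1) and the coarse iterate moments in (L2) are direct citations of \citet[Theorem~4.2 and Appendix~B]{mangold2025scaffold} for $\gamma\le\gamma_{0,H}$. I would only check that the constants depend on $H,\mu,L,K_j,B_\eps$ and not on $N$. I would not cite any control sixth-moment bound.

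\textbf{Checking (L3).} Unrolling the local recursion gives
\[
\theta_{c,H}-x^+=-\gamma H\xi_c-\gamma\sum_{h}(\eps_{c,h+1}-\bar\eps_{h+1})+O(\gamma^2(|\xi_c|+\text{noise})+\gamma|\theta-x|),
\]
where the control average vanishes by the zero-sum property. Substituting into \eqref{eq:control-update-main}, the $\xi_c$ term cancels exactly at leading order, leaving
\[
\xi_c^+=-\tfrac1H\sum_h(\eps_{c,h+1}-\bar\eps_{h+1})+O_H(\gamma)\cdot(\text{moment-bounded terms}).
\]
Hence the controls are essentially refreshed every round. Averaging $\E[(\xi_c^+)^2]$ over clients and using independence gives $\frac{\sigma^2}{H}(1-\frac1N)+O_H(\gamma)$, where the error is bounded using the coarse moments from (L2). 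This reproduces \eqref{eq:anchor-QH}.

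\textbf{Checking (L4).} From \eqref{eq:global-cancel-main},
\[
x^+=(1-\gamma Ha)x-\gamma\sum_h\bar\eps_{h+1}+(\text{higher order}).
\]
Squaring and using stationarity gives
\[
2\gamma Ha\,\E[x^2]\approx\gamma^2H\sigma^2/N,\qquad\text{so}\qquad \E[x^2]\approx\frac{\sigma^2\gamma}{2aN}.
\]
Inserting this into the balance \eqref{eq:stationary-balance-main}, Taylor-expanded as $a\E[x]+\frac\tau2\E[x^2]$, yields $\E[x]\approx-\frac{\tau\sigma^2}{4a^2}\frac{\gamma}{N}$. This matches the scalar specialization of \citet[Theorem~5.3]{mangold2025scaffold}.

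\textbf{Main obstacle.} The main difficulty is bookkeeping rather than new analysis. Their statements are multidimensional and heterogeneous, so I must make sure that every heterogeneity term vanishes identically in this specialization. I must also make sure their constants are uniform in $N$ for fixed $H$, and trace their remainder $O(\gamma^2H+\gamma^{3/2})$ carefully rather than assuming a sharper one.
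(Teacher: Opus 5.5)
Your proposal is correct and follows essentially the paper's route: the paper also treats this proposition as an imported statement, justified by hypothesis matching (the imported-interface table and the sample-loss representation $F_{c,h}(y)=f(y)+\eps_{c,h}y$), and it re-derives (L3) with explicit $N$-uniformity in \cref{lem:QH-uniform}. That re-derivation uses your argument: exact cancellation of the direct $\xi_c$ term, the fresh-noise variance $\frac{\sigma^2}{H}(1-\frac1N)$, and a Cauchy--Schwarz bound on the $O_H(\gamma)$ gradient-disagreement part via coarse moments. Your (L4) computation is only a leading-order consistency check, since the dropped $O(\gamma^2)$ trajectory and nonlinear terms are what \cref{lem:G5b} and the final assembly later control; this suffices here because the proposition merely imports the cited result with its $O(\gamma^2H+\gamma^{3/2})$ remainder.
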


\begin{remark}[Earlier remainder bound versus the present question]
The terms $O(\gamma^2H)$ and $O(\gamma^{3/2})$ in the earlier theorem are upper remainder bounds. They do not, by themselves, prove the existence of a nonzero $\gamma^2$ coefficient or a nonzero $\gamma^{3/2}$ coefficient. The purpose of the present proof is to resolve the client-number-independent $N^0\gamma^2$ coefficient in the restricted setting above.
\end{remark}

\begin{remark}[Higher-moment envelope versus variance]
The sixth-moment noise envelope in the earlier analysis (often denoted by a symbol such as $\sigma_\star^2$) is not identified with the variance $\sigma^2$ in this document. Whenever an imported sixth-moment bound is used, bounded noise under \cref{ass:noise} supplies the required envelope through $B_\eps$. This distinction is essential for constant bookkeeping.
\end{remark}

\section{Exact SCAFFOLD identities}
\label{sec:exact}

All identities in this section are pathwise and precede any small-step expansion. The one-round update below is the stochastic \Scaf{} recursion specialized to \cref{ass:scope,ass:noise}; it serves as the complete algorithmic definition used in the analysis \citep{mangold2025scaffold}.

\begin{lemma}[Exact local/global/control recursions]
\label{lem:exact-recursion}
For one communication round,
\begin{align}
  \theta_{c,h+1}
  &=\theta_{c,h}-\gamma\{f'(\theta_{c,h})+\xi_c+\eps_{c,h+1}\},
  \qquad \theta_{c,0}=x,                                           \label{eq:E1}\\
  x^+&=\frac1N\sum_{c=1}^N\theta_{c,H},                            \label{eq:E2}\\
  \xi_c^+&=\xi_c+\frac1{\gamma H}(\theta_{c,H}-x^+).              \label{eq:E3}
\end{align}
Moreover,
\begin{equation}
\label{eq:control-sum-conservation}
  \sum_{c=1}^N\xi_c^+=\sum_{c=1}^N\xi_c.
\end{equation}
Hence \cref{ass:zerosum} is invariant under the dynamics.
\end{lemma}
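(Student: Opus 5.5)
The recursions \eqref{eq:E1}--\eqref{eq:E3} are the zero-sum reparametrization already derived in the boxed equivalence of \cref{sec:setting}, so most of the work is bookkeeping. The plan is to verify each line directly from the standard Option-II \Scaf{} round and then prove the conservation law by summation.

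\textbf{Local and global updates.} For the local update \eqref{eq:E1}, standard \Scaf{} takes the step
\[
\theta_{c,h+1}=\theta_{c,h}-\gamma\{\nabla F_{c,h+1}(\theta_{c,h})+c_{\rm srv}-c_c\}.
\]
Substituting the additive oracle of \cref{ass:noise} and the definition $\xi_c=c_{\rm srv}-c_c$ gives \eqref{eq:E1}, with the noise index shifted by one. For the global update \eqref{eq:E2}, full participation with global mixing parameter one means the server model is the plain client average $x^+=N^{-1}\sum_j\theta_{j,H}$.

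\textbf{Control update.} Start from the Option-II refresh
\[
c_c^+=c_c-c_{\rm srv}+\frac{x-\theta_{c,H}}{\gamma H}.
\]
Summing it over $j$ and using $c_{\rm srv}=N^{-1}\sum_j c_j$ gives
\[
c_{\rm srv}^+=c_{\rm srv}+\frac1N\sum_j(c_j^+-c_j)=c_{\rm srv}-c_{\rm srv}+\frac{x-x^+}{\gamma H}=\frac{x-x^+}{\gamma H}.
\]
Subtracting the refresh for $c_c^+$ then yields
\[
\xi_c^+=c_{\rm srv}^+-c_c^+=\xi_c+\frac{\theta_{c,H}-x^+}{\gamma H},
\]
which is \eqref{eq:E3}.

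\textbf{Conservation.} Summing \eqref{eq:E3} over $c$ gives
\[
\sum_c\xi_c^+=\sum_c\xi_c+\frac{1}{\gamma H}\Bigl(\sum_c\theta_{c,H}-Nx^+\Bigr),
\]
and the bracket vanishes by \eqref{eq:E2}. This proves \eqref{eq:control-sum-conservation}. It follows by induction over rounds that the subspace $\statezero$ of \cref{ass:zerosum} is invariant. Alternatively, invariance follows immediately from the identity $\xi_c=c_{\rm srv}-c_c$, which has zero client average whenever $c_{\rm srv}$ is the average of the $c_c$.

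The only step needing any care is confirming that the server control remains the exact average of the client controls after the refresh. Since this is preserved by the update $c_{\rm srv}^+=c_{\rm srv}+N^{-1}\sum_j(c_j^+-c_j)$ itself, I expect no real obstacle.
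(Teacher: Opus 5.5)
Your proposal is correct and follows the paper's own argument. The paper takes \eqref{eq:E1}--\eqref{eq:E3} as the defining one-round updates, justified by the same Option-II computation as the boxed equivalence in \cref{sec:setting}; you re-derive that computation correctly, including $c_{\rm srv}^+=(x-x^+)/(\gamma H)$. The paper then proves \eqref{eq:control-sum-conservation} exactly as you do, by summing \eqref{eq:E3} over clients and using \eqref{eq:E2}.
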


\begin{proof}
Equations~\eqref{eq:E1}--\eqref{eq:E3} are the defining one-round updates in the present notation. Summing Equation~\eqref{eq:E3} over clients and using Equation~\eqref{eq:E2},
\[
  \sum_c\xi_c^+
  =\sum_c\xi_c+\frac1{\gamma H}\left(\sum_c\theta_{c,H}-Nx^+\right)
  =\sum_c\xi_c.
\]
\end{proof}

\begin{corollary}[Exact global update and linear control cancellation]
\label{cor:global-exact}
Under \cref{ass:zerosum},
\begin{equation}
\label{eq:E5}
  x^+-x
  =-\gamma\sum_{h=0}^{H-1}\overline{f'(\theta_h)}
   -\gamma\sum_{h=0}^{H-1}\bar\eps_{h+1},
\end{equation}
where
\[
  \overline{f'(\theta_h)}:=\frac1N\sum_{c=1}^N f'(\theta_{c,h}),
  \qquad
  \bar\eps_h:=\frac1N\sum_{c=1}^N\eps_{c,h}.
\]
The cancellation of the linear control term is pathwise and uses no independence assumption.
\end{corollary}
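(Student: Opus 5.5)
The statement to prove is \cref{cor:global-exact}, which is a purely algebraic consequence of \cref{lem:exact-recursion}. I will unroll the local recursion Equation~\eqref{eq:E1} for a fixed client $c$ over $h=0,\ldots,H-1$. Telescoping gives
\[
\theta_{c,H}-x=-\gamma\sum_{h=0}^{H-1}f'(\theta_{c,h})-\gamma H\xi_c-\gamma\sum_{h=0}^{H-1}\eps_{c,h+1},
\]
where the control term appears as $H\xi_c$ because $\xi_c$ is frozen during the round.

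Next I will average this identity over $c=1,\ldots,N$. Equation~\eqref{eq:E2} identifies $N^{-1}\sum_c\theta_{c,H}$ with $x^+$, and every client starts at $\theta_{c,0}=x$, so the left side becomes $x^+-x$. On the right, exchanging the finite sums over $c$ and $h$ produces $-\gamma\sum_h\overline{f'(\theta_h)}$ and $-\gamma\sum_h\bar\eps_{h+1}$. The remaining term is $-\gamma H\,N^{-1}\sum_c\xi_c$.

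The last step is to remove this control term. \Cref{ass:zerosum} imposes $\sum_c\xi_c=0$ at initialization, and by Equation~\eqref{eq:control-sum-conservation} of \cref{lem:exact-recursion} this holds at the start of every round. So the control term vanishes identically, which gives Equation~\eqref{eq:E5}.

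Every step is a deterministic identity holding for each realization of the noises and the state. No expectation, independence, or distributional property of $\eps_{c,h}$ is used, which justifies the ``pathwise'' clause. There is no real obstacle. The only point needing care is that the zero-sum property must hold at every round start, not just initially, and the conservation law in \cref{lem:exact-recursion} supplies this.
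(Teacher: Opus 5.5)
Your proposal is correct and matches the paper's proof essentially verbatim: unroll the local recursion so that the frozen control contributes $-\gamma H\xi_c$, average over clients using $x^+=N^{-1}\sum_c\theta_{c,H}$, and remove the control term with the zero-sum condition. Your point that the zero-sum property must hold at every round start, via the conservation law in Lemma~\ref{lem:exact-recursion}, is left implicit in the paper, since Assumption~\ref{ass:zerosum} already restricts the state to that invariant subspace.
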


\begin{proof}
Unrolling Equation~\eqref{eq:E1} gives
\[
  \theta_{c,H}-x
  =-\gamma\sum_{h=0}^{H-1}f'(\theta_{c,h})
   -\gamma H\xi_c
   -\gamma\sum_{h=0}^{H-1}\eps_{c,h+1}.
\]
Average over $c$ and use $N^{-1}\sum_c\xi_c=0$.
\end{proof}

\begin{corollary}[Exact stationary mean balance]
\label{cor:stationary-mean-balance}
At stationarity,
\begin{equation}
\label{eq:stationary-mean-balance}
  0=\sum_{h=0}^{H-1}\frac1N\sum_{c=1}^N\E[f'(\theta_{c,h})].
\end{equation}
\end{corollary}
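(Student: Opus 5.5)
The plan is to take expectations of the exact global update in \cref{cor:global-exact} under the stationary law $\pi_{\gamma,N,H}$. Stationarity means $x^+$ has the same law as $x$, so $\E[x^+-x]=0$; by \cref{prop:anchor-inputs}(L1) together with the coarse moment bounds in (L2), the expectations involved are finite. The noise terms $\bar\eps_{h+1}$ have mean zero, because the fresh noises are centred and independent of the round-start state. This leaves
\[
0=-\gamma\sum_{h=0}^{H-1}\E\bigl[\overline{f'(\theta_h)}\bigr].
\]
Dividing by $-\gamma\neq 0$ and expanding the client average $\overline{f'(\theta_h)}$ gives exactly \eqref{eq:stationary-mean-balance}.

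The steps, in order, are the following.
\begin{enumerate}
\item Check that each $f'(\theta_{c,h})$ is integrable at stationarity. Since $f''\le L$ and $f'(0)=0$, we have $|f'(y)|\le L|y|$. It therefore suffices that $\E|\theta_{c,h}|<\infty$.
\item Obtain that bound from the coarse local-iterate second moments in (L2). Alternatively, derive it directly from the recursion \eqref{eq:E1}: the noise is bounded, the controls have finite second moments, and $|\theta_{c,h+1}|\le(1+\gamma L)|\theta_{c,h}|+\gamma(|\xi_c|+B_\eps)$.
\item Use that $x^+$ and $x$ share the marginal law of $\pi_{\gamma,N,H}$ and that $\E|x|<\infty$. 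Then $\E[x^+-x]=0$. It is important not to assume pathwise equality; only the equality of laws is used.
\item Justify $\E[\bar\eps_{h+1}]=0$ by the independence of $\eps_{c,h+1}$ from the round-start state (stated in \cref{ass:noise}) together with boundedness.
\end{enumerate}

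The only genuine point requiring care is the integrability and the legitimacy of interchanging expectation with the finite sums. All sums are finite, over $h$ and $c$, so linearity applies once integrability holds, and the main obstacle reduces to citing the moment bounds correctly. No small-step expansion is needed, so the identity holds exactly for every admissible $\gamma$ at which the stationary law exists.
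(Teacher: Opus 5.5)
Your proposal is correct and is the paper's own argument: take expectations in the exact global update \eqref{eq:E5}, use $\E[x^+]=\E[x]$ from stationarity and $\E[\bar\eps_{h+1}]=0$ from centred fresh noise independent of the round-start state, then divide by $-\gamma$. The integrability check you add, via $|f'(y)|\le L|y|$ and the coarse moments in (L2), is left implicit in the paper but is a correct supplement.
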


\begin{proof}
Take the expectation in Equation~\eqref{eq:E5}. Stationarity gives $\E[x^+]=\E[x]$, while fresh mean-zero noise gives $\E[\bar\eps_h]=0$.
\end{proof}

\begin{lemma}[Exact new-control representation]
\label{lem:control-representation}
The new control admits the exact representation
\begin{equation}
\label{eq:C6-1}
\begin{aligned}
  \xi_c^+
  ={}&-\frac1H\sum_{h=0}^{H-1}
  \left(f'(\theta_{c,h})-\overline{f'(\theta_h)}\right)\\
  &-\frac1H\sum_{h=0}^{H-1}
  \left(\eps_{c,h+1}-\bar\eps_{h+1}\right).
\end{aligned}
\end{equation}
In particular, the old control $\xi_c$ has no direct term on the right-hand side.
\end{lemma}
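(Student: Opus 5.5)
The plan is to substitute the unrolled local recursion into the control update \eqref{eq:E3} and check that the old control cancels identically. Unrolling \eqref{eq:E1} from $\theta_{c,0}=x$ gives, for each client,
\[
  \theta_{c,H}
  = x-\gamma\sum_{h=0}^{H-1}f'(\theta_{c,h})-\gamma H\xi_c-\gamma\sum_{h=0}^{H-1}\eps_{c,h+1}.
\]
Averaging over clients and using the zero-sum condition of \cref{ass:zerosum} yields $x^+$. This is the content of \cref{cor:global-exact}, namely $x^+ = x-\gamma\sum_h\overline{f'(\theta_h)}-\gamma\sum_h\bar\eps_{h+1}$.

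Next we subtract the two expressions. The common round-start term $x$ cancels, leaving
\[
  \theta_{c,H}-x^+
  = -\gamma\sum_{h=0}^{H-1}\bigl(f'(\theta_{c,h})-\overline{f'(\theta_h)}\bigr)
    -\gamma H\xi_c
    -\gamma\sum_{h=0}^{H-1}\bigl(\eps_{c,h+1}-\bar\eps_{h+1}\bigr).
\]
Dividing by $\gamma H$ and inserting the result into \eqref{eq:E3}, the term $-\gamma H\xi_c/(\gamma H)=-\xi_c$ exactly cancels the leading $\xi_c$ in $\xi_c^+=\xi_c+(\theta_{c,H}-x^+)/(\gamma H)$. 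What remains is precisely \eqref{eq:C6-1}.

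The only point needing care is that the linear control term in the average vanishes because $\bar\xi=0$. Only this makes the cancellation in the difference exact, so the zero-sum invariance of \cref{lem:exact-recursion} must be invoked. The identity is purely algebraic and pathwise; it needs no expectation or independence. The remark that the old control has ``no direct term'' just records this cancellation. The old control still enters implicitly through the trajectories $\theta_{c,h}$, which is why the second-moment analysis remains nontrivial. There is no real obstacle beyond this bookkeeping.
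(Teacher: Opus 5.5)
Your proposal is correct and follows the paper's proof: unroll the local recursion to the endpoint, subtract the averaged global update \eqref{eq:E5} (which relies on the zero-sum condition), substitute into \eqref{eq:E3}, and cancel $\xi_c-\xi_c$. Your observation that the zero-sum condition is what makes the $-\xi_c$ cancellation exact is accurate. Without it the right-hand side would retain the client-average control $\frac1N\sum_{j}\xi_j$.
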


\begin{proof}
Subtract Equation~\eqref{eq:E5} from the unrolled local endpoint, substitute the result into Equation~\eqref{eq:E3}, and cancel the term $\xi_c-\xi_c$.
\end{proof}

\begin{remark}[Mechanism already visible at the exact level]
The controls do not affect the global iterate through a surviving linear average. Any control contribution to the stationary mean must first modify the local trajectories $\theta_{c,h}$ and then return to the global update through the nonlinear map $f'(\theta_{c,h})$. This structural fact is exact and independent of the higher-order coefficient.
\end{remark}

\section{Control moments and coarse global second moment}
\label{sec:coarse}

\subsection{Uniform control moments}

\begin{lemma}[Uniform sixth control moment]
\label{lem:control-sixth}
There exist $\gamma_{0,H}^{(C6)}>0$ and $C_{\xi,6,H}<\infty$, independent of $N$ and $\gamma$, such that for $0<\gamma\le\gamma_{0,H}^{(C6)}$,
\begin{equation}
\label{eq:control-sixth}
  \frac1N\sum_{c=1}^N\E|\xi_c|^6\le C_{\xi,6,H}.
\end{equation}
Consequently, the client-averaged second and fourth control moments are uniformly bounded as well.
\end{lemma}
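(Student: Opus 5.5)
We plan to work from the exact new-control representation in \cref{lem:control-representation}, which writes $\xi_c^+$ without any direct dependence on the old control. Write $\xi_c^+=-A_c-E_c$, with
\[
A_c:=\frac1H\sum_{h=0}^{H-1}\bigl(f'(\theta_{c,h})-\overline{f'(\theta_h)}\bigr),
\qquad
E_c:=\frac1H\sum_{h=0}^{H-1}\bigl(\eps_{c,h+1}-\bar\eps_{h+1}\bigr).
\]
The noise part is immediate: $|E_c|\le 2B_\eps$ almost surely, so $\E|E_c|^6\le (2B_\eps)^6$ uniformly in $N$.

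For the gradient-difference part, the mean-value theorem and $\mu\le f''\le L$ give
\[
|f'(\theta_{c,h})-f'(\theta_{j,h})|\le L|\theta_{c,h}-\theta_{j,h}|.
\]
Since every client starts from the same $x$, the relevant quantity is the client deviation $D_{c,h}:=\theta_{c,h}-\bar\theta_h$. Subtracting the averaged recursion from \eqref{eq:E1} and using $\sum_c\xi_c=0$ gives
\[
D_{c,h+1}=D_{c,h}-\gamma\bigl(f'(\theta_{c,h})-\overline{f'(\theta_h)}\bigr)-\gamma\xi_c-\gamma(\eps_{c,h+1}-\bar\eps_{h+1}),
\qquad D_{c,0}=0.
\]
Using the Lipschitz bound $|f'(\theta_{c,h})-\overline{f'(\theta_h)}|\le L\bigl(|D_{c,h}|+\overline{|D_h|}\bigr)$ and a discrete Gronwall argument over the fixed horizon $H$, we expect
\[
|D_{c,h}|+\overline{|D_h|}\le C_H\gamma\bigl(|\xi_c|+\overline{|\xi|}+B_\eps\bigr)
\]
for $\gamma\le 1/L$. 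Here $\overline{|D_h|}$ is bounded by averaging the same recursion. Consequently
\[
|A_c|\le C_H\gamma\bigl(|\xi_c|+\overline{|\xi|}+B_\eps\bigr).
\]

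Combining the two parts and using $(u+v)^6\le 2^5(u^6+v^6)$ together with Jensen's inequality $\overline{|\xi|}^6\le\overline{|\xi|^6}$, we obtain the one-step inequality
\[
\frac1N\sum_c\E|\xi_c^+|^6
\le C_H'\gamma^6\,\frac1N\sum_c\E|\xi_c|^6
+C_H''\bigl(B_\eps^6+\gamma^6B_\eps^6\bigr).
\]
The contraction factor $C_H'\gamma^6$ is small, so for $\gamma\le\gamma_{0,H}^{(C6)}$ it is at most $1/2$.

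The main obstacle is passing from this recursion to a stationary bound. We cannot simply equate the moments at stationarity, because that presumes the sixth moment is finite under $\pi_{\gamma,N,H}$. We plan two routes.
\begin{itemize}
\item Iterate the inequality from a deterministic zero-sum initial state. This gives $\sup_t\frac1N\sum_c\E|\xi_c^{(t)}|^6\le 2C_H''B_\eps^6(1+\gamma^6)$ for all $t$. We then pass to the limit using the $W_2$ convergence in (L1) and lower semicontinuity of $\E|\cdot|^6$ under weak convergence, via truncation $\min(|\xi|^6,M)$ followed by monotone convergence.
\item Alternatively, apply the recursion to truncated moments directly under stationarity.
\end{itemize}
The resulting constant depends only on $H$, $L$ and $B_\eps$, not on $N$ or $\gamma$. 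The bounds on the second and fourth moments then follow from the sixth-moment bound by Jensen's inequality.
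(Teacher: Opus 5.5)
Your proposal is correct, but it takes a different route from the paper's proof.

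\textbf{The paper's route.} It sets up no recursion in the control moment. It bounds the gradient-disagreement term through $|f'(y)|\le L|y|$, giving $\frac1N\sum_c\E|A_c|^6\le 64L^6\frac1H\sum_h\frac1N\sum_c\E|\theta_{c,h}|^6$. It then imports the stationary local sixth-moment estimate $\frac1N\sum_c\E|\theta_{c,h}|^6\le C_H\gamma^3$ from \citet{mangold2025scaffold}. The right-hand side then contains no control moment, so stationarity transfers the bound from $\xi_c^+$ to $\xi_c$ immediately. The a priori finiteness issue you raise never appears.

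\textbf{Your route.} You measure the disagreement through the inter-client deviations $\theta_{c,h}-\bar\theta_h$. These vanish at $h=0$ because all clients start from the same $x$, so the global iterate drops out. You get a pathwise contraction $S^+\le\alpha S+C''_H$, with $S:=\frac1N\sum_c|\xi_c|^6$ and $\alpha=C'_H\gamma^6$.

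This buys self-containment: apart from (L1) you use no imported iterate moments, so nothing depends on how those imported bounds were obtained. You also show pathwise that the gradient part of the new control is $O_H(\gamma)$ relative to the old controls and the noise. The cost is the fixed-point closure. Your first route handles it correctly: iterate from a deterministic zero-sum state, then use $W_2$ convergence, lower semicontinuity of $\E\min(S,M)$ under weak convergence, and monotone convergence.

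\textbf{Your second route.} As phrased, it does not close. Truncation does not commute with the affine bound, and the natural inequality $\E\min(S,M)\le\alpha\,\E\min(S,M/\alpha)+C''_H$ is vacuous. Because your contraction holds pathwise, a tail version does work. Under stationarity, $\Pr(S>K)=\Pr(S^+>K)\le\Pr\bigl(S>(K-C''_H)/\alpha\bigr)$. Iterating this for $K>2C''_H$ with $\alpha\le1/2$ yields $S\le 2C''_H$ almost surely, using only the existence of $\pi_{\gamma,N,H}$.
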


\begin{proof}
Write Equation~\eqref{eq:C6-1} as $\xi_c^+=-A_c-B_c$, where
\[
  A_c:=\frac1H\sum_{h=0}^{H-1}
  \left(f'(\theta_{c,h})-\overline{f'(\theta_h)}\right),
  \qquad
  B_c:=\frac1H\sum_{h=0}^{H-1}
  (\eps_{c,h+1}-\bar\eps_{h+1}).
\]
By convexity of $z\mapsto |z|^6$ and $|u-v|^6\le 32(|u|^6+|v|^6)$,
\[
  \frac1N\sum_c\E|A_c|^6
  \le 64\frac1H\sum_{h=0}^{H-1}
  \frac1N\sum_c\E|f'(\theta_{c,h})|^6.
\]
Since $f'(0)=0$ and $f'$ is $L$-Lipschitz,
\[
  |f'(y)|^6\le L^6|y|^6.
\]
The coarse local sixth-moment estimate of \citet{mangold2025scaffold}, with the sixth-moment noise envelope supplied by boundedness under \cref{ass:noise}, gives
\[
  \frac1N\sum_c\E|\theta_{c,h}|^6\le C_H\gamma^3.
\]
Hence $N^{-1}\sum_c\E|A_c|^6\le C_H\gamma^3$. On the other hand,
\[
  |\eps_{c,h}-\bar\eps_h|\le 2B_\eps
\]
pathwise, so $N^{-1}\sum_c\E|B_c|^6\le (2B_\eps)^6$. Therefore
\[
  \frac1N\sum_c\E|\xi_c^+|^6\le C_H
\]
for $\gamma\le1$. Stationarity transfers this bound from $\xi_c^+$ to $\xi_c$. The second and fourth moment bounds follow from H\"older/Jensen on the joint probability--client average.
\end{proof}

\begin{remark}
The lemma proves a \emph{client-averaged} control sixth-moment bound. No individual-client supremum is used later. In the homogeneous unique stationary law one may additionally recover exchangeability, but that strengthening is unnecessary here.
\end{remark}

\subsection{Restricted uniformity of the control second-moment expansion}

\begin{lemma}[Uniform restricted control second moment]
\label{lem:QH-uniform}
In the present homogeneous additive-noise setting,
\begin{equation}
\label{eq:QH-uniform}
  \QH=\frac{\sigma^2}{H}\left(1-\frac1N\right)+R_{Q,H},
  \qquad |R_{Q,H}|\le C_{Q,H}\gamma,
\end{equation}
where $C_{Q,H}$ is independent of $N$ and $\gamma$.
\end{lemma}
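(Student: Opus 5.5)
The plan is to bypass the imported expansion (L3) of \cref{prop:anchor-inputs}, whose $O_H(\gamma)$ remainder is not stated to be uniform in $N$. Instead I would rederive the expansion directly from the exact new-control representation of \cref{lem:control-representation}, using stationarity to identify $\QH$ with the client-averaged second moment of $\xi_c^+$. As in the proof of \cref{lem:control-sixth}, write $\xi_c^+=-A_c-B_c$. Then
\[
\QH=\frac1N\sum_c\E[B_c^2]+\frac2N\sum_c\E[A_cB_c]+\frac1N\sum_c\E[A_c^2].
\]

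\textbf{Step 1: the noise term gives the main term exactly.} For each $h$, the vector $(\eps_{c,h+1}-\bar\eps_{h+1})_c$ depends only on round-$(h+1)$ noise. Noises at different local steps are independent and mean zero, so the cross terms in $h$ vanish. Moreover
\[
\frac1N\sum_c\E[(\eps_{c,h+1}-\bar\eps_{h+1})^2]=\sigma^2\Bigl(1-\frac1N\Bigr).
\]
Hence $N^{-1}\sum_c\E[B_c^2]=H^{-2}\cdot H\sigma^2(1-1/N)=\frac{\sigma^2}{H}(1-\frac1N)$ exactly, with no remainder. Also $|B_c|\le 2B_\eps$ pathwise.

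\textbf{Step 2: the drift term $A_c$ is $O_H(\gamma)$ in client-averaged $L^2$, uniformly in $N$.} Subtracting and adding $f'(x)$ gives
\[
f'(\theta_{c,h})-\overline{f'(\theta_h)}=\bigl(f'(\theta_{c,h})-f'(x)\bigr)-\frac1N\sum_j\bigl(f'(\theta_{j,h})-f'(x)\bigr),
\]
so $|A_c|$ is controlled by local displacements $\theta_{c,h}-x$ through the Lipschitz bound $L$. The $h=0$ term vanishes since $\theta_{c,0}=x$. By the unrolled recursion \eqref{eq:E1},
\[
|\theta_{c,h}-x|\le\gamma\Bigl(L\sum_{k<h}|\theta_{c,k}|+h|\xi_c|+hB_\eps\Bigr).
\]
Taking the joint client--probability average of squares, I would bound the pieces as follows:
\begin{itemize}
  \item the coarse local second moments $O_H(\gamma)$ from (L2) control the $|\theta_{c,k}|$ terms;
  \item the uniform control moment bound of \cref{lem:control-sixth} controls the $h|\xi_c|$ term;
  \item bounded noise controls the $hB_\eps$ term.
\end{itemize}
Together these give $N^{-1}\sum_c\E[(\theta_{c,h}-x)^2]\le C_H\gamma^2$. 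Jensen's inequality for the client average handles the $\overline{\,\cdot\,}$ part, yielding $N^{-1}\sum_c\E[A_c^2]\le C_H\gamma^2$. All constants are independent of $N$ because every input bound is client-averaged.

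\textbf{Step 3: the remainder.} Cauchy--Schwarz on the joint client--probability average, together with Steps 1 and 2, gives
\[
\Bigl|\frac2N\sum_c\E[A_cB_c]\Bigr|\le 2\cdot 2B_\eps\,(C_H\gamma^2)^{1/2}=O_H(\gamma).
\]
The $A_c^2$ term is $O_H(\gamma^2)\le O_H(\gamma)$ for $\gamma\le1$. This proves \eqref{eq:QH-uniform} with $C_{Q,H}$ independent of $N$.

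The main obstacle I anticipate is Step 2: the crude bound using $|\theta_{c,h}|$ alone gives only $O(\gamma^{1/2})$, which would make the cross term $O(\gamma^{1/2})$. The key is to center at the common round start $x$, so that only displacements of order $\gamma$ appear. It must also be checked that (L2) and \cref{lem:control-sixth} are indeed uniform in $N$ in the client-averaged sense used here.
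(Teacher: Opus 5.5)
Your proposal is correct and matches the paper's proof essentially step for step. It uses the same split $\xi_c^+=-A_c-B_c$ from the exact new-control representation, the exact noise contribution $\frac{\sigma^2}{H}(1-\frac1N)$, an $O_H(\gamma^2)$ client-averaged drift bound obtained by centering at the round-start $x$, Cauchy--Schwarz for the cross term, and stationarity to pass from $\xi_c^+$ to $\xi_c$; the only cosmetic difference is that you bound the displacement through the unrolled sum with the coarse local moments of (L2), whereas the paper uses a finite-step Gr\"onwall bound in terms of $|x|$, $|\xi_c|$, and the noises.
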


\begin{proof}
This is consistent with the specialization of \citet[Lemma~5.1]{mangold2025scaffold}; we record a restricted argument to make the uniformity explicit. From Equation~\eqref{eq:C6-1}, write
\[
  \xi_c^+=-G_c-E_c,
\]
with $G_c$ the averaged gradient-disagreement term and $E_c$ the averaged fresh-noise disagreement. For each local step,
\[
  \Var(\eps_{c,h}-\bar\eps_h)=\sigma^2\left(1-\frac1N\right),
\]
and different local-step noises are independent. Thus
\begin{equation}
\label{eq:Ec-var}
  \E[E_c^2]=\frac{\sigma^2}{H}\left(1-\frac1N\right).
\end{equation}
We first establish the finite-step estimate needed here without appealing forward to \cref{lem:displacement2}. Let
$\delt_{c,h}:=\theta_{c,h}-x$. From Equation~\eqref{eq:E1},
\[
  \delt_{c,h+1}=\delt_{c,h}
  -\gamma\{f'(x+\delt_{c,h})+\xi_c+\eps_{c,h+1}\},
  \qquad \delt_{c,0}=0.
\]
Since $f'(0)=0$ and $f'$ is $L$-Lipschitz, a finite-step discrete Gr\"onwall bound gives, for every $h\le H$,
\[
  |\delt_{c,h}|
  \le C_H\gamma\left(|x|+|\xi_c|+\sum_{j=1}^h|\eps_{c,j}|\right).
\]
    Averaging the square over clients and expectations, and using the earlier coarse bound
$\E[x^2]=O_H(\gamma)$, the coarse client-averaged control second moment, and bounded noise, yields
\[
  \frac1N\sum_c\E|\theta_{c,h}-x|^2
  =\frac1N\sum_c\E|\delt_{c,h}|^2
  \le C_H\gamma^2.
\]
Thus, the estimate used in this lemma has already been proved at this point; \cref{lem:displacement2} records it separately for later reuse.
Using smoothness and Jensen gives $N^{-1}\sum_c\E[G_c^2]\le C_H\gamma^2$. Hence
\[
  \left|\frac1N\sum_c\E[G_cE_c]\right|
  \le C_H\gamma,
\]
while $N^{-1}\sum_c\E[G_c^2]=O_H(\gamma^2)$. Expanding $\E[(\xi_c^+)^2]$, averaging clients, and using stationarity yields Equation~\eqref{eq:QH-uniform}. All constants are uniform in $N$ because $0\le1-1/N\le1$ and the preceding average-moment estimates are uniform.
\end{proof}

\subsection{Coarse global second moment}

\begin{lemma}[Finite-step local displacement]
\label{lem:displacement2}
For every $h\in\{0,\ldots,H\}$,
\begin{equation}
\label{eq:disp2}
  \frac1N\sum_{c=1}^N\E[(\theta_{c,h}-x)^2]\le C_H\gamma^2.
\end{equation}
\end{lemma}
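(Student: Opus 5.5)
The plan is to rerun, for later reference, the finite-step argument already sketched inside the proof of \cref{lem:QH-uniform}, this time with explicit constants. Work with the displacement $\delt_{c,h}:=\theta_{c,h}-x$, which satisfies $\delt_{c,0}=0$ and
\[
  \delt_{c,h+1}=\delt_{c,h}-\gamma\{f'(x+\delt_{c,h})+\xi_c+\eps_{c,h+1}\}.
\]
Since $f'(0)=0$ and $f'$ is $L$-Lipschitz (\cref{ass:smooth}), we have $|f'(x+\delt_{c,h})|\le L|x|+L|\delt_{c,h}|$. This gives the pathwise one-step inequality
\[
  |\delt_{c,h+1}|\le(1+\gamma L)|\delt_{c,h}|+\gamma\bigl(L|x|+|\xi_c|+|\eps_{c,h+1}|\bigr).
\]

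The first step is to unroll this inequality over at most $H$ steps; this is a discrete Grönwall argument. For $\gamma\le1$ it yields
\[
  |\delt_{c,h}|\le \gamma(1+L)^H\Bigl(hL|x|+h|\xi_c|+\sum_{j=1}^h|\eps_{c,j}|\Bigr),
\]
so $|\delt_{c,h}|\le C_H\gamma(|x|+|\xi_c|+\sum_{j\le h}|\eps_{c,j}|)$ with $C_H$ depending only on $H$ and $L$. The second step is to square this bound. Using $(u_1+\dots+u_m)^2\le m\sum u_i^2$ with $m=h+2\le H+2$, we get
\[
  \delt_{c,h}^2\le C_H\gamma^2\Bigl(x^2+\xi_c^2+\sum_{j\le h}\eps_{c,j}^2\Bigr).
\]

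The third step is to take expectations under $\pi_{\gamma,N,H}$ together with the fresh noise, and then average over clients. Each term is bounded uniformly in $N$:
\begin{itemize}
  \item $\E[x^2]=O_H(\gamma)\le O_H(1)$ by (L2) of \cref{prop:anchor-inputs};
  \item $N^{-1}\sum_c\E[\xi_c^2]\le C_H$ by \cref{lem:control-sixth}, or equally by \cref{lem:QH-uniform};
  \item $\E[\eps_{c,j}^2]=\sigma^2$ by \cref{ass:noise}.
\end{itemize}
Together these give $N^{-1}\sum_c\E[\delt_{c,h}^2]\le C_H\gamma^2$, which is \eqref{eq:disp2}. The case $h=0$ is trivial.

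No step is a genuine obstacle. The only point needing care is that the constant must not depend on $N$, which holds because every input bound is client-averaged and uniform in $N$. The bound is also non-circular: the control second-moment input comes from \cref{lem:control-sixth}, which relies only on imported local moments, and not from the present lemma.
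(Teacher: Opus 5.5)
Your proposal is correct and is essentially the paper's proof: a finite-step discrete Gr\"onwall bound on $\delt_{c,h}=\theta_{c,h}-x$, squared and averaged using the coarse bound $\E[x^2]=O_H(\gamma)$, a client-averaged control second moment, and the noise variance, with your explicit constants filling in what the paper leaves implicit. The one difference is that you take the control second moment from \cref{lem:control-sixth} instead of the imported coarse bound (L2) of \cref{prop:anchor-inputs}; this is non-circular, and it makes the uniformity in $N$ slightly more explicit.
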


\begin{proof}
Let $\delt_{c,h}:=\theta_{c,h}-x$. From Equation~\eqref{eq:E1},
\[
  \delt_{c,h+1}=\delt_{c,h}-\gamma\{f'(x+\delt_{c,h})+\xi_c+\eps_{c,h+1}\},
  \qquad \delt_{c,0}=0.
\]
Since $|f'(y)|\le L|y|$, finite-step discrete Gr\"onwall gives, for $h\le H$,
\[
  |\delt_{c,h}|\le C_H\gamma\left(|x|+|\xi_c|+\sum_{j=1}^h|\eps_{c,j}|\right).
\]
Use the earlier coarse bound $\E[x^2]=O_H(\gamma)$, the coarse control second moment, and bounded noise. No sharp coefficient from \cref{lem:QH-uniform} is needed.
\end{proof}

\begin{lemma}[Coarse global second moment]
\label{lem:G5a}
There are constants independent of $N,\gamma$ such that
\begin{equation}
\label{eq:G5a}
  M_2:=\E[x^2]\le C_H\smallscale
  =C_H\left(\frac{\gamma}{N}+\gamma^2\right).
\end{equation}
\end{lemma}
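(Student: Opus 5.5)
The plan is to obtain a one-round second-moment recursion for the global iterate and close it with stationarity. By \cref{cor:global-exact}, write
\[
x^+=x-\gamma H f'(x)-\gamma\sum_{h=0}^{H-1}\overline{\bigl(f'(\theta_h)-f'(x)\bigr)}-\gamma\sum_{h=0}^{H-1}\bar\eps_{h+1}
=:x-\gamma Hf'(x)-\gamma D-\gamma S .
\]
Here $S$ is the averaged fresh noise. It is independent of the round-start state $(x,\Xi)$, has mean zero, and satisfies $\E[S^2]=H\sigma^2/N$.

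The deterministic contraction is standard. Since $f'(0)=0$ and $\mu\le f''\le L$, we have $|x-\gamma Hf'(x)|\le(1-\gamma H\mu)|x|$ once $\gamma HL\le1$. Squaring and taking expectations then gives
\[
\E[(x^+)^2]\le(1-\gamma H\mu)^2M_2+\gamma^2\E[D^2]+\gamma^2\E[S^2]+2\gamma\,|\E[(x-\gamma Hf'(x))D]|+2\gamma^2|\E[DS]|,
\]
because the cross term $\E[(x-\gamma Hf'(x))S]$ vanishes by independence. Stationarity gives $\E[(x^+)^2]=M_2$. Moving the contraction to the left-hand side leaves $(2\gamma H\mu-O(\gamma^2))M_2$ bounded by the remaining terms. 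The noise term contributes $\gamma^2H\sigma^2/N$, which after dividing by $\gamma$ yields the $\gamma/N$ part of $\smallscale$.

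The terms involving $D$ are where the work lies. By Lipschitzness and \cref{lem:displacement2}, $\E[D^2]\le H L^2\sum_h N^{-1}\sum_c\E[(\theta_{c,h}-x)^2]\le C_H\gamma^2$. Hence $\gamma^2\E[D^2]=O(\gamma^4)$ and $\gamma^2|\E[DS]|\le\gamma^2 C_H\gamma\sqrt{\sigma^2/N}$. The latter is at most $\gamma^3(\gamma^{-1}/N+\gamma)$ by AM--GM, which becomes $O(\gamma\,\smallscale\cdot\gamma)$ after division. The dangerous term is the cross term $\gamma\E[xD]$. Cauchy--Schwarz alone gives only $\gamma\sqrt{M_2}\,C_H\gamma$. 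After dividing by $\gamma$, this yields $M_2\le C(\gamma/N)+C\gamma\sqrt{M_2}$, and absorbing by $\gamma\sqrt{M_2}\le \eta M_2+\gamma^2/(4\eta)$ gives exactly $M_2\le C_H(\gamma/N+\gamma^2)$. So crude Cauchy--Schwarz is already sufficient at this coarse level, and no mean-zero structure of $D$ is needed.

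I therefore expect no serious obstacle. The only point requiring care is bookkeeping: every constant must depend on $H$ and the problem parameters but not on $N$. This holds because \cref{lem:displacement2} and \cref{lem:control-sixth} are client-averaged and uniform in $N$. One must also choose $\gamma_{0,H}$ small enough that $\gamma HL\le1$ and that the $O(\gamma^2)M_2$ terms and the absorption parameter $\eta=\mu H/2$ leave a positive coefficient on $M_2$. A possible subtlety is that \cref{lem:displacement2} itself used the imported coarse bound $M_2=O_H(\gamma)$. This is not circular, since that imported input (\cref{prop:anchor-inputs}, (L2)) is independent of the present sharpening.
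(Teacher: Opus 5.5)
Your proof is correct and is essentially the paper's argument: you use the same split $x^+=\Phi_\gamma(x)+\zeta+\mathcal{T}$, the same $C_H\gamma^4$ bound on the trajectory term via \cref{lem:displacement2}, and the same absorption of the $\Phi_\gamma\mathcal{T}$ cross term, where your Cauchy--Schwarz-then-AM--GM step is the paper's Young's inequality. One harmless bookkeeping slip: the mixed term $\gamma^2/N+\gamma^4$ is $O(\gamma\,\smallscale)$ before dividing by $\gamma$, hence $O(\smallscale)$ after, not $O(\gamma^2\smallscale)$; the conclusion is unaffected.
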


\begin{proof}
From \cref{cor:global-exact}, define the exact decomposition
\begin{equation}
\label{eq:G5a-decomp}
  x^+=\Phi_\gamma(x)+\zeta+\mathcal{T},
\end{equation}
where
\[
  \Phi_\gamma(x):=x-\gamma H f'(x),\qquad
  \zeta:=-\gamma\sum_{h=0}^{H-1}\bar\eps_{h+1},
\]
and
\[
  \mathcal{T}:=-\gamma\sum_{h=0}^{H-1}
  \left(\overline{f'(\theta_h)}-f'(x)\right).
\]
We call $\mathcal{T}$ the \emph{local-trajectory deviation term}; it is not purely a nonlinear remainder and need not vanish for a quadratic objective.

Client and local-step independence give
\begin{equation}
\label{eq:zeta-var}
  \E[\zeta^2]=\frac{\gamma^2H\sigma^2}{N},
  \qquad
  \E[\Phi_\gamma(x)\zeta]=0.
\end{equation}
By smoothness, Jensen, and \cref{lem:displacement2},
\begin{equation}
\label{eq:T2}
  \E[\mathcal{T}^2]\le C_H\gamma^4.
\end{equation}
Strong convexity and $\gamma HL\le1$ imply
\[
  \Phi_\gamma(x)^2\le (1-\mu H\gamma)x^2.
\]
Stationarity in Equation~\eqref{eq:G5a-decomp}, Young's inequality for the $\Phi_\gamma\mathcal{T}$ term, and Cauchy--Schwarz for $\zeta\mathcal{T}$ give
\[
  \frac{\mu H\gamma}{2}M_2
  \le C_H\left(\frac{\gamma^2}{N}+\gamma^3+\gamma^4\right).
\]
Here
\[
  \frac{\gamma^3}{\sqrt N}
  \le \frac12\left(\frac{\gamma^2}{N}+\gamma^4\right)
\]
controls the mixed noise/trajectory term without any relative-rate condition. Divide by $\gamma$ and take $\gamma\le1$.
\end{proof}

\begin{remark}[Coarse versus sharp noise interpretation]
The exact variance in Equation~\eqref{eq:zeta-var} is useful for the order bound, but $\zeta$ alone is not the sharp effective round noise for the stationary covariance: local noises also feed back through $\mathcal{T}$. The sharp covariance calculation therefore requires a separate linear/nonlinear decomposition in \cref{sec:sharpM2}.
\end{remark}

\section{Local second-moment expansion}
\label{sec:G3}

This section refines an earlier coarse local second-moment bound to the coefficient level. The direct fresh-noise term and the persistent control second moment enter through an exact square.

\begin{lemma}[Local second-moment expansion]
\label{lem:G3}
For $h\in\{0,\ldots,H\}$ define
\[
  \Ubar_h:=\frac1N\sum_{c=1}^N\left(\E[\theta_{c,h}^2]-M_2\right).
\]
Then
\begin{equation}
\label{eq:G3-QH}
  \Ubar_h
  =\gamma^2\left(h^2\QH+h\sigma^2\right)+O_H(\remscale),
\end{equation}
and, using \cref{lem:QH-uniform},
\begin{equation}
\label{eq:G3-final}
  \Ubar_h
  =\gamma^2\sigma^2\left(h+\frac{h^2}{H}\right)
   +O_H\left(\frac{\gamma^2}{N}+\gamma^3\right).
\end{equation}
The remainder is uniform in $N$ and $\gamma$ for fixed $H$.
\end{lemma}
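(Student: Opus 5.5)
We will expand the local displacement $\delt_{c,h}:=\theta_{c,h}-x$ to first order and write
\[
\E[\theta_{c,h}^2]-M_2=2\E[x\,\delt_{c,h}]+\E[\delt_{c,h}^2].
\]
From Equation~\eqref{eq:E1} and $f'(x+\delt)=f'(x)+O(|\delt|)$, a finite induction in $h$ gives the decomposition
\[
\delt_{c,h}=-\gamma h\,\xi_c-\gamma\sum_{j=1}^{h}\eps_{c,j}-\gamma h f'(x)+\rho_{c,h}.
\]
Here $\rho_{c,h}$ collects the terms $-\gamma\sum_{j<h}(f'(x+\delt_{c,j})-f'(x))$. By $L$-Lipschitzness and \cref{lem:displacement2} we get $N^{-1}\sum_c\E[\rho_{c,h}^2]\le C_H\gamma^4$. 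The term $\gamma h f'(x)$ satisfies $\E[(\gamma f'(x))^2]\le C_H\gamma^2\smallscale$ by \cref{lem:G5a}.

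For the quadratic part $\E[\delt_{c,h}^2]$ we square the leading piece $-\gamma(h\xi_c+\sum_{j\le h}\eps_{c,j})$. The fresh noises are independent of the round-start state $(x,\Xi)$ and of each other, so the cross term vanishes and the client average equals $\gamma^2(h^2\QH+h\sigma^2)$ exactly. All other products are bounded by Cauchy--Schwarz using the uniform control moments from \cref{lem:control-sixth}:
\begin{itemize}
\item leading $\times$ $\rho$: $O(\gamma\cdot\gamma^2)=O(\gamma^3)$;
\item leading $\times$ $\gamma f'(x)$: $O(\gamma\cdot\gamma\smallscale^{1/2})$, to be handled below;
\item squares of $\rho$ and of $\gamma f'(x)$: $O(\gamma^4+\gamma^2\smallscale)$, which lies inside $O_H(\remscale)$.
\end{itemize}
The dangerous product is $\gamma^2 h^2\E[\xi_c f'(x)]$. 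A naive bound gives $\gamma^2\smallscale^{1/2}\sim\gamma^{5/2}/\sqrt N+\gamma^3$, which is not $O(\remscale)$ when $N$ is large relative to $1/\gamma$. The same issue arises for the cross term in $2\E[x\delt_{c,h}]$. Both, however, are client averages of $\xi_c$ multiplied by a function of $x$ alone, and these vanish exactly by the zero-sum constraint (\cref{ass:zerosum}). Explicitly, $N^{-1}\sum_c\E[\xi_c g(x)]=\E[g(x)\,N^{-1}\sum_c\xi_c]=0$. The noise–$x$ cross terms vanish by independence.

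For the linear part $2\E[x\delt_{c,h}]$ we therefore get
\[
2\E[x\delt_{c,h}]=-2\gamma h\E[xf'(x)]+2\E[x\rho_{c,h}]\qquad\text{(after client averaging).}
\]
The first term is $O(\gamma M_2)=O(\gamma\smallscale)=O(\remscale)$ by \cref{lem:G5a}. The main obstacle is $\E[x\rho_{c,h}]$, because Cauchy--Schwarz only gives $\smallscale^{1/2}\gamma^2$, too weak by the same $\sqrt N$ issue. I will expand $\rho$ one step further:
\[
f'(x+\delt)-f'(x)=f''(x)\,\delt+O(\delt^2).
\]
The $O(\delt^2)$ contribution is bounded by $\gamma\,\E[|x|\delt^2]\le\gamma M_2^{1/2}(\E\delt^4)^{1/2}$. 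This is $O(\gamma\cdot\smallscale^{1/2}\gamma^2)$, and since $\gamma^3\smallscale^{1/2}\le\gamma^{7/2}/\sqrt N+\gamma^4\le\remscale$, it is acceptable; the fourth displacement moment follows from the same Gr\"onwall bound with the fourth control moments. In the linear piece $\gamma\E[x f''(x)\delt_{c,j}]$ we insert the leading decomposition of $\delt_{c,j}$ again. The $\xi_c$ and $\eps$ parts vanish after client averaging (zero sum, independence). The remaining parts give $\gamma^2\E[x f''(x) f'(x)]=O(\gamma^2M_2)$ and $\gamma\E[|x|\,|\rho|]=O(\gamma\cdot\smallscale^{1/2}\gamma^2)$, both $O(\remscale)$.

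Collecting terms yields Equation~\eqref{eq:G3-QH}. Substituting \cref{lem:QH-uniform} then gives
\[
\gamma^2h^2\QH=\gamma^2\sigma^2\frac{h^2}{H}-\gamma^2\sigma^2\frac{h^2}{HN}+O_H(\gamma^3),
\]
and the middle term is $O_H(\gamma^2/N)$. This is Equation~\eqref{eq:G3-final}, with all constants depending only on $H$ and the problem parameters.
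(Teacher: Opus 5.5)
Your proof is correct, and its skeleton matches the paper's. You expand around the round-start $x$ and extract the exact identity $\frac1N\sum_c\E[\gamma^2(h\xi_c+\sum_{j\le h}\eps_{c,j})^2]=\gamma^2(h^2\QH+h\sigma^2)$. You kill the $x$--control and $x$--noise cross terms by $\sum_c\xi_c=0$ and independence, and push everything else into $O_H(\remscale)$.

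Where you diverge is in the drift terms, and that detour rests on a false claim. You say $\gamma^2\smallscale^{1/2}\le\gamma^{5/2}/\sqrt N+\gamma^3$ is not $O(\remscale)$ when $N\gg1/\gamma$. But $\gamma^{5/2}/\sqrt N$ is the geometric mean of $\gamma^2/N$ and $\gamma^3$, so
\[
\gamma^2\smallscale^{1/2}\le\frac{\gamma^{5/2}}{\sqrt N}+\gamma^3\le\frac32\left(\frac{\gamma^2}{N}+\gamma^3\right)
\]
uniformly in $N$ and $\gamma$; for instance, $N=\gamma^{-2}$ gives $\gamma^{7/2}\le\gamma^3$. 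The paper relies on exactly this inequality. It keeps the whole drift $R_{c,h}=-\gamma\sum_{\ell<h}f'(\theta_{c,\ell})$ as one piece and bounds $\frac1N\sum_c\E[R_{c,h}^2]\le C_H\gamma^2\smallscale$ by \cref{lem:G5a,lem:displacement2}. Plain Cauchy--Schwarz then gives $O_H(\gamma\smallscale)$ for the $xR_{c,h}$ term and $O_H(\gamma^2\smallscale^{1/2})$ for the $D_{c,h}R_{c,h}$ term, both inside $O_H(\remscale)$ by the display above.

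Your refinements are all valid, so nothing breaks. These are splitting off $-\gamma hf'(x)$, cancelling $\gamma^2h^2\frac1N\sum_c\E[\xi_cf'(x)]$ and $\frac1N\sum_c\E[xf''(x)\xi_c]$ exactly via the zero-sum constraint, and Taylor-expanding $\rho_{c,h}$ to second order inside $\E[x\rho_{c,h}]$. They do not improve the final remainder, however. They also cost an input the paper does not need at this stage, namely $\frac1N\sum_c\E[\delt_{c,h}^4]\le C_H\gamma^4$, which the paper records only later as \cref{lem:disp4}. As you indicate, this follows without circularity from the Gr\"onwall bound, \cref{lem:control-sixth}, and the imported coarse global fourth moment.

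One small bookkeeping omission: your list of products in $\E[\delt_{c,h}^2]$ skips the cross term between $\rho_{c,h}$ and $\gamma hf'(x)$. Cauchy--Schwarz bounds it by $O_H(\gamma^3\smallscale^{1/2})$, so it is harmless.
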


\begin{proof}
Unroll Equation~\eqref{eq:E1} for $h$ steps and write the exact decomposition
\begin{equation}
\label{eq:G3-decomp}
  \theta_{c,h}=x+D_{c,h}+R_{c,h},
\end{equation}
where
\[
  D_{c,h}:=-\gamma\left(h\xi_c+\sum_{j=1}^h\eps_{c,j}\right),
  \qquad
  R_{c,h}:=-\gamma\sum_{\ell=0}^{h-1}f'(\theta_{c,\ell}).
\]
Expanding the square gives
\begin{equation}
\label{eq:G3-square}
\begin{aligned}
  \Ubar_h={}&
  2\frac1N\sum_c\E[xD_{c,h}]
  +\frac1N\sum_c\E[D_{c,h}^2]\\
  &+2\frac1N\sum_c\E[xR_{c,h}]
  +2\frac1N\sum_c\E[D_{c,h}R_{c,h}]
  +\frac1N\sum_c\E[R_{c,h}^2].
\end{aligned}
\end{equation}

The first term vanishes exactly. The control part vanishes after the client average because $\sum_c\xi_c=0$ pathwise, and the fresh-noise part vanishes because the round-start $x$ is independent of the mean-zero fresh noises.

For the square of $D_{c,h}$, set $S_{c,h}:=\sum_{j=1}^h\eps_{c,j}$. Conditional on the round-start state, $\E[S_{c,h}]=0$ and $\E[S_{c,h}^2]=h\sigma^2$. Hence
\[
  \E[\xi_cS_{c,h}]=0
\]
and therefore the client average satisfies the exact identity
\begin{equation}
\label{eq:D2-exact}
  \frac1N\sum_c\E[D_{c,h}^2]
  =\gamma^2\left(h^2\QH+h\sigma^2\right).
\end{equation}

It remains to bound the terms containing $R_{c,h}$. By $|f'(y)|\le L|y|$,
\[
  R_{c,h}^2\le \gamma^2hL^2\sum_{\ell=0}^{h-1}\theta_{c,\ell}^2.
\]
Using \cref{lem:G5a,lem:displacement2},
\[
  \frac1N\sum_c\E[\theta_{c,\ell}^2]\le C_H\smallscale,
\]
so
\begin{equation}
\label{eq:R2-G3}
  \frac1N\sum_c\E[R_{c,h}^2]
  \le C_H\gamma^2\smallscale
  =C_H\left(\frac{\gamma^3}{N}+\gamma^4\right).
\end{equation}
Cauchy--Schwarz then gives
\[
  \left|\frac1N\sum_c\E[xR_{c,h}]\right|
  \le C_H\gamma\smallscale=C_H\remscale.
\]
Likewise, Equation~\eqref{eq:D2-exact} and the coarse bound $\QH=O_H(1)$ give
\[
  \left|\frac1N\sum_c\E[D_{c,h}R_{c,h}]\right|
  \le C_H\gamma^2\sqrt{\smallscale}.
\]
Since
\[
  \gamma^2\sqrt{\smallscale}
  \le \frac{\gamma^{5/2}}{\sqrt N}+\gamma^3
  \le C\left(\frac{\gamma^2}{N}+\gamma^3\right),
\]
all remaining terms in Equation~\eqref{eq:G3-square} are $O_H(\remscale)$. This proves Equation~\eqref{eq:G3-QH}.

Finally, substitute Equation~\eqref{eq:QH-uniform}:
\[
  \gamma^2h^2\QH
  =\gamma^2\frac{h^2\sigma^2}{H}
   +O_H\left(\frac{\gamma^2}{N}+\gamma^3\right),
\]
which gives Equation~\eqref{eq:G3-final}.
\end{proof}

\begin{remark}[Canonical source split at the local-second-moment level]
The two leading terms in Equation~\eqref{eq:D2-exact} have distinct origins:
\[
  \underbrace{\gamma^2h\sigma^2}_{\text{direct fresh local noise}},
  \qquad
  \underbrace{\gamma^2h^2\QH}_{\text{round-start control fluctuation}}.
\]
The second term persists at order $N^0\gamma^2$ because \cref{eq:QH-uniform} has the $N^0$ limit $\sigma^2/H$. This is a source-level decomposition only; the bias coefficient is assembled later after all competing same-order channels have been excluded.
\end{remark}

\begin{remark}[Relation to the earlier coarse bound]
Lemma~E.3 of \citet{mangold2025scaffold} controls a local second-moment correction by a coarse $O(\gamma^2H)$ bound. Equation~\eqref{eq:G3-final} is the restricted coefficient-level refinement used in the present theorem; it does not attribute this coefficient to the earlier result.
\end{remark}

\section{Mean bootstrap and global higher moments}
\label{sec:mean-global}

\subsection{Coarse mean bootstrap without signed-moment circularity}

Define
\[
  q(y):=f'(y)-ay,
  \qquad a=f''(0),
\]
and
\[
  m_h:=\frac1N\sum_{c=1}^N\E[\theta_{c,h}],
  \qquad
  \nu_h:=\frac1N\sum_{c=1}^N\E[q(\theta_{c,h})],
  \qquad
  \bstat:=\E[x].
\]

\begin{lemma}[Mean bootstrap]
\label{lem:G8}
There exists $C_H<\infty$ such that
\begin{align}
  |\nu_h|&\le C_H\smallscale,\qquad h=0,\ldots,H,             \label{eq:nu-coarse}\\
  |\bstat|&\le C_H\smallscale,                                \label{eq:b-coarse}\\
  \sum_{h=0}^{H-1}m_h&=H\bstat+O_H(\remscale).                \label{eq:G8-sum}
\end{align}
This lemma uses only the coarse second-moment result \cref{lem:G5a} and the finite-step displacement estimate; it does not use \cref{lem:G3} or any signed third moment.
\end{lemma}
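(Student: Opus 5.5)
The plan is to work with the Taylor remainder $q(y)=f'(y)-ay$ and the exact stationary balance of \cref{cor:stationary-mean-balance}. Since $f''$ is bounded, $q(0)=q'(0)=0$ and $|q''|\le K_3$, so $|q(y)|\le \tfrac{K_3}{2}y^2$. This pointwise bound gives
\[
|\nu_h|\le \tfrac{K_3}{2}\,\frac1N\sum_c\E[\theta_{c,h}^2].
\]
To bound the right-hand side, I will write $\theta_{c,h}=x+(\theta_{c,h}-x)$ and use $(u+v)^2\le 2u^2+2v^2$, together with \cref{lem:G5a} for $\E[x^2]\le C_H\smallscale$ and \cref{lem:displacement2} for the displacement term, which is at most $C_H\gamma^2\le C_H\smallscale$. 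This yields \eqref{eq:nu-coarse} for every $h\le H$. No signed moment enters, since only absolute second moments are used.

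For the bias bound \eqref{eq:b-coarse}, I will substitute $f'=a\,\mathrm{id}+q$ into the stationary balance, obtaining
\[
0=a\sum_{h=0}^{H-1}m_h+\sum_{h=0}^{H-1}\nu_h .
\]
Next I need $m_h$ in terms of $\bstat$. Averaging the unrolled local recursion \eqref{eq:E1} over clients gives the exact identity
\[
m_h=\bstat-\gamma\sum_{\ell=0}^{h-1}\frac1N\sum_c\E[f'(\theta_{c,\ell})].
\]
The control term vanishes pathwise because $\sum_c\xi_c=0$, and the noise term vanishes because the fresh noises have mean zero. The inner expectations are handled by writing $\E f'(\theta_{c,\ell})$ averaged over clients as $a m_\ell+\nu_\ell$. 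This turns the identity into a finite linear recursion
\[
m_h=\bstat-\gamma\sum_{\ell<h}(a m_\ell+\nu_\ell),
\]
which I will solve by finite induction in $h$ for fixed $H$. The result is
\[
m_h=\bstat+O_H(\gamma|\bstat|+\gamma\smallscale).
\]
Inserting this into the balance gives
\[
aH\bstat\,(1+O_H(\gamma))=O_H(\smallscale),
\]
and for $\gamma$ small this yields $|\bstat|\le C_H\smallscale$.

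Finally, I will return to the recursion with the bound on $\bstat$ now in hand. It gives $m_h-\bstat=O_H(\gamma\smallscale)=O_H(\remscale)$, and summing over $h<H$ yields \eqref{eq:G8-sum}.

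The step I expect to need care is avoiding circularity: the mean recursion involves $\bstat$ itself, so I must first absorb the $O_H(\gamma)\bstat$ term into the left-hand side (possible since $a>0$ and $\gamma\le\gamma_{0,H}$) before claiming the $\smallscale$ bound. I also need to check that all constants stay uniform in $N$. That holds because only client-averaged moments are used: the bounds from \cref{lem:G5a,lem:displacement2} are averaged moments, and Jensen's inequality is applied over the joint client/probability average.
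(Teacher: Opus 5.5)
Your proposal is correct and follows essentially the same route as the paper: the quadratic bound on $q$ (note that $|q''|=|f'''|\le K_3$ comes from the third-derivative bound, not from boundedness of $f''$), the client-averaged mean recursion with pathwise control cancellation, and stationarity in the form $m_H=\bstat$, which is equivalent to the mean balance you use. The only difference is that the paper solves $m_{h+1}=(1-a\gamma)m_h-\gamma\nu_h$ in closed form, writing $\bstat$ as $-a^{-1}$ times a nonnegatively weighted average of the $\nu_j$ (for $0\le 1-a\gamma\le 1$); this gives $|\bstat|\le a^{-1}\max_j|\nu_j|$ directly and avoids your perturbative induction and absorption step.
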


\begin{proof}
Since $q(0)=q'(0)=0$ and $q''=f'''$, the integral Taylor formula and $|f'''|\le K_3$ give the global bound
\begin{equation}
\label{eq:q-quadratic}
  |q(y)|\le \frac{K_3}{2}|y|^2.
\end{equation}
By \cref{lem:G5a,lem:displacement2},
\[
  \frac1N\sum_c\E[\theta_{c,h}^2]
  \le 2M_2+2\frac1N\sum_c\E[(\theta_{c,h}-x)^2]
  \le C_H\smallscale.
\]
Thus Equation~\eqref{eq:nu-coarse} follows directly from Equation~\eqref{eq:q-quadratic}.

Average Equation~\eqref{eq:E1} over clients and expectations. The zero-sum control condition and mean-zero fresh noise yield the exact recursion
\begin{equation}
\label{eq:mean-recursion}
  m_{h+1}=\lambda_\gamma m_h-\gamma\nu_h,
  \qquad
  \lambda_\gamma:=1-a\gamma,
  \qquad m_0=\bstat.
\end{equation}
Unrolling,
\begin{equation}
\label{eq:mean-unroll}
  m_h=\lambda_\gamma^h\bstat
  -\gamma\sum_{j=0}^{h-1}\lambda_\gamma^{h-1-j}\nu_j.
\end{equation}
At stationarity, $m_H=\E[x^+]=\E[x]=\bstat$. Hence
\begin{equation}
\label{eq:b-exact-weighted}
  \bstat=-\frac{1}{aA_H(\lambda_\gamma)}
  \sum_{j=0}^{H-1}\lambda_\gamma^{H-1-j}\nu_j,
  \qquad
  A_H(\lambda):=\sum_{k=0}^{H-1}\lambda^k.
\end{equation}
Choose the common step-size threshold so that $0\le\lambda_\gamma\le1$. Then the weights in Equation~\eqref{eq:b-exact-weighted} are nonnegative and sum to $A_H(\lambda_\gamma)$, giving
\[
  |\bstat|\le a^{-1}\max_j|\nu_j|\le C_H\smallscale.
\]
This proves Equation~\eqref{eq:b-coarse} without any signed-third-moment input.

Finally, subtract $\bstat$ from Equation~\eqref{eq:mean-unroll}. For fixed $h\le H$,
\[
  |\lambda_\gamma^h-1|\le ah\gamma,
\]
so Equations~\eqref{eq:nu-coarse} and~\eqref{eq:b-coarse} give
\[
  |m_h-\bstat|\le C_H\gamma\smallscale=C_H\remscale.
\]
Sum over fixed $H$ to obtain Equation~\eqref{eq:G8-sum}.
\end{proof}

\subsection{Exact linear/nonlinear global split}

For the remaining global-moment arguments, define
\[
  \rho_\gamma:=\lambda_\gamma^H,
\]
and the exact decomposition obtained by unrolling the local recursion written as
$f'(y)=ay+q(y)$:
\begin{equation}
\label{eq:global-linear-nonlinear}
  x^+=\rho_\gamma x+\eta+\Delta,
\end{equation}
where
\begin{align}
  \eta&:=-\gamma\sum_{j=1}^H\lambda_\gamma^{H-j}\bar\eps_j,             \label{eq:eta-def}\\
  \Delta&:=-\gamma\sum_{h=0}^{H-1}\lambda_\gamma^{H-1-h}\bar q_h,
  \qquad
  \bar q_h:=\frac1N\sum_{c=1}^N q(\theta_{c,h}).                         \label{eq:Delta-def}
\end{align}
The linear control term cancels exactly after the client average.

The fresh-noise term satisfies
\begin{align}
  \E[\eta]&=0,                                                           \label{eq:eta-mean}\\
  \E[\eta^2]&=\frac{\gamma^2\sigma^2}{N}
  \sum_{k=0}^{H-1}\lambda_\gamma^{2k},                                  \label{eq:eta2}\\
  |\E[\eta^3]|&\le C_H\frac{\gamma^3}{N^2},                            \label{eq:eta3}\\
  \E[\eta^4]&\le C_H\frac{\gamma^4}{N^2}.                             \label{eq:eta4}
\end{align}
The third-moment estimate holds without noise symmetry.

\begin{lemma}[Nonlinear-increment moments]
\label{lem:Delta-moments}
For the $\Delta$ in Equation~\eqref{eq:Delta-def},
\begin{align}
  \E[\Delta^2]&\le C_H\gamma^2\smallscale,                              \label{eq:Delta2}\\
  \E[\Delta^4]&\le C_H\gamma^7,                                         \label{eq:Delta4-sharp}\\
  \E|\Delta|^6&\le C_H\gamma^9.                                         \label{eq:Delta6}
\end{align}
A second, weaker but sometimes useful estimate is
\begin{equation}
\label{eq:Delta4-M4}
  \E[\Delta^4]\le C_H\gamma^4(M_4+\gamma^4),
  \qquad M_4:=\E[x^4].
\end{equation}
\end{lemma}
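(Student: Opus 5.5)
The bounds on $\Delta$ all follow from two ingredients: the pointwise quadratic bound $|q(y)|\le \tfrac{K_3}{2}y^2$ from Equation~\eqref{eq:q-quadratic}, and moment bounds on the local iterates $\theta_{c,h}$. First, since $0\le\lambda_\gamma\le 1$ and $H$ is fixed, Jensen's inequality applied over the $H$ local steps and the $N$ clients gives, for $p\in\{2,4,6\}$,
\[
  |\Delta|^p\le \gamma^p H^{p-1}\sum_{h=0}^{H-1}|\bar q_h|^p
  \le \gamma^p H^{p-1}\Bigl(\tfrac{K_3}{2}\Bigr)^p\sum_{h=0}^{H-1}\frac1N\sum_{c=1}^N|\theta_{c,h}|^{2p}.
\]
Every estimate therefore reduces to a client-averaged bound on $\E|\theta_{c,h}|^{2p}$, $h\le H-1$, and we only need the right moment input for each $p$.

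\textbf{Second moment.} For \eqref{eq:Delta2}, the Jensen step above is too lossy, since it would require fourth moments. Instead we use $|q(y)|\le L'|y|$ with $L':=L+a$. This gives $\bar q_h^2\le L'^2 N^{-1}\sum_c\theta_{c,h}^2$. As in the proof of \cref{lem:G8}, \cref{lem:G5a} and \cref{lem:displacement2} bound $N^{-1}\sum_c\E\theta_{c,h}^2\le C_H\smallscale$, which yields $\E\Delta^2\le C_H\gamma^2\smallscale$.

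\textbf{Fourth and sixth moments.} For \eqref{eq:Delta4-sharp} and \eqref{eq:Delta6}, we use the quadratic bound on $q$. This needs $N^{-1}\sum_c\E\theta_{c,h}^8\le C_H\gamma^3$ and $N^{-1}\sum_c\E\theta_{c,h}^{12}\le C_H\gamma^3$, respectively. The crude scale $\E\theta^{2k}=O(\gamma^k)$ would give much more than needed. The stated exponents ($\gamma^{4+3}$ and $\gamma^{6+3}$) indicate that only a $\gamma^3$-type input is expected, namely the imported coarse sixth moment $N^{-1}\sum_c\E|\theta_{c,h}|^6\le C_H\gamma^3$ (L2) combined with boundedness.

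The plan is to interpolate. Split $|\theta|^{8}=|\theta|^6\cdot|\theta|^2$ and control the extra factor through the Grönwall representation
\[
  |\theta_{c,h}|\le C_H\Bigl(|x|+\gamma|\xi_c|+\gamma\sum_{j\le h}|\eps_{c,j}|\Bigr),
\]
which follows from \cref{lem:displacement2}, together with bounded noise. The difficulty is that $x$ and $\xi_c$ are not bounded almost surely. The fallback is to first prove a uniform high-moment bound $\E x^{2k}\le C_{H,k}\gamma^{k}$ by a Lyapunov argument on $x^{2k}$, using the contraction $\Phi_\gamma(x)^2\le(1-\mu H\gamma)x^2$ and bounded increments. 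This has to be done for $k$ up to $6$, together with a matching control moment bound obtained from Equation~\eqref{eq:C6-1} exactly as in \cref{lem:control-sixth}. These bounds then give $\E|\theta|^{2p}\le C\gamma^{p}\le C\gamma^3$ for $p\ge3$ and $\gamma\le1$.

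\textbf{Weaker fourth-moment bound.} For \eqref{eq:Delta4-M4}, write $\theta_{c,h}=x+\delt_{c,h}$, so that $\theta^8\le 2^7(x^8+\delt^8)$. This route needs $\E x^8$, but the statement has $M_4$, so we instead bound $|q(y)|^4\le C\min(|y|^4,|y|^8)$ by using the Lipschitz bound on part of the power: $|q(y)|^4\le (L'|y|)^2(K_3y^2/2)^2\le C y^6$. A cleaner route is to use the mixed bound $|q(y)|\le C|y|$ on the $x$-part and the quadratic bound on the displacement part. Concretely, $q(x+\delt)=q(x)+q'(\tilde y)\delt$ with $|q'(y)|\le K_3|y|$. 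This gives $|q(\theta)|^4\le C(x^8+|\tilde y|^4\delt^4)$, which is still not $M_4$.

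We therefore expect \eqref{eq:Delta4-M4} to be derived from the linear bound $|q(y)|\le L'|y|$:
\[
  \E\Delta^4\le C_H\gamma^4\,\frac1N\sum_c\E\theta_{c,h}^4\le C_H\gamma^4\Bigl(M_4+\frac1N\sum_c\E\delt_{c,h}^4\Bigr),
\]
and the Grönwall bound gives $\E\delt^4\le C_H\gamma^4$ (using $\E x^4\le C$ and a control fourth moment from \cref{lem:control-sixth}).

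\textbf{Main obstacle.} The hardest step is obtaining the high-order local moment bound $N^{-1}\sum_c\E\theta_{c,h}^{8},\ N^{-1}\sum_c\E\theta_{c,h}^{12}=O_H(\gamma^3)$ uniformly in $N$. \cref{prop:anchor-inputs} only supplies sixth moments, so this step requires the Lyapunov bootstrap described above, with constants tracked uniformly in $N$.
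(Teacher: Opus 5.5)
Your arguments for \eqref{eq:Delta2} and for the weaker bound \eqref{eq:Delta4-M4} are sound. For \eqref{eq:Delta4-M4} you follow the paper's own argument: the linear bound on $q$, $\theta^4\le 8(x^4+\delt^4)$, and the Gr\"onwall displacement fourth moment via \cref{lem:control-sixth}. For \eqref{eq:Delta2} the paper instead uses the quadratic bound together with the imported local fourth moment $C_H\gamma^2$, which gives the stronger $\E[\Delta^2]\le C_H\gamma^4$. So the fourth moments you called ``too lossy'' are in fact available.

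The gap is in \eqref{eq:Delta4-sharp} and \eqref{eq:Delta6}. Using only $|q(y)|\le\frac{K_3}{2}y^2$, you reduce these to client-averaged eighth and twelfth local moments. Neither \cref{prop:anchor-inputs} nor any earlier lemma supplies these, and you do not carry out the Lyapunov bootstrap you propose for them. As sketched, that bootstrap also lacks its ingredients:
\begin{itemize}
\item In $x^+=\Phi_\gamma(x)+\zeta+\mathcal{T}$, the increment $\mathcal{T}$ is not bounded. It carries $|x|$ and the controls through the displacement bound, which you yourself note are not almost surely bounded.
\item \cref{lem:control-sixth} cannot be repeated ``exactly'' at order $8$ or $12$. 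Its proof relies on the imported local sixth moment, and there is no higher-order analogue.
\end{itemize}
You would therefore need a joint bootstrap of high iterate and control moments, uniform in $N$, which is a substantial unproved step.

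None of this is necessary. The missing idea is to interpolate at the level of $q$ rather than $\theta$. Combining the quadratic bound with the linear bound $|q(y)|\le(L+a)|y|$ gives $|q(y)|^4\le (K_3/2)^2(L+a)^2|y|^6$ and $|q(y)|^6\le(L+a)^6|y|^6$. Jensen over local steps and clients, together with the imported bound $N^{-1}\sum_c\E|\theta_{c,h}|^6\le C_H\gamma^3$, then yields $\E[\Delta^4]\le C_H\gamma^{4}\cdot\gamma^3$ and $\E|\Delta|^6\le C_H\gamma^6\cdot\gamma^3$ immediately. This is exactly what the exponents $7=4+3$ and $9=6+3$ encode. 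You even wrote $|q(y)|^4\le Cy^6$ in your paragraph on \eqref{eq:Delta4-M4}, but did not apply it where it removes your ``main obstacle.''
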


\begin{proof}
Besides the quadratic bound in Equation~\eqref{eq:q-quadratic}, smoothness gives the global linear bound
\[
  |q(y)|\le (L+a)|y|.
\]
The earlier coarse local fourth- and sixth-moment bounds give, uniformly for $h\le H$,
\[
  \frac1N\sum_c\E|\theta_{c,h}|^4\le C_H\gamma^2,
  \qquad
  \frac1N\sum_c\E|\theta_{c,h}|^6\le C_H\gamma^3.
\]
Using the quadratic bound, Jensen, and fixed $H$ yields $\E[\Delta^2]\le C_H\gamma^4$, which is stronger than Equation~\eqref{eq:Delta2} because $\smallscale\ge\gamma^2$.

For the fourth moment, combine the linear and quadratic bounds to obtain
\[
  |q(y)|^4\le C|y|^6.
\]
Jensen then yields
\[
  \E[\Delta^4]\le C_H\gamma^4
  \sum_{h=0}^{H-1}\frac1N\sum_c\E|\theta_{c,h}|^6
  \le C_H\gamma^7,
\]
which is the sharpened estimate needed for the fourth-moment absorption argument.

For completeness, we now prove the weaker estimate in Equation~\eqref{eq:Delta4-M4} without appealing to a later lemma. The same finite-step argument used in \cref{lem:displacement2} gives pathwise, for $\delt_{c,h}:=\theta_{c,h}-x$ and fixed $h\le H$,
\[
  |\delt_{c,h}|
  \le C_H\gamma\left(|x|+|\xi_c|+\sum_{j=1}^h|\eps_{c,j}|\right).
\]
Raise this inequality to the fourth power, average over clients and expectations, and use the earlier coarse global fourth-moment bound, the client-averaged fourth control moment from \cref{lem:control-sixth}, and bounded noise. This yields
\[
  \frac1N\sum_c\E|\delt_{c,h}|^4\le C_H\gamma^4.
\]
Consequently,
\[
  \frac1N\sum_c\E|\theta_{c,h}|^4
  \le 8M_4+8\frac1N\sum_c\E|\delt_{c,h}|^4
  \le C_H(M_4+\gamma^4).
\]
Using the global linear bound $|q(y)|\le (L+a)|y|$, Jensen over clients and the fixed sum over local steps now gives
\[
  \E[\Delta^4]
  \le C_H\gamma^4\sum_{h=0}^{H-1}\frac1N\sum_c\E|\theta_{c,h}|^4
  \le C_H\gamma^4(M_4+\gamma^4),
\]
which is exactly Equation~\eqref{eq:Delta4-M4}. Thus no forward reference to the later displacement-fourth-moment lemma is needed.

Finally, $|q(y)|^6\le C|y|^6$ by the linear bound, so the same argument gives Equation~\eqref{eq:Delta6}.
\end{proof}

\subsection{Global fourth moment}

\begin{lemma}[Global fourth moment]
\label{lem:M4}
There exists $C_H<\infty$ such that
\begin{equation}
\label{eq:M4}
  M_4:=\E[x^4]\le C_H\fourthscale
  =C_H\left(\frac{\gamma^2}{N^2}+\gamma^3\right).
\end{equation}
\end{lemma}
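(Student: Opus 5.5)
We will use the exact linear/nonlinear split \eqref{eq:global-linear-nonlinear}, $x^+=\rho_\gamma x+\eta+\Delta$, together with stationarity, $\E[(x^+)^4]=M_4$. Expanding the fourth power multinomially gives
\[
  M_4=\rho_\gamma^4M_4+4\rho_\gamma^3\E[x^3(\eta+\Delta)]+6\rho_\gamma^2\E[x^2(\eta+\Delta)^2]+4\rho_\gamma\E[x(\eta+\Delta)^3]+\E[(\eta+\Delta)^4].
\]
Since $1-\rho_\gamma^4\ge c_H\gamma$ for small $\gamma$ (because $\rho_\gamma=(1-a\gamma)^H$), the aim is to bound every cross term by $\epsilon\gamma M_4+C_H\gamma\fourthscale$ and absorb the $M_4$ part on the left. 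The fresh round noise $\eta$ is independent of $x$ and has mean zero. Hence $\E[x^3\eta]=0$ and $\E[x^2\eta^2]=\E[x^2]\E[\eta^2]\le C_HM_2\gamma^2/N$. By \cref{lem:G5a} this is at most $C_H(\gamma^3/N^2+\gamma^4/N)\le C_H\gamma\fourthscale$. Likewise, $\E[x\eta^3]=\E[x]\E[\eta^3]$ is controlled by \eqref{eq:eta3} and \eqref{eq:b-coarse}, and $\E[\eta^4]\le C_H\gamma^4/N^2$ by \eqref{eq:eta4}.

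The $\Delta$-terms are handled with Hölder and Young, using the sharpened moments of \cref{lem:Delta-moments}. Pure terms are immediate: $\E[\Delta^4]\le C_H\gamma^7$. Terms such as $\E[x^2\Delta^2]\le M_4^{1/2}\E[\Delta^4]^{1/2}\le M_4^{1/2}C\gamma^{7/2}$ follow from Young, giving $\epsilon\gamma M_4+C\gamma^6$. Similarly, $\E[x\Delta^3]\le M_4^{1/4}\E[\Delta^4]^{3/4}$. Mixed $\eta$--$\Delta$ terms are bounded by Hölder with $\E[\eta^4]$ and $\E[\Delta^4]$, e.g.\ $\E[x^2\eta\Delta]\le M_4^{1/2}\E[\eta^4]^{1/4}\E[\Delta^4]^{1/4}$. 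Each of these produces either an absorbable $\epsilon\gamma M_4$ piece or a power of $\gamma$ that, once divided by $\gamma$, lies within $\gamma^2/N^2+\gamma^3$. For instance, the Young split of $M_4^{1/2}\gamma^{2}N^{-1/2}\gamma^{7/4}$ yields $\gamma^{-1}\gamma^{15/2}/N$, which is amply small.

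The main obstacle is the leading cross term $4\rho_\gamma^3\E[x^3\Delta]$. A direct Hölder bound $M_4^{3/4}\E[\Delta^4]^{1/4}\le M_4^{3/4}\gamma^{7/4}$ together with Young gives $\epsilon\gamma M_4+C\gamma^{-3}\gamma^{7}=\epsilon\gamma M_4+C\gamma^4$. After dividing by $\gamma$, this becomes $\gamma^3$, which is exactly on scale, so the naive bound already suffices. We would nevertheless verify the exponent carefully: $\E[\Delta^4]^{1/4}=\gamma^{7/4}$, and Young with exponents $(4/3,4)$ applied to $(\gamma M_4)^{3/4}\cdot\gamma^{-3/4}\gamma^{7/4}$ gives $\epsilon\gamma M_4+C\gamma^{4}$. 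Should the imported $\E[\Delta^4]\le C_H\gamma^7$ prove too weak in a mixed term, the fallback is the self-referential estimate \eqref{eq:Delta4-M4}, $\E[\Delta^4]\le C_H\gamma^4(M_4+\gamma^4)$. With it the same Young argument gives $\E[x^3\Delta]\le C_H\gamma M_4+\dots$, but with a constant that is not small. In that case we would instead exploit $\Delta=O(\gamma)\cdot\bar q$ with $|q|\le K_3y^2/2$, bounding $\E[x^3\Delta]\le C\gamma\,\E[|x|^3\,\overline{\theta^2}]$ and splitting $\theta^2\le2x^2+2\delt^2$. This gives $C\gamma\E|x|^5+C\gamma\E[|x|^3\overline{\delt^2}]$, where $\E|x|^5\le M_4^{1/2}M_6^{1/2}$ is small relative to $M_4$ because of the coarse bound $M_6=O(\gamma^3)$. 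Collecting all terms gives $c_H\gamma M_4\le C_H\gamma(\gamma^2/N^2+\gamma^3)$, which is the claimed bound \eqref{eq:M4}.
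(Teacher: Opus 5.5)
Your proposal is correct and follows essentially the paper's route. You use stationarity of the split $x^+=\rho_\gamma x+\eta+\Delta$, independence for the pure-$\eta$ terms, Young absorption of the $\Delta$ cross terms via the sharpened bound $\E[\Delta^4]\le C_H\gamma^7$ (which, as you found, already handles $\E[x^3\Delta]$ at exactly the $\gamma^4$ scale), and the restoring factor $1-\rho_\gamma^4\ge a\gamma$. The differences are cosmetic: the paper groups $Y=\rho_\gamma x+\eta$ and expands in $\Delta$ alone, which avoids your separate mixed $\eta$--$\Delta$ terms. There is also one harmless slip: $\E[\eta^4]^{1/4}\lesssim\gamma N^{-1/2}$, not $\gamma^2N^{-1/2}$, and that term still lies well inside the target scale $\gamma\fourthscale$.
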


\begin{proof}
Let $Y:=\rho_\gamma x+\eta$. Stationarity and Equation~\eqref{eq:global-linear-nonlinear} give
\begin{equation}
\label{eq:M4-stationary}
\begin{aligned}
  M_4={}&\E[Y^4]
  +4\E[Y^3\Delta]
  +6\E[Y^2\Delta^2]
  +4\E[Y\Delta^3]
  +\E[\Delta^4].
\end{aligned}
\end{equation}
Because $\eta$ is independent of the round-start $x$ and has zero mean,
\begin{equation}
\label{eq:Y4-expand}
  \E[Y^4]
  =\rho_\gamma^4M_4
   +6\rho_\gamma^2M_2\E[\eta^2]
   +4\rho_\gamma\bstat\E[\eta^3]
   +\E[\eta^4].
\end{equation}
By \cref{lem:G5a}, Equations~\eqref{eq:eta2}--\eqref{eq:eta4}, and $|\bstat|\le\sqrt{M_2}$,
\[
  M_2\E[\eta^2]+|\bstat\E[\eta^3]|+\E[\eta^4]
  \le C_H\gamma\fourthscale.
\]

For the nonlinear terms, fix $\epsilon>0$. Young's inequality and the sharpened Equation~\eqref{eq:Delta4-sharp} give
\begin{align*}
  |\E[Y^3\Delta]|
  &\le \epsilon\gamma\E[Y^4]+C_{H,\epsilon}\gamma^{-3}\E[\Delta^4]
   \le C_H\epsilon\gamma M_4+C_{H,\epsilon}\gamma\fourthscale,\\
  |\E[Y^2\Delta^2]|
  &\le \epsilon\gamma\E[Y^4]+C_{H,\epsilon}\gamma^{-1}\E[\Delta^4]
   \le C_H\epsilon\gamma M_4+C_{H,\epsilon}\gamma\fourthscale,\\
  |\E[Y\Delta^3]|
  &\le \epsilon\gamma\E[Y^4]+C_{H,\epsilon}\gamma^{-1/3}\E[\Delta^4]
   \le C_H\epsilon\gamma M_4+C_{H,\epsilon}\gamma\fourthscale,
\end{align*}
and $\E[\Delta^4]\le C_H\gamma\fourthscale$ for $\gamma\le1$.

Finally,
\[
  1-\rho_\gamma^4
  =a\gamma\sum_{k=0}^{4H-1}(1-a\gamma)^k
  \ge a\gamma
\]
when $0\le1-a\gamma\le1$. Move the $\rho_\gamma^4M_4$ term to the left of Equation~\eqref{eq:M4-stationary}, choose a fixed $\epsilon$ small enough that the $\epsilon\gamma M_4$ terms are absorbed, and divide by $\gamma$ to obtain Equation~\eqref{eq:M4}.
\end{proof}

\begin{proofnote}[Why Equation~\eqref{eq:Delta4-sharp} is needed]
The weaker estimate in Equation~\eqref{eq:Delta4-M4} is true but does not by itself justify the $Y^3\Delta$ absorption: after the factor $\gamma^{-3}$ from Young's inequality, it produces a coefficient of order $C_\epsilon\gamma M_4$, whose constant worsens as $\epsilon\downarrow0$. The independent bound $\E[\Delta^4]\le C_H\gamma^7$ removes this defect. The sharper bound is needed to complete the absorption argument.
\end{proofnote}

\subsection{Global signed third moment}

\begin{lemma}[Global signed third moment]
\label{lem:M3}
There exists $C_H<\infty$ such that
\begin{equation}
\label{eq:M3-target}
  |M_3|:=|\E[x^3]|\le C_H\remscale.
\end{equation}
In fact, the proof below yields the stronger internal estimate
\begin{equation}
\label{eq:M3-strong}
  |M_3|\le C_H\fourthscale.
\end{equation}
Only Equation~\eqref{eq:M3-target} is needed for the main theorem.
\end{lemma}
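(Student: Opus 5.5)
We prove the bound by the same stationary cube-expansion strategy used for the global fourth moment in \cref{lem:M4}, now applied to the signed third moment. Write $Y:=\rho_\gamma x+\eta$ as before. Stationarity and the exact split in Equation~\eqref{eq:global-linear-nonlinear} give
\[
  M_3=\E[Y^3]+3\E[Y^2\Delta]+3\E[Y\Delta^2]+\E[\Delta^3].
\]
Because $\eta$ is independent of the round-start $x$ and has mean zero, the first term expands exactly as
\[
  \E[Y^3]=\rho_\gamma^3M_3+3\rho_\gamma\bstat\,\E[\eta^2]+\E[\eta^3].
\]
The three terms on the right are handled separately.
\begin{itemize}
  \item \textbf{Mixed noise term.} By \cref{lem:G8}, $|\bstat|\le C_H\smallscale$. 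Equation~\eqref{eq:eta2} gives $\E[\eta^2]\le C_H\gamma^2/N$. Hence
  \[
    |\bstat\,\E[\eta^2]|\le C_H\left(\frac{\gamma^3}{N^2}+\frac{\gamma^4}{N}\right)\le C_H\gamma\fourthscale,
  \]
  where the last step uses $\gamma^4/N\le\gamma^3$.
  \item \textbf{Pure noise term.} Equation~\eqref{eq:eta3} gives $|\E[\eta^3]|\le C_H\gamma^3/N^2\le C_H\gamma\fourthscale$. No noise symmetry is needed here.
  \item \textbf{Contraction.} Since $1-\rho_\gamma^3\ge a\gamma$ when $0\le\lambda_\gamma\le1$, moving $\rho_\gamma^3M_3$ to the left leaves $a\gamma|M_3|$ bounded by the remaining terms.
\end{itemize}

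The pure nonlinear terms are easy. By Cauchy--Schwarz with \cref{lem:Delta-moments},
\[
  |\E[\Delta^3]|\le\sqrt{\E[\Delta^2]\,\E[\Delta^4]}\le C_H\sqrt{\gamma^4\cdot\gamma^7}=C_H\gamma^{11/2}.
\]
For the next term, $\E[Y^2]\le C_H\smallscale$ by \cref{lem:G5a} and Equation~\eqref{eq:eta2}. Cauchy--Schwarz then gives
\[
  |\E[Y\Delta^2]|\le\sqrt{\E[Y^2]\,\E[\Delta^4]}\le C_H\sqrt{\smallscale\,\gamma^7}.
\]
This is $O(\gamma^{4}/\sqrt N+\gamma^{9/2})$, which is within $C_H\gamma\remscale$ by AM--GM.

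The main obstacle is the cross term $\E[Y^2\Delta]$, since $\Delta$ is itself of order $\gamma^2$. A naive Cauchy--Schwarz bound is $\sqrt{\E[Y^4]\,\E[\Delta^2]}\le C_H\sqrt{\fourthscale}\,\gamma^2$. After dividing by $\gamma$, this gives only $\gamma^{5/2}/N+\gamma^{5/2}$, which falls short of the target $\remscale$ by a factor $\gamma^{1/2}$ in the client-independent part.

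I would therefore use the quadratic structure of $q$, namely $q(y)=\tfrac{\tau}{2}y^2+O(|y|^3)$. Then $\Delta=-\gamma\tfrac{\tau}{2}\sum_h\lambda_\gamma^{H-1-h}\overline{\theta_h^2}+O(\gamma|\theta|^3)$. The cubic part contributes $\gamma\,\E[Y^2|\theta|^3]$. Using the coarse fifth-moment-type bounds $\E|x|^5$ and $\E|\theta|^5\lesssim\gamma^{5/2}$ (from the sixth-moment bounds and Hölder), this is $O(\gamma^{7/2})$. I expect that to be enough for the weak target once combined with the $N$-dependent factors. For the sharper $\fourthscale$ version, I would instead use Hölder with $M_4$ for the $Y^2$ factor.

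For the quadratic part, write $\theta_{c,h}=x+\delt_{c,h}$. Then
\[
  \E[Y^2\overline{\theta_h^2}]=\E[Y^2x^2]+2\E[Y^2x\bar\delt_h]+\E[Y^2\overline{\delt_h^2}].
\]
Each piece is then bounded as follows.
\begin{itemize}
  \item $\E[Y^2x^2]\le C_H\E[x^4]+C_H\E[\eta^2]\,M_2\le C_H\fourthscale$, using \cref{lem:M4}.
  \item $\E[Y^2\overline{\delt_h^2}]\le C_H\sqrt{\fourthscale}\,\gamma^2$. This uses Hölder with the displacement fourth moment $\E|\delt|^4\le C_H\gamma^4$ from the proof of \cref{lem:Delta-moments}.
  \item The middle term is bounded by Cauchy--Schwarz in the same way.
\end{itemize}
Multiplying by $\gamma$, all these pieces are at most $C_H(\gamma\fourthscale+\gamma^3\sqrt{\fourthscale})$. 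Since $\sqrt{\fourthscale}\le\gamma/N+\gamma^{3/2}$, we get $\gamma^3\sqrt{\fourthscale}\le\gamma^4/N+\gamma^{9/2}$.

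Collecting everything and dividing by $a\gamma$ yields $|M_3|\le C_H(\fourthscale+\gamma^3/N+\gamma^{7/2})$. This is at most $C_H\remscale$, which proves Equation~\eqref{eq:M3-target}. Reaching the stronger Equation~\eqref{eq:M3-strong} would require tracking the displacement contributions more carefully, using conditional mean-zero noise inside $\delt$. Since only Equation~\eqref{eq:M3-target} is used later, I would first secure that bound.
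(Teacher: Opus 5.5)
There is a genuine gap in the cubic remainder you introduce when you expand $q(y)=\frac{\tau}{2}y^2+O(|y|^3)$. The rest of your argument is essentially the paper's: the same stationary cube identity and the same handling of the two noise terms, $\E[Y\Delta^2]$ and $\E[\Delta^3]$. For $\E[Y^2\Delta]$ you use the same $x+\delt$ split, with $\E[Y^4]\le C_H\fourthscale$ and the displacement fourth moment.

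The problem is the bound on the cubic part. You estimate its contribution as $\gamma\,\E[Y^2|\theta|^3]=O(\gamma^{7/2})$ using only coarse fifth moments. After dividing by $1-\rho_\gamma^3\ge a\gamma$, this gives $O(\gamma^{5/2})$ in $M_3$. But $\gamma^{5/2}$ is not bounded by $C_H(\gamma^2/N+\gamma^3)$ uniformly in $N$: let $N\to\infty$. No $N$-dependent factor rescues it. Your final tally $C_H(\fourthscale+\gamma^3/N+\gamma^{7/2})$ silently drops this term, so as written the argument does not reach \eqref{eq:M3-target}.

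There are two easy repairs.
\begin{itemize}
\item Do not split $q$ at all. The global bound $|q(y)|\le\frac{K_3}{2}y^2$ from \eqref{eq:q-quadratic} gives directly $|\E[Y^2\Delta]|\le C_H\gamma\sum_h N^{-1}\sum_c\E[Y^2\theta_{c,h}^2]$. Your quadratic-part analysis then covers everything. This is exactly the paper's route.
\item Keep the split, but bound the cubic part with the sharp fourth moment of \cref{lem:M4} instead of coarse fifth moments. This gives $N^{-1}\sum_c\E[Y^2|\theta_{c,h}|^3]\le\sqrt{\E[Y^4]}\,(N^{-1}\sum_c\E|\theta_{c,h}|^6)^{1/2}\le C_H\sqrt{\fourthscale}\,\gamma^{3/2}$. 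After multiplying by $\gamma$ and dividing by $a\gamma$, this is at most $C(\gamma^{5/2}/N+\gamma^3)$. By AM--GM, $\gamma^{5/2}/N\le\frac12(\gamma^2/N^2+\gamma^3)$, so this is $O(\fourthscale)$.
\end{itemize}

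Your remark that the stronger bound \eqref{eq:M3-strong} needs an extra conditional-mean-zero refinement is also mistaken. The terms you obtained from the quadratic part satisfy $\gamma^3/N+\gamma^{7/2}\le 2\gamma^3\le 2\fourthscale$. So once the cubic remainder is handled correctly, your own estimates already give $|M_3|\le C_H\fourthscale$, just as in the paper.
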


\begin{proof}
Again set $Y:=\rho_\gamma x+\eta$. Stationarity gives the exact identity
\begin{equation}
\label{eq:M3-stationary}
\begin{aligned}
  (1-\rho_\gamma^3)M_3={}&
  3\rho_\gamma\bstat\E[\eta^2]+\E[\eta^3]\\
  &+3\E[Y^2\Delta]
  +3\E[Y\Delta^2]
  +\E[\Delta^3].
\end{aligned}
\end{equation}
The first two terms are $O_H(\gamma\fourthscale)$ by \cref{lem:G8} and Equations~\eqref{eq:eta2} and~\eqref{eq:eta3}.

The delicate term is $\E[Y^2\Delta]$. A generic Cauchy--Schwarz bound is too coarse, so use the structure in Equations~\eqref{eq:Delta-def} and~\eqref{eq:q-quadratic}:
\[
  |\E[Y^2\Delta]|
  \le C_H\gamma\sum_{h=0}^{H-1}
  \frac1N\sum_c\E[Y^2\theta_{c,h}^2].
\]
Let $\delt_{c,h}:=\theta_{c,h}-x$. A finite-step fourth-moment argument using \cref{lem:control-sixth,lem:M4} gives
\begin{equation}
\label{eq:disp4}
  \frac1N\sum_c\E[\delt_{c,h}^4]\le C_H\gamma^4.
\end{equation}
Since $Y^2\le C_H(x^2+\eta^2)$ and $\theta_{c,h}^2\le2x^2+2\delt_{c,h}^2$, the joint probability--client average is bounded by a constant times
\[
  M_4
  +\frac1N\sum_c\E[x^2\delt_{c,h}^2]
  +\E[\eta^2x^2]
  +\frac1N\sum_c\E[\eta^2\delt_{c,h}^2].
\]
The first three terms are
$O_H(\fourthscale)$,
$O_H(\sqrt{\fourthscale\,\gamma^4})=O_H(\fourthscale)$, and
$O_H(\E[\eta^2]M_2)=O_H(\fourthscale)$.
The fourth is
\[
  O_H\!\left(\sqrt{\E[\eta^4]\,\frac1N\sum_c\E[\delt_{c,h}^4]}\right)
  =O_H\left(\frac{\gamma^4}{N}\right)
  =O_H(\fourthscale).
\]
Thus
\begin{equation}
\label{eq:Y2Delta}
  |\E[Y^2\Delta]|\le C_H\gamma\fourthscale.
\end{equation}

Next,
\[
  |\E[Y\Delta^2]|
  \le \sqrt{\E[Y^2]\E[\Delta^4]}
  \le C_H\gamma^4
  \le C_H\gamma\fourthscale,
\]
where \cref{lem:G5a,lem:Delta-moments} were used. Finally,
\[
  |\E[\Delta^3]|\le \sqrt{\E|\Delta|^6}\le C_H\gamma^{9/2}
  \le C_H\gamma\fourthscale.
\]
The restoring factor satisfies
\[
  1-\rho_\gamma^3
  =a\gamma\sum_{k=0}^{3H-1}(1-a\gamma)^k
  \ge a\gamma.
\]
Divide Equation~\eqref{eq:M3-stationary} by this factor to obtain Equation~\eqref{eq:M3-strong}, and hence Equation~\eqref{eq:M3-target}. Notice that the skewness contribution $\E[\eta^3]$ is retained explicitly; symmetry is unnecessary.
\end{proof}

\section{Sharp global second moment}
\label{sec:sharpM2}

This is the key identifiability step. It rules out an unresolved client-number-independent $N^0\gamma^2$ contribution from the global raw second moment itself.

\begin{lemma}[Averaged local displacement]
\label{lem:avg-delta}
Let
\[
  \bar\delt_h:=\frac1N\sum_{c=1}^N(\theta_{c,h}-x).
\]
Then, for fixed $h\le H$,
\begin{equation}
\label{eq:avg-delta2}
  \E[\bar\delt_h^2]
  \le C_H\gamma^2\left(\smallscale+\frac1N\right).
\end{equation}
\end{lemma}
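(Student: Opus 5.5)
We bound $\E[\bar\delt_h^2]$ by averaging the unrolled local recursion over clients. The key fact is that the zero-sum controls cancel exactly in that average. Unrolling Equation~\eqref{eq:E1} for $h$ steps and averaging over $c$, the term $-\gamma h\,N^{-1}\sum_c\xi_c$ vanishes pathwise under \cref{ass:zerosum}. This leaves the exact identity
\[
  \bar\delt_h=-\gamma\sum_{\ell=0}^{h-1}\overline{f'(\theta_\ell)}
  -\gamma\sum_{j=1}^{h}\bar\eps_j .
\]
The point is that, unlike the individual displacements $\delt_{c,h}$ (whose second moment is of order $\gamma^2$ because of $\gamma h\xi_c$), the averaged displacement contains no control term. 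It is therefore smaller than the bound of \cref{lem:displacement2}.

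Next we bound the two pieces separately, using $(u+v)^2\le 2u^2+2v^2$. The noise piece is exact. By independence across clients and local steps,
\[
\E\Bigl[\bigl(\gamma\textstyle\sum_{j\le h}\bar\eps_j\bigr)^2\Bigr]=\gamma^2h\sigma^2/N,
\]
which gives the $\gamma^2/N$ contribution. For the gradient piece we apply Jensen over the $h$ local steps and over clients, then $|f'(y)|\le L|y|$, to get
\[
\E\Bigl[\bigl(\gamma\textstyle\sum_{\ell<h}\overline{f'(\theta_\ell)}\bigr)^2\Bigr]\le \gamma^2hL^2\sum_{\ell<h}\frac1N\sum_c\E[\theta_{c,\ell}^2].
\]
Splitting $\theta_{c,\ell}=x+\delt_{c,\ell}$ and invoking \cref{lem:G5a} together with \cref{lem:displacement2} bounds the inner client average by $C_H(\smallscale+\gamma^2)\le C_H\smallscale$. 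This piece is therefore $C_H\gamma^2\smallscale$. Adding the two pieces yields Equation~\eqref{eq:avg-delta2}, with constants depending only on $H$ and the problem parameters, not on $N$ or $\gamma$.

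There is no real obstacle here. The only point requiring care is not to bound $\bar\delt_h^2$ by the client average of $\delt_{c,h}^2$ via Jensen. That route would reintroduce the control second moment $\gamma^2h^2\QH$, which is $O(\gamma^2)$ with no $1/N$ factor and would destroy the estimate. The pathwise cancellation must be used before any squaring, which is exactly what the averaged identity above does.
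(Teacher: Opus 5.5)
Your proposal is correct and follows essentially the same route as the paper: you average the unrolled recursion so the zero-sum controls cancel pathwise, bound the gradient piece via Jensen, $|f'(y)|\le L|y|$ and the coarse local second moment $C_H\smallscale$, and compute the noise piece exactly as $\gamma^2h\sigma^2/N$. Your closing remark is also correct: the $1/N$ factor survives only because you cancel the controls before squaring, rather than applying Jensen to the individual $\delt_{c,h}$.
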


\begin{proof}
Average the local displacement recursion. The control term cancels pathwise, so
\[
  \bar\delt_{h+1}
  =\bar\delt_h-\gamma\overline{f'(\theta_h)}-\gamma\bar\eps_{h+1},
  \qquad \bar\delt_0=0.
\]
Unrolling,
\[
  \bar\delt_h=-\gamma\sum_{\ell=0}^{h-1}\overline{f'(\theta_\ell)}
  -\gamma\sum_{j=1}^h\bar\eps_j.
\]
For fixed $H$, Jensen's inequality, smoothness, and the coarse local second-moment estimate give
\[
  \E|\overline{f'(\theta_\ell)}|^2
  \le L^2\frac1N\sum_c\E[\theta_{c,\ell}^2]
  \le C_H\smallscale,
\]
while $\E[\bar\eps_j^2]=\sigma^2/N$. This gives Equation~\eqref{eq:avg-delta2}.
\end{proof}

\begin{lemma}[Sharp global second moment]
\label{lem:G5b}
There exists $C_H<\infty$ such that
\begin{equation}
\label{eq:G5b}
  M_2
  =\frac{\sigma^2}{2a}\frac{\gamma}{N}
  +O_H\left(\frac{\gamma^2}{N}+\gamma^3\right).
\end{equation}
Equivalently, the nonlinear correction to the stationary second moment has no unresolved $N^0\gamma^2$ term.
\end{lemma}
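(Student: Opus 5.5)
The plan is to run the exact second-moment balance for the linear/nonlinear split \eqref{eq:global-linear-nonlinear}, $x^+=\rho_\gamma x+\eta+\Delta$. Squaring it, taking expectations and using stationarity, together with $\E[x\eta]=0$ (fresh noise is independent of the round-start state), gives the exact identity
\[
(1-\rho_\gamma^2)M_2=\E[\eta^2]+\E[\Delta^2]+2\rho_\gamma\E[x\Delta]+2\E[\eta\Delta].
\]
Since $1-\rho_\gamma^2=2aH\gamma+O_H(\gamma^2)$ and, by \eqref{eq:eta2}, $\E[\eta^2]=\gamma^2\sigma^2H/N\,(1+O_H(\gamma))$, the ratio $\E[\eta^2]/(1-\rho_\gamma^2)$ equals $\frac{\sigma^2}{2a}\frac{\gamma}{N}+O_H(\gamma^2/N)$. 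It then suffices to show that each of the three remaining terms is $O_H(\gamma\,\remscale)=O_H(\gamma^3/N+\gamma^4)$, because dividing by $1-\rho_\gamma^2\ge a\gamma$ then yields \eqref{eq:G5b}. The term $\E[\Delta^2]\le C_H\gamma^4$ is already available from the proof of \cref{lem:Delta-moments}.

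\textbf{The term $\E[x\Delta]$.} Since $\Delta=-\gamma\sum_h\lambda_\gamma^{H-1-h}\bar q_h$, I need $\frac1N\sum_c\E[x\,q(\theta_{c,h})]=O_H(\remscale)$, and a crude absolute-value bound is not enough: $M_4^{3/4}$ can exceed $\gamma^2/N$. I would therefore Taylor expand $q(y)=\frac\tau2y^2+O(|y|^3)$. For the quadratic part, write $\theta_{c,h}=x+\delt_{c,h}$ and split $\E[x\theta_{c,h}^2]=M_3+2\E[x^2\delt_{c,h}]+\E[x\delt_{c,h}^2]$.
\begin{itemize}
\item $M_3=O_H(\fourthscale)$ by \cref{lem:M3}.
\item In $\E[x^2\delt_{c,h}]$, replace $\delt_{c,h}$ by its leading part $D_{c,h}=-\gamma(h\xi_c+S_{c,h})$. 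The client average of $\E[x^2\xi_c]$ vanishes exactly by the zero-sum condition, and $\E[x^2S_{c,h}]=0$ by independence. The rest of $\delt_{c,h}$ is $R_{c,h}$, which is $O(\gamma(|x|+|\delt|))$, so it contributes $O_H(\gamma(\E|x|^3+\sqrt{M_4}\,\gamma))=O_H(\remscale)$ by \cref{lem:M4}.
\item For $\E[x\delt_{c,h}^2]$, the leading piece is $\gamma^2\bigl(h^2\E[x\xi_c^2]+h\sigma^2\bstat\bigr)$. It is bounded by $\gamma^2\bigl(\sqrt{M_2}\,C+\smallscale\bigr)\le C(\gamma^{5/2}/\sqrt N+\gamma^3)\le C\remscale$, using \cref{lem:control-sixth}, \cref{lem:G8} and the AM--GM inequality as in \cref{lem:G3}. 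The corrections involving $R_{c,h}$ are smaller.
\item The cubic Taylor remainder is bounded by $\E|x|\,|\theta|^3\lesssim M_4+\sqrt{M_2}\,\sqrt{\E\delt^6}$, which is $O_H(\remscale)$ by \cref{lem:M4} and the coarse local sixth moment.
\end{itemize}
Multiplying by the prefactor $\gamma$ then gives $\E[x\Delta]=O_H(\gamma\,\remscale)$.

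\textbf{The term $\E[\eta\Delta]$ (main obstacle).} Cauchy--Schwarz only gives $\gamma^3/\sqrt N$, so I would use coordinate replacement. Writing $\eta=-\frac{\gamma}{N}\sum_{j,c}\lambda_\gamma^{H-j}\eps_{c,j}$, it suffices to bound each $\E[\eps_{c,j}\Delta]$ by $C_H\gamma^{5/2}/N$. Let $\Delta^{(c,j)}$ be $\Delta$ recomputed with $\eps_{c,j}$ set to zero. Because $\Delta^{(c,j)}$ is independent of $\eps_{c,j}$ and the noise has mean zero, $\E[\eps_{c,j}\Delta]=\E[\eps_{c,j}(\Delta-\Delta^{(c,j)})]$.

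Within the round the controls are frozen, so only client $c$'s path changes. By the finite-step Gr\"onwall argument of \cref{lem:displacement2}, the perturbation satisfies $|\theta_{c,h}-\theta^{(c,j)}_{c,h}|\le C_H\gamma B_\eps$. Using $|q(u)-q(v)|\le K_3(|u|+|v|)|u-v|$, I get
\[
|\Delta-\Delta^{(c,j)}|\le \frac{C_H\gamma^2}{N}\sum_h\bigl(|\theta_{c,h}|+|\theta^{(c,j)}_{c,h}|\bigr),
\]
whose expectation is $O_H(\gamma^{5/2}/N)$ by the coarse local second moments. Averaging over $c$, summing over $j\le H$ and multiplying by the prefactor $\gamma$ gives $|\E[\eta\Delta]|\le C_H\gamma^{7/2}/N\le C_H\gamma\,\remscale$.

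Collecting the four terms and dividing by $1-\rho_\gamma^2$ proves \eqref{eq:G5b}. I expect the difficulty to lie in this coordinate-replacement step, in particular in checking that bounded noise makes the perturbation pathwise $O(\gamma)$ uniformly in $N$, and in the bookkeeping for the signed term $\E[x^2\delt]$.
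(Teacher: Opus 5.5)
Your proposal is correct and is essentially the paper's proof: it uses the same exact second-moment balance divided by $1-\rho_\gamma^2\ge a\gamma$ and the same coordinate-replacement argument for $\E[\eta\Delta]$, where your bound $|q'(y)|\le K_3|y|$ even improves the paper's Lipschitz-based $\gamma^3/N$ to $\gamma^{7/2}/N$. For $\E[x\Delta]$ the bookkeeping is equivalent: you expand $q$ at $0$ and cancel $\frac1N\sum_c\E[x^2D_{c,h}]$ exactly, whereas the paper expands $q$ around $x$ and uses \cref{lem:avg-delta}. One detail needs tightening: in the cubic Taylor remainder, the imported $O_H(\gamma^3)$ local sixth moment would only give $\sqrt{M_2}\,\gamma^{3/2}\approx\gamma^2/\sqrt N$, which is too large, so use the displacement bound $\frac1N\sum_c\E\delt_{c,h}^6=O_H(\gamma^6)$ from the Gr\"onwall estimate with \cref{lem:control-sixth}, or simply H\"older, $\frac1N\sum_c\E[|x||\delt_{c,h}|^3]\le M_4^{1/4}\bigl(\frac1N\sum_c\E\delt_{c,h}^4\bigr)^{3/4}=O_H(\gamma^3)$ by \eqref{eq:disp4}.
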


\begin{proof}
We use the exact decomposition in Equation~\eqref{eq:global-linear-nonlinear}.

\paragraph{Step 1: exact linearized stationary second moment.}
For the linearized chain $x_{\mathrm{lin}}^+=\rho_\gamma x_{\mathrm{lin}}+\eta$, stationarity gives
\[
  M_{2,\mathrm{lin}}=\frac{\E[\eta^2]}{1-\rho_\gamma^2}.
\]
By Equation~\eqref{eq:eta2},
\[
  \E[\eta^2]
  =\frac{\gamma^2\sigma^2}{N}\sum_{k=0}^{H-1}\lambda_\gamma^{2k}.
\]
Since
\[
  1-\rho_\gamma^2
  =1-\lambda_\gamma^{2H}
  =(1-\lambda_\gamma^2)\sum_{k=0}^{H-1}\lambda_\gamma^{2k},
\]
the $H$-dependent geometric sum cancels exactly:
\begin{equation}
\label{eq:M2lin-exact}
  M_{2,\mathrm{lin}}
  =\frac{\gamma\sigma^2}{aN(2-a\gamma)}.
\end{equation}
Therefore
\begin{equation}
\label{eq:M2lin-first}
  M_{2,\mathrm{lin}}
  =\frac{\sigma^2}{2a}\frac{\gamma}{N}
  +O\left(\frac{\gamma^2}{N}\right).
\end{equation}
The explicit $+\sigma^2\gamma^2/(4N)$ term in the Taylor expansion of Equation~\eqref{eq:M2lin-exact} is \emph{not} claimed to be the complete nonlinear finite-$N$ coefficient.

\paragraph{Step 2: exact nonlinear stationary equation.}
Squaring Equation~\eqref{eq:global-linear-nonlinear} and using $\E[x\eta]=0$ gives
\begin{equation}
\label{eq:M2-nonlinear-eq}
  (1-\rho_\gamma^2)M_2
  =\E[\eta^2]
  +2\rho_\gamma\E[x\Delta]
  +2\E[\eta\Delta]
  +\E[\Delta^2].
\end{equation}
By \cref{lem:Delta-moments},
\begin{equation}
\label{eq:Delta2-rscale}
  \E[\Delta^2]\le C_H\gamma\remscale.
\end{equation}
It remains to obtain equally sharp bounds for the two covariance terms.

\paragraph{Step 3: the signed covariance $\E[x\Delta]$.}
For $\delt_{c,h}:=\theta_{c,h}-x$, Taylor-expand $q$ around the round-start $x$:
\begin{equation}
\label{eq:q-around-x}
  q(x+\delt_{c,h})
  =q(x)+q'(x)\delt_{c,h}+r_{c,h},
  \qquad
  |r_{c,h}|\le \frac{K_3}{2}\delt_{c,h}^2.
\end{equation}
After the client average, we write
\[
  \bar q_h=q(x)+q'(x)\bar\delt_h+\bar r_h
\]
and hence
\begin{equation}
\label{eq:Delta-split}
  \Delta=\Delta_0+R_1+R_2,
\end{equation}
with
\begin{align*}
  \Delta_0&:=-\gamma A_H(\lambda_\gamma)q(x),\\
  R_1&:=-\gamma\sum_{h=0}^{H-1}\lambda_\gamma^{H-1-h}q'(x)\bar\delt_h,\\
  R_2&:=-\gamma\sum_{h=0}^{H-1}\lambda_\gamma^{H-1-h}\bar r_h.
\end{align*}
Since $q(y)=\frac\tau2y^2+O(|y|^3)$, we have
\[
  |\E[xq(x)]|\le C(|M_3|+M_4)
  \le C_H\remscale
\]
by \cref{lem:M3,lem:M4}. Thus
\begin{equation}
\label{eq:xDelta0}
  |\E[x\Delta_0]|\le C_H\gamma\remscale.
\end{equation}
Moreover $|q'(x)|\le K_3|x|$, so \cref{lem:M4,lem:avg-delta} give
\[
  |\E[xR_1]|
  \le C_H\gamma\sqrt{M_4\,\E[\bar\delt_h^2]}
  \le C_H\gamma^2\sqrt{\fourthscale\left(\smallscale+\frac1N\right)}.
\]
Because $\fourthscale\le\gamma\smallscale$ and
$\gamma(\smallscale+1/N)\le 2\smallscale$ for $\gamma\le1$,
\begin{equation}
\label{eq:xR1}
  |\E[xR_1]|\le C_H\gamma^2\smallscale=C_H\gamma\remscale.
\end{equation}
For $R_2$, Equation~\eqref{eq:disp4} and Jensen imply $\E[\bar r_h^2]\le C_H\gamma^4$, hence
\[
  |\E[xR_2]|
  \le C_H\gamma\sqrt{M_2\gamma^4}
  =C_H\gamma^3\sqrt{\smallscale}
  \le C_H\gamma^2\smallscale.
\]
Combining the three pieces,
\begin{equation}
\label{eq:xDelta-sharp}
  |\E[x\Delta]|\le C_H\gamma\remscale.
\end{equation}

\paragraph{Step 4: the fresh-noise covariance $\E[\eta\Delta]$ by coordinate replacement.}
A naive Cauchy--Schwarz bound is too coarse. First note that $\Delta_0$ in Equation~\eqref{eq:Delta-split} depends only on the round-start $x$, so
\begin{equation}
\label{eq:etaDelta0}
  \E[\eta\Delta_0]=0.
\end{equation}
Let $R:=\Delta-\Delta_0$. Index the $NH$ fresh noise coordinates by $i=(c,j)$ and write
\[
  \eta=\sum_i\alpha_i\eps_i,
  \qquad |\alpha_i|\le \frac{\gamma}{N}.
\]
For a fixed coordinate $i$, let $Z^{(i,0)}$ denote the noise array obtained by replacing $\eps_i$ by $0$ while keeping all other coordinates unchanged. Since $R(Z^{(i,0)})$ is independent of $\eps_i$ and $\E[\eps_i]=0$,
\begin{equation}
\label{eq:coord-replace}
  \E[\eps_iR(Z)]
  =\E\left[\eps_i\{R(Z)-R(Z^{(i,0)})\}\right].
\end{equation}
We now justify the required path sensitivity deterministically. Write $i=(c_0,j_0)$ and couple the two within-round trajectories by using the same round-start state and the same fresh noises except that the coordinate $\eps_{c_0,j_0}$ is replaced by $0$ in $Z^{(i,0)}$. All clients $c\neq c_0$ then have identical trajectories. For the affected client, we define
\[
  d_h:=\theta_{c_0,h}(Z)-\theta_{c_0,h}(Z^{(i,0)}).
\]
Before the perturbed noise is used, $d_h=0$ for $h<j_0$, while the update containing that coordinate gives
\[
  d_{j_0}=-\gamma\eps_i.
\]
For every subsequent local step $h\ge j_0$, the two recursions use the same control and the same remaining fresh noises, hence
\[
  d_{h+1}
  =d_h-\gamma\{f'(\theta_{c_0,h}(Z))-f'(\theta_{c_0,h}(Z^{(i,0)}))\}.
\]
By $L$-smoothness,
\[
  |d_{h+1}|\le (1+\gamma L)|d_h|,
\]
and therefore, for $j_0\le h\le H$,
\[
  |d_h|
  \le \gamma(1+\gamma L)^{h-j_0}|\eps_i|
  \le C_H\gamma|\eps_i|,
\]
where the last constant is uniform in $N$ and $\gamma$ under the common small-step restriction.

Because $\Delta_0$ depends only on the round-start $x$, the difference $R=\Delta-\Delta_0$ is the same as the difference of $\Delta$. Only client $c_0$ contributes to the change in each client average $\bar q_h$. Since
\[
  q'(y)=f''(y)-a,
  \qquad |q'(y)|\le L+a,
\]
we obtain
\[
  |\bar q_h(Z)-\bar q_h(Z^{(i,0)})|
  \le \frac{L+a}{N}|d_h|
  \le C_H\frac{\gamma}{N}|\eps_i|.
\]
Moreover, under the same threshold $0\le\lambda_\gamma\le1$, the weights in Equation~\eqref{eq:Delta-def} have absolute value at most one. Summing at most $H$ affected local steps and using the outer factor $\gamma$ in Equation~\eqref{eq:Delta-def} yields the deterministic sensitivity bound
\begin{equation}
\label{eq:R-coordinate-Lipschitz}
  |R(Z)-R(Z^{(i,0)})|
  \le C_H\frac{\gamma^2}{N}|\eps_i|.
\end{equation}
Hence
\[
  |\E[\eps_iR]|\le C_H\frac{\gamma^2}{N}\sigma^2.
\]
Summing the $NH$ coordinates in $\eta$,
\begin{equation}
\label{eq:etaDelta-sharp}
  |\E[\eta\Delta]|
  =|\E[\eta R]|
  \le NH\left(C_H\frac{\gamma}{N}\right)
          \left(C_H\frac{\gamma^2}{N}\right)
  \le C_H\frac{\gamma^3}{N}.
\end{equation}
This is the extra $1/N$ gain that the generic Cauchy--Schwarz route misses.

\paragraph{Step 5: conclude.}
Insert Equations~\eqref{eq:Delta2-rscale}, \eqref{eq:xDelta-sharp}, and~\eqref{eq:etaDelta-sharp} into Equation~\eqref{eq:M2-nonlinear-eq}:
\[
  (1-\rho_\gamma^2)M_2
  =\E[\eta^2]+O_H\left(\frac{\gamma^3}{N}+\gamma^4\right).
\]
Because
\[
  1-\rho_\gamma^2
  =a\gamma\sum_{k=0}^{2H-1}(1-a\gamma)^k
  \ge a\gamma,
\]
we obtain
\[
  M_2=M_{2,\mathrm{lin}}+O_H\left(\frac{\gamma^2}{N}+\gamma^3\right).
\]
Use Equation~\eqref{eq:M2lin-first} to conclude Equation~\eqref{eq:G5b}.
\end{proof}

\begin{remark}[What \cref{lem:G5b} does and does not identify]
The lemma excludes an unresolved client-number-independent $N^0\gamma^2$ term in $M_2$. It does \emph{not} identify the complete coefficient of the finite-$N$ monomial $\gamma^2/N$: the nonlinear correction is itself only controlled at $O_H(\gamma^2/N)$. Accordingly, the linearized expansion's coefficient $+\sigma^2/4$ at $\gamma^2/N$ must not be promoted to a theorem about the full nonlinear process.
\end{remark}

\section{Local higher moments and Taylor control}
\label{sec:local-higher}

This section places the cubic and quartic pieces of the gradient expansion inside the uniform remainder, thereby excluding an additional client-number-independent $N^0\gamma^2$ contribution from those terms.

\begin{lemma}[Local displacement fourth moment]
\label{lem:disp4}
For every fixed $h\le H$,
\begin{equation}
\label{eq:disp4-local}
  \frac1N\sum_{c=1}^N\E|\theta_{c,h}-x|^4\le C_H\gamma^4.
\end{equation}
\end{lemma}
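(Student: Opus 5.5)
The plan is to use the same pathwise finite-step Gr\"onwall bound that already appears in the proofs of \cref{lem:QH-uniform}, \cref{lem:displacement2}, and \cref{lem:Delta-moments}. Set $\delt_{c,h}:=\theta_{c,h}-x$, so that
\[
  \delt_{c,h+1}=\delt_{c,h}-\gamma\{f'(x+\delt_{c,h})+\xi_c+\eps_{c,h+1}\},
  \qquad \delt_{c,0}=0.
\]
Since $f'(0)=0$ and $f'$ is $L$-Lipschitz, we have $|f'(x+\delt)|\le L|x|+L|\delt|$. This gives
\[
  |\delt_{c,h+1}|\le(1+\gamma L)|\delt_{c,h}|+\gamma\bigl(L|x|+|\xi_c|+|\eps_{c,h+1}|\bigr).
\]
Iterating at most $H$ times, with $(1+\gamma L)^H\le e^{LH}$ under the common threshold $\gamma\le 1$, yields for all $h\le H$ the deterministic bound
\[
  |\delt_{c,h}|\le C_H\gamma\Bigl(|x|+|\xi_c|+\sum_{j=1}^h|\eps_{c,j}|\Bigr).
\]
The constant $C_H$ depends only on $L$ and $H$.

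Next, raise this bound to the fourth power. The elementary inequality $(u_1+\cdots+u_k)^4\le k^3\sum_i u_i^4$ with $k=h+2\le H+2$ gives
\[
  \delt_{c,h}^4\le C_H\gamma^4\Bigl(x^4+\xi_c^4+\sum_{j=1}^h\eps_{c,j}^4\Bigr).
\]
Average over clients and take expectations. Each of the three pieces is controlled by an earlier result.

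\begin{itemize}
\item The term $\E[x^4]$ is bounded by \cref{lem:M4}, giving $M_4\le C_H\fourthscale\le C_H$ for $\gamma\le1$. The imported coarse global fourth moment would also suffice.
\item The client-averaged control fourth moment $N^{-1}\sum_c\E[\xi_c^4]$ is bounded uniformly in $N$ and $\gamma$ by \cref{lem:control-sixth}.
\item The noise term satisfies $\eps_{c,j}^4\le B_\eps^4$ almost surely by \cref{ass:noise}.
\end{itemize}

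Together these give $N^{-1}\sum_c\E|\delt_{c,h}|^4\le C_H\gamma^4$ with a constant independent of $N$ and $\gamma$.

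The only point needing care is uniformity in $N$. The control moment must be used only in client-averaged form, because \cref{lem:control-sixth} provides no per-client supremum. This is harmless here: the bound is linear in $\xi_c^4$ client by client, so averaging over $c$ commutes with it. The argument also relies on no circular dependence. \cref{lem:control-sixth} uses only imported iterate moments, and \cref{lem:M4} rests on the independently proved $\E[\Delta^4]\le C_H\gamma^7$. So invoking them here is legitimate, and the estimate coincides with the one recorded as Equation~\eqref{eq:disp4}.
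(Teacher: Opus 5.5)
Your proposal is correct and matches the paper's proof essentially step for step. Both derive the pathwise finite-step Gr\"onwall bound $|\theta_{c,h}-x|\le C_H\gamma(|x|+|\xi_c|+\sum_{j\le h}|\eps_{c,j}|)$, raise it to the fourth power, average over clients, and close with \cref{lem:M4}, the client-averaged control fourth moment from \cref{lem:control-sixth}, and bounded noise. Your remark that only $\E[x^4]=O_H(1)$ is needed, so the imported coarse fourth moment would also suffice, is accurate.
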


\begin{proof}
The finite-step bound used in \cref{lem:displacement2} also yields
\[
  |\theta_{c,h}-x|
  \le C_H\gamma\left(|x|+|\xi_c|+\sum_{j=1}^h|\eps_{c,j}|\right).
\]
Raise to the fourth power, average over clients, and use \cref{lem:M4,lem:control-sixth} together with bounded noise. The resulting constant is independent of $N$ and $\gamma$ for fixed $H$.
\end{proof}

\begin{lemma}[Local signed cubic and fourth moments]
\label{lem:G6local}
For every fixed $h\le H$,
\begin{align}
  \left|\frac1N\sum_{c=1}^N\E[\theta_{c,h}^3]\right|
  &\le C_H\remscale,                                                     \label{eq:G6-cubic}\\
  \frac1N\sum_{c=1}^N\E[\theta_{c,h}^4]
  &\le C_H\fourthscale.                                                  \label{eq:G6-fourth}
\end{align}
The proof uses the coarse $M_2$ bound, the global $M_3/M_4$ bounds, and control/displacement moments; it does not require the sharp coefficient in \cref{lem:G5b}.
\end{lemma}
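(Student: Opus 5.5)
We would prove both bounds by writing $\theta_{c,h}=x+\delt_{c,h}$ with $\delt_{c,h}:=\theta_{c,h}-x$ and expanding the powers binomially. The fourth-moment bound \eqref{eq:G6-fourth} is the easy half. Using $(u+v)^4\le 8u^4+8v^4$ and \cref{lem:M4,lem:disp4}, we get
\[
\frac1N\sum_c\E[\theta_{c,h}^4]\le 8M_4+8\frac1N\sum_c\E[\delt_{c,h}^4]\le C_H(\fourthscale+\gamma^4),
\]
and $\gamma^4\le\gamma^3\le\fourthscale$ for $\gamma\le1$.

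For the signed cubic moment we expand
\[
\E[\theta_{c,h}^3]=M_3+3\E[x^2\delt_{c,h}]+3\E[x\delt_{c,h}^2]+\E[\delt_{c,h}^3]
\]
and average over $c$. Three of the four terms are routine.
\begin{itemize}
\item $M_3$ is $O_H(\remscale)$ by \cref{lem:M3}.
\item By Cauchy--Schwarz, $|\E[x\delt^2]|\le \sqrt{M_2\,\E[\delt^4]}\le C_H\gamma^2\sqrt{\smallscale}$. The same inequality already used in \cref{lem:G3} shows this is $\le C\remscale$.
\item By Jensen/H\"older, $\E|\delt|^3\le(\E\delt^4)^{3/4}\le C_H\gamma^3$, again after client averaging.
\end{itemize}

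The main obstacle is the cross term $\E[x^2\delt_{c,h}]$. A crude Cauchy--Schwarz bound gives $\sqrt{M_4\cdot\gamma^2}$, which behaves like $\gamma^{5/2}$ and is too large. Instead we exploit the signed structure by unrolling the client average, as in \cref{lem:avg-delta}:
\[
\bar\delt_h=-\gamma\sum_{\ell<h}\overline{f'(\theta_\ell)}-\gamma\sum_{j\le h}\bar\eps_j .
\]
Client averaging first lets us replace $\delt_{c,h}$ by $\bar\delt_h$, and the control term cancels pathwise because the controls are zero-sum. The noise part contributes $\E[x^2\bar\eps_j]=0$, since the fresh noise is independent of the round-start state. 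This leaves
\[
\gamma\,\Bigl|\E\bigl[x^2\,\overline{f'(\theta_\ell)}\bigr]\Bigr|\le \gamma L\,\frac1N\sum_c\E\bigl[x^2|\theta_{c,\ell}|\bigr].
\]
We bound this using $|\theta_{c,\ell}|\le|x|+|\delt_{c,\ell}|$. This yields $\gamma\,\E|x|^3+\gamma\sqrt{M_4\,\E[\delt^2]}$, and each piece is controlled separately.
\begin{itemize}
\item $\E|x|^3\le M_4^{3/4}\le C(\gamma^{3/2}N^{-3/2}+\gamma^{9/4})$. Multiplying by $\gamma$ gives $\gamma^{5/2}N^{-3/2}+\gamma^{13/4}$, and both lie within $C\remscale$ since $\gamma^{5/2}N^{-3/2}\le\gamma^2/N$.
\item By \cref{lem:M4,lem:displacement2}, $\gamma\sqrt{\fourthscale\,\gamma^2}\le\gamma^2(\gamma/N+\gamma^{3/2})$, which is $O(\remscale)$.
\end{itemize}

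If the $\gamma\,\E|x|^3$ step turns out lossy, the fallback is to write $f'(y)=ay+q(y)$. The linear part then produces $a\gamma\,\E[x^2\bar\theta_\ell]$. We would handle this either recursively through the same decomposition, or via $\E[x^3]$ plus $\E[x^2\bar\delt_\ell]$ at earlier steps, closing by finite induction on $h$ with base case $\bar\delt_0=0$. The $q$ part costs $\gamma K_3\,\E[x^2\theta^2]$, which is $O(\gamma\fourthscale)$ exactly as in \cref{lem:M3}.
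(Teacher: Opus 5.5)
Your proof is correct and follows the paper's argument closely. You use the same expansion, $\frac1N\sum_c\E[\theta_{c,h}^3]=M_3+3\E[x^2\bar\delt_h]+3\E[x\,\frac1N\sum_c\delt_{c,h}^2]+\E[\frac1N\sum_c\delt_{c,h}^3]$. You use the same Cauchy--Schwarz/Jensen bounds for the last two terms and the same $8(|x|^4+|\delt|^4)$ bound for the fourth moment.

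The only real difference is the cross term $\E[x^2\bar\delt_h]$. The paper disposes of it in one line by Cauchy--Schwarz against \cref{lem:avg-delta}: $|\E[x^2\bar\delt_h]|\le\sqrt{M_4\,\E[\bar\delt_h^2]}$ with $\E[\bar\delt_h^2]\le C_H\gamma^2(\smallscale+1/N)$. There the averaged fresh noise is not removed. It contributes the $1/N$ part of $\E[\bar\delt_h^2]$, and one checks that $\fourthscale\cdot\gamma^2(N^{-1}+\gamma)\le\remscale^2$.

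You instead unroll $\bar\delt_h$ and kill the noise contribution exactly via $\E[x^2\bar\eps_j]=0$. This leaves only the drift part, which you bound by $\gamma L(\E|x|^3+\sqrt{M_4\cdot C_H\gamma^2})$; your arithmetic there, including $\gamma^{5/2}N^{-3/2}\le\gamma^2/N$, is right.

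What each route buys: yours is self-contained, needing no appeal to \cref{lem:avg-delta}, and gives a somewhat smaller bound for this one term. That extra sharpness does not improve the lemma's conclusion, because the bounds on $\E[\frac1N\sum_c\delt_{c,h}^3]$ and $\E[x\,\frac1N\sum_c\delt_{c,h}^2]$ are themselves only $O_H(\remscale)$. The paper's route is shorter because \cref{lem:avg-delta} is already available from the sharp-$M_2$ argument. Your fallback paragraph is not needed.
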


\begin{proof}
Write $\delt_{c,h}:=\theta_{c,h}-x$ and $\bar\delt_h=N^{-1}\sum_c\delt_{c,h}$. We already have
\[
  \frac1N\sum_c\E[\delt_{c,h}^2]\le C_H\gamma^2,
  \qquad
  \frac1N\sum_c\E[\delt_{c,h}^4]\le C_H\gamma^4,
\]
from \cref{lem:displacement2,lem:disp4}, and
\[
  \E[\bar\delt_h^2]\le C_H\gamma^2\left(\smallscale+\frac1N\right)
\]
from \cref{lem:avg-delta}.

For the signed cubic, expand exactly:
\begin{equation}
\label{eq:local-cubic-expand}
\begin{aligned}
  \frac1N\sum_c\E[\theta_{c,h}^3]
  ={}&M_3
  +3\E[x^2\bar\delt_h]
  +3\E\left[x\frac1N\sum_c\delt_{c,h}^2\right]\\
  &+\E\left[\frac1N\sum_c\delt_{c,h}^3\right].
\end{aligned}
\end{equation}
The first term is $O_H(\remscale)$ by \cref{lem:M3}. For the second term,
\[
  |\E[x^2\bar\delt_h]|
  \le \sqrt{M_4\E[\bar\delt_h^2]}
  \le C_H\remscale;
\]
indeed $\fourthscale=\gamma^2(N^{-2}+\gamma)$ and
$\smallscale+N^{-1}\le C(N^{-1}+\gamma)$ for $\gamma\le1$.
For the third term, Jensen and Cauchy--Schwarz give
\[
  \left|\E\left[x\frac1N\sum_c\delt_{c,h}^2\right]\right|
  \le \sqrt{M_2\frac1N\sum_c\E[\delt_{c,h}^4]}
  \le C_H\gamma^2\sqrt{\smallscale}
  \le C_H\remscale.
\]
Finally,
\[
  \left|\E\left[\frac1N\sum_c\delt_{c,h}^3\right]\right|
  \le \left(\frac1N\sum_c\E\delt_{c,h}^2\right)^{1/2}
       \left(\frac1N\sum_c\E\delt_{c,h}^4\right)^{1/2}
  \le C_H\gamma^3
  \le C_H\remscale.
\]
This proves Equation~\eqref{eq:G6-cubic}. Notice that only the \emph{signed client average} is controlled; no bound of the stronger form $N^{-1}\sum_c|\E[\theta_{c,h}^3]|$ is claimed.

For the fourth moment,
\[
  |x+\delt|^4\le 8(|x|^4+|\delt|^4),
\]
so
\[
  \frac1N\sum_c\E[\theta_{c,h}^4]
  \le 8M_4+8\frac1N\sum_c\E[\delt_{c,h}^4]
  \le C_H(\fourthscale+\gamma^4)
  \le C_H\fourthscale.
\]
\end{proof}

\begin{remark}[No symmetry requirement]
The proof of \cref{lem:G6local} uses mean-zero fresh noise and the global signed-third estimate from \cref{lem:M3}; it does not require $\E[\eps^3]=0$. The asymmetric-noise skew term was already retained and controlled in \cref{lem:M3}.
\end{remark}

\subsection{Gradient Taylor remainder}

\begin{lemma}[General $C^5$ Taylor control]
\label{lem:G7}
For
\[
  a=f''(0),\qquad \tau=f'''(0),\qquad \kappa=f''''(0),
\]
the gradient admits
\begin{equation}
\label{eq:grad-taylor}
  f'(y)=ay+\frac\tau2y^2+\frac\kappa6y^3+\rho_5(y),
  \qquad
  |\rho_5(y)|\le \frac{K_5}{24}|y|^4.
\end{equation}
Moreover,
\begin{equation}
\label{eq:G7}
  \left|
  \sum_{h=0}^{H-1}\frac1N\sum_{c=1}^N
  \E\left[
  f'(\theta_{c,h})-a\theta_{c,h}-\frac\tau2\theta_{c,h}^2
  \right]
  \right|
  \le C_H\remscale.
\end{equation}
\end{lemma}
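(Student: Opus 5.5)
The proof has two parts: the pointwise Taylor expansion \eqref{eq:grad-taylor}, and then the averaged bound \eqref{eq:G7}.

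For \eqref{eq:grad-taylor}, I would apply Taylor's theorem with integral remainder to $f'\in C^4$ around $0$, using $f'(0)=0$, $f''(0)=a$, $f'''(0)=\tau$, $f''''(0)=\kappa$. The remainder is
\[
\rho_5(y)=\int_0^y\frac{(y-t)^3}{6}f^{(5)}(t)\,dt,
\]
and $|f^{(5)}|\le K_5$ from \cref{ass:smooth} gives $|\rho_5(y)|\le K_5|y|^4/24$ directly.

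For \eqref{eq:G7}, I would write the summand as $\frac\kappa6\theta_{c,h}^3+\rho_5(\theta_{c,h})$. The sum over $h$ has only $H$ terms, with $H$ fixed, so it suffices to bound each $h\le H-1$ separately with constants depending only on $H$.

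\begin{itemize}
\item \textbf{Cubic piece.} The client-averaged signed cubic moment is bounded by \eqref{eq:G6-cubic} of \cref{lem:G6local}, so $\bigl|\frac\kappa6\frac1N\sum_c\E[\theta_{c,h}^3]\bigr|\le C_H\remscale$. It is essential to keep this term signed and already client-averaged. An absolute third moment would only give $\E|\theta|^3\lesssim \smallscale^{3/2}$, which contains $\gamma^3$ and $(\gamma/N)^{3/2}$ pieces; the latter is not $O(\remscale)$ uniformly in $N$. For this reason I will not bound the cubic term through $|y|^3$.
\item \textbf{Quartic remainder.} Using $|\rho_5(y)|\le \frac{K_5}{24}|y|^4$ and \eqref{eq:G6-fourth}, I get
\[
\frac1N\sum_c\E|\rho_5(\theta_{c,h})|\le C_H\fourthscale.
\]
\item \textbf{Comparison of scales.} It remains to note $\fourthscale\le\remscale$: indeed $\gamma^2/N^2\le\gamma^2/N$ since $N\ge2$, and the $\gamma^3$ terms agree.
\end{itemize}

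Summing the two pieces over the $H$ local steps yields \eqref{eq:G7}, with a constant independent of $N$ and $\gamma$.

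The only real obstacle is the signed-cubic issue described in the first bullet. It is already resolved by \cref{lem:G6local}, which relied on \cref{lem:M3}; if that lemma were unavailable, the argument above would not close.
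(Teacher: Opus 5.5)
Your proposal is correct and follows the paper's proof essentially step for step. Like the paper, you use the integral-form Taylor remainder of $f'$ to get $|\rho_5(y)|\le K_5|y|^4/24$, split the summand into the signed client-averaged cubic bounded by \eqref{eq:G6-cubic} and the quartic remainder bounded by \eqref{eq:G6-fourth}, and close with $\fourthscale\le\remscale$ and the fixed-$H$ sum.
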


\begin{proof}
Apply Taylor's theorem to $g=f'$ through cubic order. Since $g^{(4)}=f^{(5)}$,
\[
  \rho_5(y)=\frac{y^4}{3!}\int_0^1(1-s)^3 f^{(5)}(sy)\,ds,
\]
which gives the coefficient $K_5/24$ in Equation~\eqref{eq:grad-taylor}.

For each $h$, subtract the linear and quadratic pieces:
\[
  \frac1N\sum_c\E\left[
  f'(\theta_{c,h})-a\theta_{c,h}-\frac\tau2\theta_{c,h}^2
  \right]
  =\frac\kappa6\frac1N\sum_c\E[\theta_{c,h}^3]
   +\frac1N\sum_c\E[\rho_5(\theta_{c,h})].
\]
By \cref{lem:G6local}, the first term is $O_H(\remscale)$ and the second is $O_H(\fourthscale)$. Since $\fourthscale\le\remscale$ for $N\ge2$, each local-step contribution is $O_H(\remscale)$, and summation over fixed $H$ preserves this order.
\end{proof}

\begin{remark}[Precise interpretation of the higher-curvature terms]
The cubic-gradient term proportional to $f''''(0)$ and the quartic remainder of the gradient expansion, controlled by $f^{(5)}$, do not alter the client-number-independent $N^0\gamma^2$ coefficient. They may still contribute to finite-$N$ terms of order $\gamma^2/N$; no complete fixed-$N$ second-order coefficient is claimed.
\end{remark}

\begin{remark}[Why homogeneity matters for the signed cubic]
Because all clients share the same coefficient $\kappa=f''''(0)$, the Taylor contribution is proportional to the signed average $N^{-1}\sum_c\E[\theta_{c,h}^3]$. With heterogeneous clients, different fourth derivatives would weight the client cubics differently, and the present signed-average estimate would not be sufficient without additional control.
\end{remark}

\section{Assembly proof of the main expansion}
\label{app:main-proof}

This appendix proves \cref{thm:main} from the preceding lemmas.

\begin{figure}[H]
\centering
\begin{tikzpicture}[
  node distance=4mm,
  box/.style={draw,rounded corners,align=center,inner sep=4pt,text width=0.72\linewidth},
  arr/.style={-{Latex[length=2mm]},thick}
]
\node[box] (a) {Stationarity inputs + exact \mbox{\Scaf{}} recursion\\+ zero-sum control state};
\node[box,below=of a] (b) {Uniform control moments and coarse global/local second moments};
\node[box,below=of b] (c) {Coefficient-level local second-moment expansion};
\node[box,below=of c] (d) {Mean bootstrap + global fourth moment\\+ signed global third moment};
\node[box,below=of d] (e) {Sharp global second moment: no unresolved client-independent $\gamma^2$ term in $\E[x^2]$};
\node[box,below=of e] (f) {Local higher moments + $C^5$ Taylor remainder control};
\node[box,below=of f] (g) {Stationary mean assembly and uniform joint remainder};
\draw[arr] (a) -- (b);
\draw[arr] (b) -- (c);
\draw[arr] (c) -- (d);
\draw[arr] (d) -- (e);
\draw[arr] (e) -- (f);
\draw[arr] (f) -- (g);
\end{tikzpicture}
\caption{Proof dependency map. Each stage either identifies a source term or excludes a competing client-independent contribution at order $\gamma^2$. The detailed lemmas and constants are given in the surrounding appendices.}
\label{fig:proof-map}
\end{figure}

\begin{proof}[Proof of \cref{thm:main}]
The exact stationary mean balance in Equation~\eqref{eq:stationary-mean-balance} and \cref{lem:G7} give
\begin{equation}
\label{eq:assembly1}
  0
  =a\sum_{h=0}^{H-1}m_h
   +\frac\tau2\sum_{h=0}^{H-1}s_h
   +O_H(\remscale),
\end{equation}
where
\[
  s_h:=\frac1N\sum_c\E[\theta_{c,h}^2].
\]
By \cref{lem:G8},
\[
  \sum_{h=0}^{H-1}m_h=H\bstat+O_H(\remscale).
\]
By definition of \cref{lem:G3},
\[
  s_h=M_2+\Ubar_h.
\]
Substituting into Equation~\eqref{eq:assembly1},
\[
  aH\bstat+\frac\tau2HM_2
  +\frac\tau2\sum_{h=0}^{H-1}\Ubar_h
  +O_H(\remscale)=0.
\]
Hence
\begin{equation}
\label{eq:b-assembly}
  \bstat
  =-\frac\tau{2a}M_2
   -\frac\tau{2aH}\sum_{h=0}^{H-1}\Ubar_h
   +O_H(\remscale).
\end{equation}
The sharp global second moment \cref{lem:G5b} yields
\begin{equation}
\label{eq:assembly-M2}
  -\frac\tau{2a}M_2
  =-\frac{\tau\sigma^2}{4a^2}\frac{\gamma}{N}
   +O_H(\remscale).
\end{equation}
For the local correction, \cref{lem:G3} gives
\[
  \sum_{h=0}^{H-1}\Ubar_h
  =\gamma^2\sigma^2\left(
    \sum_{h=0}^{H-1}h+
    \frac1H\sum_{h=0}^{H-1}h^2
  \right)+O_H(\remscale).
\]
Use
\[
  \sum_{h=0}^{H-1}h=\frac{H(H-1)}2,
  \qquad
  \sum_{h=0}^{H-1}h^2=\frac{H(H-1)(2H-1)}6.
\]
Then
\[
  \sum_{h=0}^{H-1}h+\frac1H\sum_{h=0}^{H-1}h^2
  =\frac{(H-1)(5H-1)}6,
\]
and therefore
\begin{equation}
\label{eq:assembly-U}
  -\frac\tau{2aH}\sum_{h=0}^{H-1}\Ubar_h
  =-\frac{\tau\sigma^2}{12a}
  \frac{(H-1)(5H-1)}{H}\gamma^2
  +O_H(\remscale).
\end{equation}
Combining Equations~\eqref{eq:b-assembly}--\eqref{eq:assembly-U} gives Equation~\eqref{eq:main-expansion}. The common small-step threshold and $N,\gamma$-uniform remainder constant are justified in Appendix~\ref{sec:uniformity}.
\end{proof}

\subsection{Canonical source-wise decomposition}

\begin{corollary}[Direct local-noise and SCAFFOLD control-second-moment sources]
\label{cor:decomp}
The $N^0\gamma^2$ coefficient in Equation~\eqref{eq:main-expansion} admits the canonical source-wise decomposition induced by the exact local second-moment identity in Equation~\eqref{eq:D2-exact}:
\begin{align}
  B_{20}^{\mathrm{local}}(H)
  &:=-\frac{\tau\sigma^2}{4a}(H-1),                                  \label{eq:B20local}\\
  B_{20}^{\mathrm{ctrl}}(H)
  &:=-\frac{\tau\sigma^2}{12a}
  \frac{(H-1)(2H-1)}{H},                                               \label{eq:B20ctrl}\\
  B_{20}^{\mathrm{SCAF}}(H)
  &:=B_{20}^{\mathrm{local}}(H)+B_{20}^{\mathrm{ctrl}}(H)
  =-\frac{\tau\sigma^2}{12a}
  \frac{(H-1)(5H-1)}{H}.                                               \label{eq:B20total}
\end{align}
\end{corollary}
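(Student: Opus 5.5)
The plan is to obtain the decomposition directly from the assembly in the proof of \cref{thm:main}, splitting the local correction according to the exact identity \eqref{eq:D2-exact}. By \cref{lem:G3}, $\Ubar_h=\gamma^2(h\sigma^2+h^2\QH)+O_H(\remscale)$, where the $h\sigma^2$ piece comes from $\E[S_{c,h}^2]$ (fresh local noise) and the $h^2\QH$ piece comes from $h^2\E[\xi_c^2]$ (round-start controls). The cross term $\E[\xi_cS_{c,h}]$ vanishes exactly. Because this split is an exact identity at the level of $N^{-1}\sum_c\E[D_{c,h}^2]$, and all remaining pieces of $\Ubar_h$ are already $O_H(\remscale)$, the split is canonical. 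Concretely, it does not depend on how the remainder terms are grouped.

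Next, insert each piece separately into \eqref{eq:b-assembly}, namely $-\frac{\tau}{2aH}\sum_{h=0}^{H-1}\Ubar_h$. The $M_2$ term contributes only at order $\gamma/N+\remscale$ by \cref{lem:G5b}, so it carries no $N^0\gamma^2$ part.

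For the local-noise part, I would compute
\[
-\frac{\tau}{2aH}\gamma^2\sigma^2\frac{H(H-1)}{2}=-\frac{\tau\sigma^2}{4a}(H-1)\gamma^2,
\]
which gives \eqref{eq:B20local}.

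For the control part, I would use \cref{lem:QH-uniform}, which gives $\QH=\sigma^2/H+O_H(1/N+\gamma)$. The error contributes only $O_H(\remscale)$ after multiplying by $\gamma^2$. The resulting term is
\[
-\frac{\tau}{2aH}\gamma^2\frac{\sigma^2}{H}\frac{H(H-1)(2H-1)}{6}=-\frac{\tau\sigma^2}{12a}\frac{(H-1)(2H-1)}{H}\gamma^2,
\]
which gives \eqref{eq:B20ctrl}.

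Finally, I would check the sum by an elementary computation:
\[
3H(H-1)+(H-1)(2H-1)=(H-1)(5H-1),
\]
so that
\[
-\frac{\tau\sigma^2}{12a}\frac{(H-1)(5H-1)}{H}
\]
matches \eqref{eq:B20-main}.

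There is no real analytic obstacle, since all remainder control is inherited from \cref{lem:G3,lem:QH-uniform} and the assembly. The only delicate point is justifying the word ``canonical''. The decomposition must be attributed to the exact identity \eqref{eq:D2-exact} rather than to an arbitrary grouping of $O_H(\remscale)$ terms, and the $1/N$ correction in $\QH$ must be assigned to the remainder and not to $B_{20}^{\mathrm{ctrl}}$. I would state explicitly that the split is defined at the $N^0\gamma^2$ level only.
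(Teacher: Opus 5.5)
Your proposal is correct and follows essentially the same route as the paper. You split $\Ubar_h$ according to the exact identity \eqref{eq:D2-exact}, then push the fresh-noise piece $\gamma^2h\sigma^2$ and the $N^0$ part $\gamma^2h^2\sigma^2/H$ of the control piece separately through the factor $-\tau/(2aH)$ in \eqref{eq:b-assembly}, and finally sum over $h$ and $h^2$. Your assignment of the $-\sigma^2/(HN)$ correction in $\QH$ to the $O_H(\remscale)$ remainder matches the paper's remark on the finite-$N$ meaning of $B_{20}^{\mathrm{ctrl}}$.
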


\begin{proof}
The direct fresh-noise source in Equation~\eqref{eq:D2-exact} is $\gamma^2h\sigma^2$. Its contribution through the factor $-\tau/(2aH)$ in Equation~\eqref{eq:b-assembly} is
\[
  -\frac\tau{2aH}\sigma^2\sum_{h=0}^{H-1}h
  =-\frac{\tau\sigma^2}{4a}(H-1).
\]
The $N^0$ part of the control source is
\[
  \gamma^2h^2\frac{\sigma^2}{H}.
\]
Therefore, its contribution is
\[
  -\frac\tau{2aH}\frac{\sigma^2}{H}
  \sum_{h=0}^{H-1}h^2
  =-\frac{\tau\sigma^2}{12a}
  \frac{(H-1)(2H-1)}{H}.
\]
Adding the two gives Equation~\eqref{eq:B20total}.
\end{proof}

\begin{remark}[Meaning of the decomposition]
The word \emph{canonical} refers to the two distinct terms in the exact square in Equation~\eqref{eq:D2-exact}: direct within-round fresh noise and the persistent round-start control second moment. We do not claim uniqueness under arbitrary algebraic reparameterizations. We also do not identify $B_{20}^{\mathrm{local}}$ with a complete general-$H$ FedAvg coefficient; that would require a separate FedAvg stationary theorem or comparison proof.
\end{remark}

\begin{remark}[Finite-$N$ meaning of $B_{20}^{\mathrm{ctrl}}$]
Equation~\eqref{eq:B20ctrl} is the client-number-independent $N^0\gamma^2$ coefficient induced by the control second moment. Because Equation~\eqref{eq:QH-uniform} contains a finite-$N$ correction proportional to $1/N$, the complete finite-$N$ control-generated $\gamma^2$ contribution may contain additional $\gamma^2/N$ terms, all retained in the remainder in Equation~\eqref{eq:main-remainder}.
\end{remark}

\section{Uniform constants and joint-limit semantics}
\label{sec:uniformity}

\subsection{Uniformity in \texorpdfstring{$N$ and $\gamma$}{N and gamma} for fixed \texorpdfstring{$H$}{H}}

\begin{proposition}[Common small-step threshold and uniform remainder constant]
\label{prop:uniformity}
For fixed $H$, all lemmas used in \cref{thm:main} admit a common threshold $\gamma_{0, H}>0$ and constants that are independent of $N$ and $\gamma$. Consequently, the remainder in Equation~\eqref{eq:main-remainder} is uniform over
\[
  N\ge2,\qquad 0<\gamma\le\gamma_{0,H}.
\]
\end{proposition}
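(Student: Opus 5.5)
The proof is a bookkeeping audit. I will go through the lemmas feeding \cref{thm:main} in the order of the dependency map (\cref{fig:proof-map}). For each one I record two things: the finitely many small-step conditions it imposes, and the quantities its constant depends on. The target is that every constant depends only on $(H,\mu,L,a,K_3,K_4,K_5,\sigma^2,B_\eps)$ and on the imported constants of \cref{prop:anchor-inputs}, and never on $N$ or $\gamma$. The common threshold is then defined as
\[
  \gamma_{0,H}:=\min\bigl\{\gamma_{\mathrm{imp}},\,\gamma_{0,H}^{(C6)},\,1,\,1/(HL),\,1/a\bigr\}.
\]
Here $\gamma_{\mathrm{imp}}$ is the threshold under which the imported stationarity and coarse-moment results (L1)--(L3) hold. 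The conditions $\gamma\le 1/(HL)$ and $\gamma\le 1/a$ give $\gamma HL\le1$ and $0\le\lambda_\gamma\le1$, which are used in \cref{lem:G5a,lem:G8,lem:M4,lem:M3,lem:G5b}. Because the list is finite for fixed $H$, the minimum is positive.

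The first and most delicate step is to check that the imported inputs are themselves uniform in $N$. I would argue that the imported threshold in \citet{mangold2025scaffold} is of the form $\gamma\lesssim 1/(LH)$ times absolute constants. Their coarse moment bounds are also stated uniformly in the client number: the leading terms are $O(\gamma/N)$ plus $N$-free terms, so bounding $1/N\le 1/2$ makes them uniform. The threshold and coarse bounds do not otherwise depend on $N$.

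The sixth-moment envelope in those bounds is supplied by $B_\eps$, not by $\sigma^2$; this is what makes \cref{lem:control-sixth} uniform. For \cref{lem:QH-uniform}, uniformity follows from $0\le 1-1/N\le 1$, as the proof there already notes.

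The second step is the chain of derived lemmas. Each of them uses only the following ingredients:
\begin{itemize}
  \item finite-step Gr\"onwall factors $(1+\gamma L)^H\le e$;
  \item Jensen's inequality over the client average, which is $N$-free;
  \item Young and Cauchy--Schwarz steps with fixed $\epsilon$ in \cref{lem:M4};
  \item the restoring factors $1-\rho_\gamma^k\ge a\gamma$.
\end{itemize}
I would verify term by term that each step inflates the constant only by factors depending on $H$ and the problem parameters. Inequalities of the type $\gamma^{5/2}/\sqrt N\le \gamma^2/N+\gamma^3$ hold for all $N,\gamma$ with absolute constants, so they introduce no relative-rate condition. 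The sensitivity bound \eqref{eq:R-coordinate-Lipschitz} uses only $L$, $a$ and $H$, and the count $NH$ of noise coordinates cancels the $1/N^2$ exactly.

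Finally, the assembly in \cref{thm:main} adds finitely many (at most $O(H)$) remainders, each bounded by a constant times $r_{\gamma,N}$, and divides by $aH$. This gives the uniform $C_H$, and the joint-limit statement \eqref{eq:normalized-remainder} follows by dividing by $\gamma^2$.

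The main obstacle is the imported step. If the published threshold or coarse moment bounds carry hidden $N$-dependence, for instance through a condition like $\gamma\lesssim 1/(NLH)$ in the control contraction, I would fall back on re-deriving them in this scalar setting. I would first re-prove the coarse control bound through \cref{lem:control-representation}: it expresses $\xi^+$ without $\xi$, so the controls need no contraction condition at all. I would then re-prove the global $x$ bound by the contraction argument of \cref{lem:G5a}. Both arguments are manifestly $N$-free.
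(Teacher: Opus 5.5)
Your proposal is correct and follows the same route as the paper's proof. In both, $\gamma_{0,H}$ is the minimum of finitely many $N$-free restrictions, and each lemma's constant is audited: finite-step Gr\"onwall factors, Jensen over clients, fixed-$\epsilon$ absorption, restoring factors $\ge a\gamma$, AM--GM reduction of mixed scales, and the coordinate-replacement count; both also take the $N$-uniformity of the inputs imported from \citet{mangold2025scaffold} as given. Two small corrections: the coordinate-replacement count gives $NH\cdot(\gamma/N)\cdot(\gamma^2/N)=H\gamma^3/N$, so it cancels only one factor of $1/N$, and the surviving factor is exactly what keeps $\E[\eta\Delta]$ out of the $N^0\gamma^2$ layer; and in your fallback, $\xi_c^+$ still depends on $\xi_c$ (and $x$) indirectly through $\theta_{c,h}$ with an $O(\gamma HL)$ coefficient, so re-deriving the coarse bounds needs a small-$\gamma$ absorption closed jointly with the $M_2$ bound (given a priori finite stationary moments), though every condition involved is $N$-free.
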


\begin{proof}
Each preceding argument requires only finitely many small-step restrictions of the form $\gamma\le c_H$, where $c_H>0$ depends on fixed $H$ and fixed problem/noise parameters.
These include the stationarity and moment restrictions inherited from \citet{mangold2025scaffold}, $\gamma\le1$, conditions such as $a\gamma\le1$ or $\gamma HL\le1$, finite-step stability restrictions, and the fixed-$\epsilon$ absorption conditions in \cref{lem:M4}. Define
\[
  \gamma_{0,H}:=\min_j\gamma_{0,H}^{(j)}.
\]
Because the collection is finite and every $\gamma_{0, H}^{(j)}>0$, the common threshold is positive and independent of $N$.

The constant bookkeeping is uniform in $N$ for the following reasons.
\begin{itemize}
  \item The imported coarse iterate moments and the restricted control expansion \cref{lem:QH-uniform} have constants independent of $N$.
  \item Client averaging introduces factors $N^{-1}$ or $N^{-2}$ explicitly; Jensen and Cauchy--Schwarz do not introduce hidden powers of $N$.
  \item Mixed fractional scales are reduced using algebraic inequalities such as
  \[
    \frac{\gamma^{5/2}}{\sqrt N}
    \le \frac12\left(\frac{\gamma^2}{N}+\gamma^3\right),
  \]
  with constants independent of the relative rate of $N$ and $\gamma$.
  \item In the coordinate-replacement argument in Equation~\eqref{eq:etaDelta-sharp}, the $NH$ fresh coordinates are multiplied by one coefficient of order $\gamma/N$ and one path-sensitivity factor of order $\gamma^2/N$, leaving $C_H\gamma^3/N$, not an $N^0$ term.
  \item Restoring factors satisfy lower bounds of order $a\gamma$, and $a\ge\mu>0$ is fixed; the resulting division by $\gamma$ is matched by an extra factor of $\gamma$ in the numerator of each moment estimate.
  \item The fourth-moment absorption fixes $\epsilon>0$ after the $H$-dependent constants are known; $\epsilon$ is never chosen as a function of $N$ or $\gamma$.
\end{itemize}
Thus the final constant $C_H$ may depend on $H,\mu,L,K_3,K_4,K_5,B_\eps,\sigma^2$ and related fixed quantities, but not on $N$ or $\gamma$.
\end{proof}

\begin{remark}[No explicit closed form for $\gamma_{0,H}$ is claimed]
The theorem requires only the existence of a positive common threshold. Writing a closed-form minimum of all numerical restrictions would require a separate mechanical constant chase and is intentionally omitted. This avoids presenting a false ``explicit'' threshold that silently fails to meet an absorption condition or a restriction inherited from the earlier analysis.
\end{remark}

\begin{remark}[Not uniform in $H$]
Neither $C_H$ nor $\gamma_{0,H}$ is claimed to be controlled uniformly as $H\to\infty$. Factors generated by finite-step stability, sums over local steps, and the coefficient itself are allowed to grow with $H$. The theorem is an arbitrary-fixed-$H$ statement only.
\end{remark}

\subsection{Uniform joint \texorpdfstring{$(\gamma,1/N)$}{(gamma, 1/N)} interpretation}

\begin{corollary}[Joint-limit characterization of the $N^0\gamma^2$ coefficient]
\label{cor:joint-limit}
For any sequences $\gamma_k\to0$ and integers $N_k\to\infty$, with fixed $H$ and no relative-rate condition,
\begin{equation}
\label{eq:joint-limit}
  \lim_{k\to\infty}
  \frac{
    b_{\gamma_k,N_k,H}
    +\dfrac{\tau\sigma^2}{4a^2}\dfrac{\gamma_k}{N_k}
  }{\gamma_k^2}
  =-\frac{\tau\sigma^2}{12a}
  \frac{(H-1)(5H-1)}{H}.
\end{equation}
\end{corollary}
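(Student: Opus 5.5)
The corollary should follow directly from \cref{thm:main} together with the uniformity statement \cref{prop:uniformity}; no new estimate is needed.

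First, fix $H$ and take the common threshold $\gamma_{0,H}$ and constant $C_H$ from \cref{thm:main} and \cref{prop:uniformity}. Since $\gamma_k\to0$ and $N_k\to\infty$, there is $k_0$ such that $\gamma_k\le\gamma_{0,H}$ and $N_k\ge2$ for all $k\ge k_0$. For these indices the expansion in Equation~\eqref{eq:main-expansion} applies at $(\gamma_k,N_k)$ with the same constant $C_H$. This is the key point: the constant does not depend on $N$ or $\gamma$, so no relative-rate condition between $\gamma_k$ and $N_k$ is required.

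Second, rearrange Equation~\eqref{eq:main-expansion} by moving the $\gamma/N$ layer to the left and dividing by $\gamma_k^2>0$. This gives exactly Equation~\eqref{eq:normalized-remainder}:
\[
\left|\frac{b_{\gamma_k,N_k,H}+\frac{\tau\sigma^2}{4a^2}\frac{\gamma_k}{N_k}}{\gamma_k^2}-B_{20}^{\mathrm{SCAF}}(H)\right|
\le \frac{|R_{\gamma_k,N_k,H}|}{\gamma_k^2}\le C_H\left(\frac1{N_k}+\gamma_k\right).
\]
The right-hand side tends to zero because $1/N_k\to0$ and $\gamma_k\to0$ separately. Hence the normalized residual converges to $B_{20}^{\mathrm{SCAF}}(H)$, which is the closed form in Equation~\eqref{eq:joint-limit}.

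The only step that needs care is the first: the bound must be applied with a single constant along the whole sequence. That uniformity is precisely what \cref{prop:uniformity} supplies. Once it is available, the limit is a squeeze argument with no remaining obstacle.
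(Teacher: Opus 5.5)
Your proposal is correct and follows the paper's proof: divide the uniform remainder bound \eqref{eq:main-remainder} by $\gamma_k^2$ to get $|R_{\gamma_k,N_k,H}|/\gamma_k^2\le C_H(1/N_k+\gamma_k)\to0$, which needs no relative-rate condition because $C_H$ and $\gamma_{0,H}$ do not depend on $N$ or $\gamma$. Your remark that $\gamma_k\le\gamma_{0,H}$ and $N_k\ge2$ hold for all sufficiently large $k$ is a small piece of bookkeeping that the paper leaves implicit.
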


\begin{proof}
Divide Equation~\eqref{eq:main-remainder} by $\gamma^2$:
\[
  \left|\frac{R_{\gamma,N,H}}{\gamma^2}\right|
  \le C_H\left(\frac1N+\gamma\right).
\]
Both terms converge to zero along any such sequence, independently of their relative rate.
\end{proof}

\begin{remark}[Correct asymptotic terminology]
The monomials $\gamma/N$ and $\gamma^2$ do not have a fixed scalar ordering along all two-parameter paths. For example, $N=\gamma^{-3}$ makes $\gamma/N=\gamma^4$, while $N=\gamma^{-1/2}$ makes $\gamma/N=\gamma^{3/2}$. Therefore, the precise statement supported by \cref{thm:main} is that \cref{thm:main} identifies the \emph{client-number-independent $N^0\gamma^2$ coefficient in a uniform joint $(\gamma,1/N)$ expansion after removing the known $\gamma/N$ layer}. It should not be described unconditionally as ``the next term.''
\end{remark}

\begin{remark}[When the coefficient is nonzero]
The coefficient in Equation~\eqref{eq:B20total} is nonzero whenever
\[
  \tau\sigma^2(H-1)\ne0.
\]
Without this additional condition, the theorem identifies the $N^0\gamma^2$ coefficient but does not assert that it is the first nonzero client-independent term.
\end{remark}

\subsection{Precise claim boundary}

The strongest claim supported by the present proof is:

\begin{quote}
For fixed $H$, we derive a stationary-bias expansion for one-dimensional homogeneous stochastic SCAFFOLD that is uniform jointly in $(\gamma,1/N)$. After removing the known $O(\gamma/N)$ layer, the client-number-independent $N^0\gamma^2$ coefficient is
\[
  -\frac{f'''(x^\star)\sigma^2}{12f''(x^\star)}
  \frac{(H-1)(5H-1)}{H}.
\]
The coefficient admits a canonical source-wise decomposition into a direct local-gradient-noise contribution and an additional contribution induced by the persistent SCAFFOLD control second moment.
\end{quote}

The following claims are \emph{not} established here:
\begin{itemize}
  \item a complete second-order expansion at fixed finite $N$;
  \item discovery of the fact that stochastic SCAFFOLD is biased;
  \item a claim that all higher-order bias is caused by control variates;
  \item uniformity as $H\to\infty$;
  \item heterogeneous-client, multidimensional, or state-dependent-noise extensions;
  \item equality of $B_{20}^{\mathrm{local}}$ with a separately proved general-$H$ FedAvg coefficient.
\end{itemize}

\section{Numerical protocol and supplementary checks}
\label{sec:numerical-supp}

This appendix records the simulation protocol, the initial coefficient grid, and the negative controls summarized in \cref{sec:numerics}.

\subsection{Full simulation protocol}

For each setting, we estimate the stationary mean in two ways: by the direct sample mean and by the exact mean-balance identity used in the proof. Unless an entry is explicitly labeled ``direct,'' the reported $\hat b_{\gamma, N,H}$ and normalized statistic $C_{\gamma, N,H}$ use the mean-balance estimator; the direct sample mean is retained as a diagnostic. Let
\[
m_{\gamma,H}=\left\lceil\frac{1}{\mu H\gamma}\right\rceil.
\]
We discard $100m_{\gamma, H}$ communication rounds as burn-in and use batches of length $20m_{\gamma, H}$, with at least 100 batches per chain. Each setting uses eight independent chains. Pointwise uncertainty is reported using 95\% Student-$t$ intervals across the eight chain estimates.

For extrapolated intercepts, we use a nonparametric bootstrap at the independent-chain level. Within each parameter setting, the eight chain estimates are resampled with replacement, the setting mean is recomputed, and the same unweighted linear regression is refit. The reported 95\% bootstrap interval is the interval between the empirical 2.5\% and 97.5\% quantiles of the refitted intercepts.

For every reported setting, we monitor finiteness, preservation of the zero-sum control invariant, agreement between the direct and mean-balance estimators, and stability between the first and second halves of the post-burn-in sample. These checks diagnose simulation failures or insufficient equilibration; they are not tests of the theorem.

\subsection{Initial joint path and the recorded \texorpdfstring{$H=4$}{H=4} discrepancy}

The initial study followed $N=1/\gamma$ at
\[
\gamma\in\{1/12,1/16,1/24,1/32,1/48\},
\qquad H\in\{2,4\}.
\]
For $H=2$, a linear extrapolation gives intercept $-0.187214$ with 95\% bootstrap interval $[-0.189250,-0.185175]$, consistent with $B_{20}(2)=-0.1875$. For $H=4$, the same coarse-grid extrapolation gives $-0.588633$ with interval $[-0.590776,-0.586514]$, which does not contain $B_{20}(4)=-0.59375$. Under the criterion specified for the initial study, this outcome was recorded as a contradiction candidate. The theorem permits an $O_H(\gamma)$-normalized remainder and does not imply exact linearity over this finite grid, so we specified a smaller-step experiment with fresh random seeds before inspecting its outcomes.

\begin{figure}[H]
\centering
\begin{subfigure}{0.49\linewidth}
  \centering
  \includegraphics[width=\linewidth]{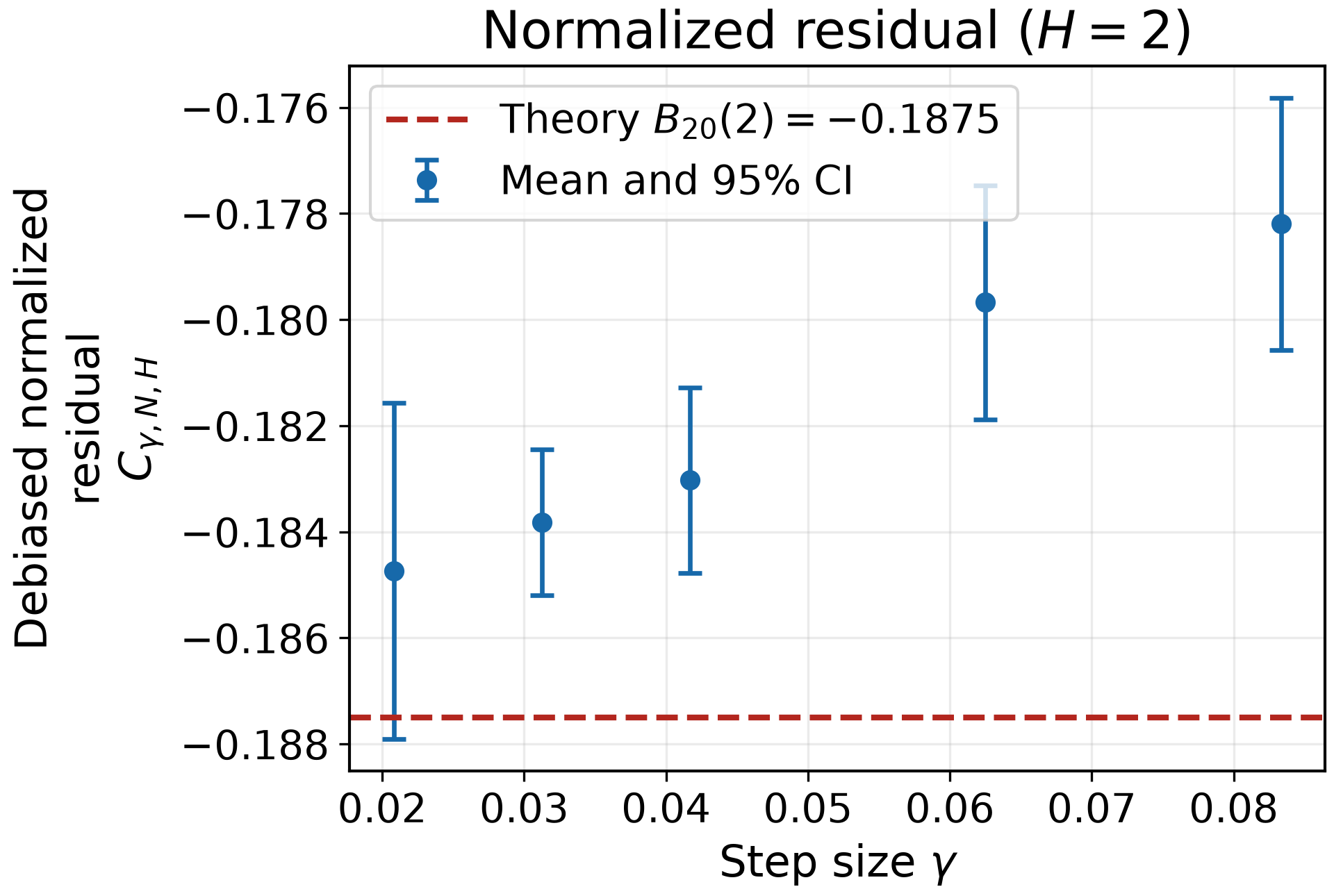}
  \caption{$H=2$.}
\end{subfigure}\hfill
\begin{subfigure}{0.49\linewidth}
  \centering
  \includegraphics[width=\linewidth]{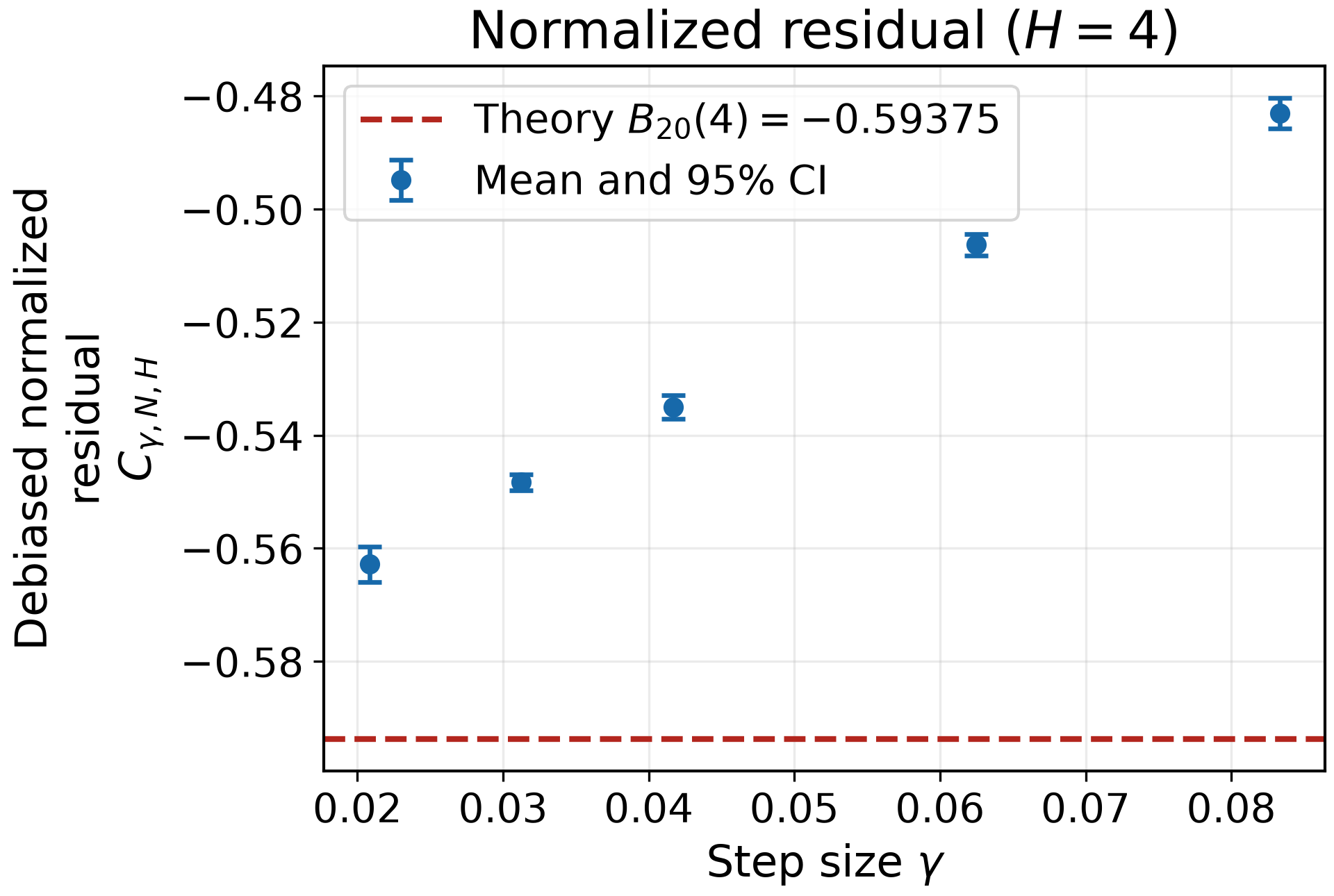}
  \caption{$H=4$.}
\end{subfigure}
\caption{Initial coarse-grid coefficient study. The $H=2$ extrapolation is consistent with the target coefficient, whereas the $H=4$ extrapolation is offset on this finite-step grid.}
\label{fig:initial-coeff}
\end{figure}

The follow-up used
\[
(\gamma,N)\in\{(1/64,64),(1/96,96),(1/128,128),(1/192,192)\}.
\]
Writing $E(\gamma):=C_{\gamma,1/\gamma,4}-B_{20}(4)$, the observed errors decrease from $0.02159$ to $0.00787$, while $E(\gamma)/\gamma$ remains of constant order. The follow-up met its specified validity, directional convergence, smallest-step proximity, and $O(\gamma)$-compatibility criteria.

\begin{figure}[H]
\centering
\begin{subfigure}{0.49\linewidth}
  \centering
  \includegraphics[width=\linewidth]{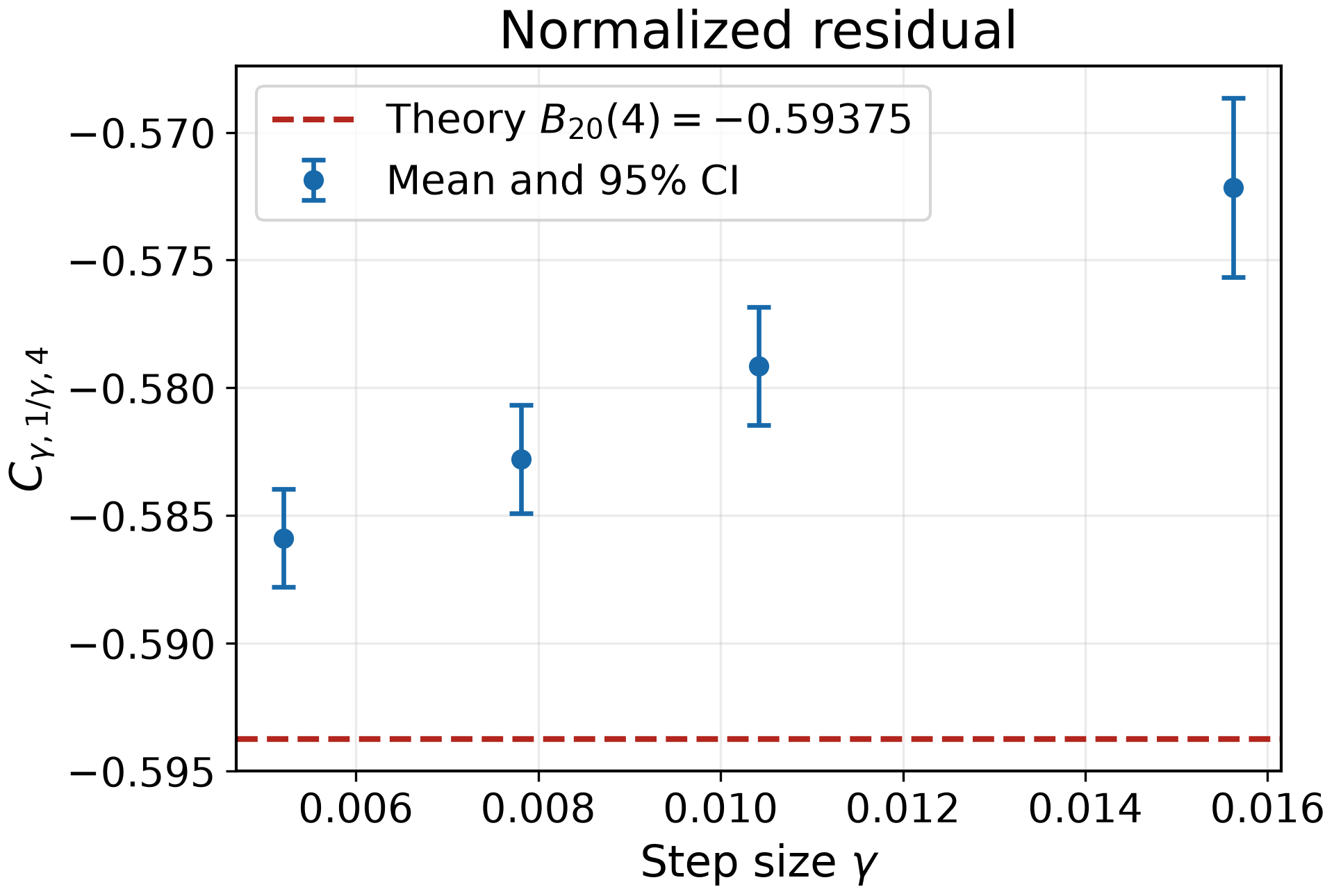}
  \caption{Normalized residual.}
\end{subfigure}\hfill
\begin{subfigure}{0.49\linewidth}
  \centering
  \includegraphics[width=\linewidth]{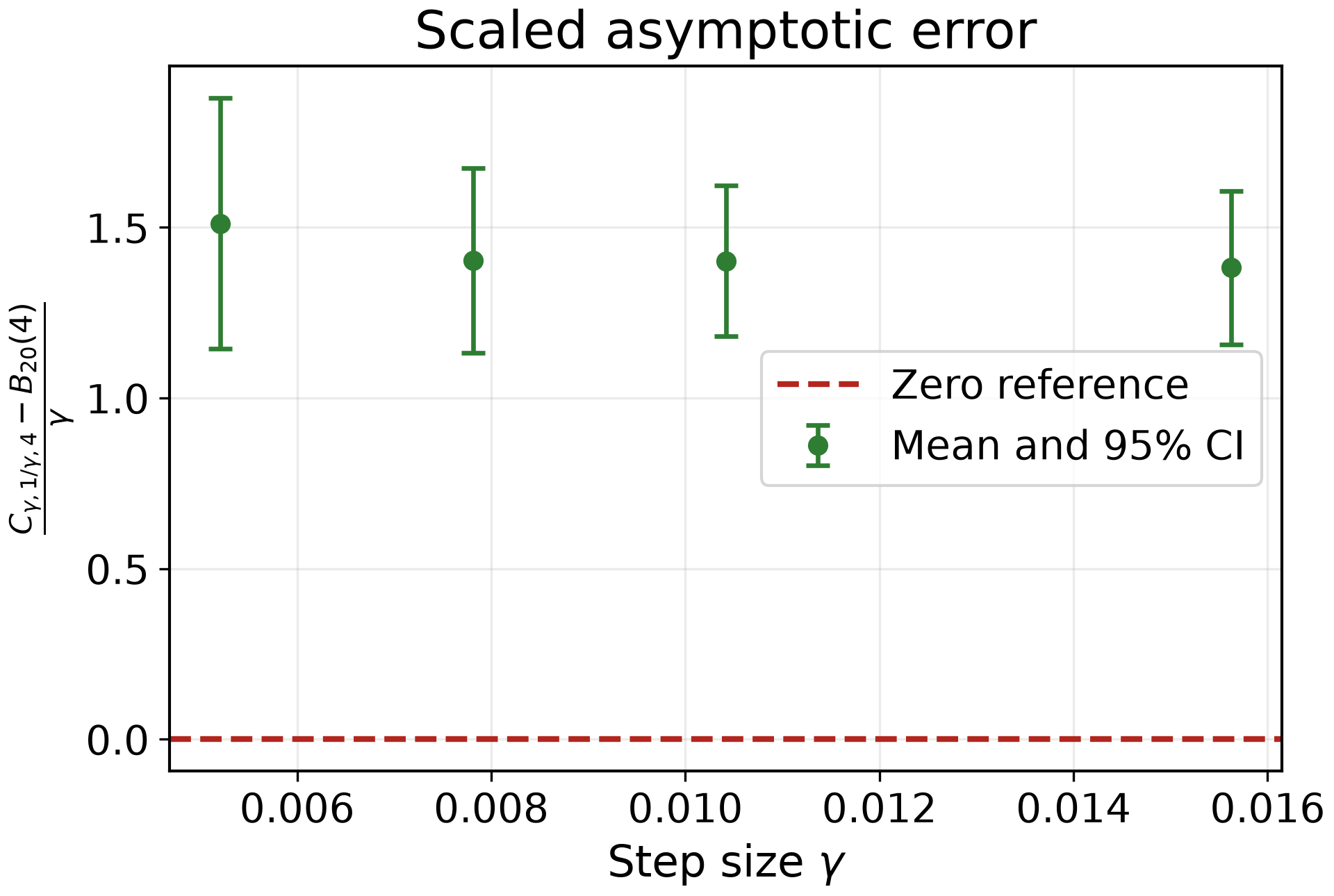}
  \caption{Error divided by $\gamma$.}
\end{subfigure}
\caption{Smaller-step $H=4$ study. The normalized residual moves toward the predicted coefficient, while the scaled error remains of constant order, consistent with the \(O_H(\gamma)\) remainder permitted by Equation~\eqref{eq:Csim-prediction} along \(N=1/\gamma\).}
\label{fig:H4-confirm}
\end{figure}

\begin{table}[H]
\centering
\caption{Fresh smaller-step results for $H=4$. Intervals are 95\% Student-$t$ intervals across eight independent chains.}
\label{tab:H4-small}
\begin{tabular}{@{}rrrrr@{}}
\toprule
$\gamma$ & $N$ & $C_{\gamma,N,4}$ & 95\% CI & $E(\gamma)/\gamma$\\
\midrule
$1/64$  & 64  & $-0.572157$ & $[-0.575671,-0.568643]$ & $1.382$\\
$1/96$  & 96  & $-0.579148$ & $[-0.581447,-0.576848]$ & $1.402$\\
$1/128$ & 128 & $-0.582795$ & $[-0.584912,-0.580678]$ & $1.402$\\
$1/192$ & 192 & $-0.585876$ & $[-0.587794,-0.583958]$ & $1.512$\\
\bottomrule
\end{tabular}
\end{table}

\subsection{Negative controls and the \texorpdfstring{$H=1$}{H=1} boundary}

With deterministic gradients ($\sigma^2=0$), both displayed stochastic coefficients vanish. For the quadratic objective $f'(x)=x$, one has $f'''(x^\star)=0$; the mean-balance estimator is then structurally zero, so we report the direct sample mean. Finally, $H=1$ is a boundary check outside the theorem's stated $H\ge2$ domain. The statistic in that case is the normalized residual after removing the leading $\gamma/N$ term.

\begin{table}[H]
\centering
\small
\caption{Negative controls and the $H=1$ boundary check.}
\label{tab:negative-controls}
\begin{tabular}{@{}lrrcc@{}}
\toprule
Case & $\gamma$ & $N$ & Statistic & 95\% CI\\
\midrule
Deterministic, $H=2$ & $1/24$ & 24 & $\hat b_{\rm direct}=5.39\times10^{-16}$ & $[-1.49\times10^{-16},\,1.23\times10^{-15}]$\\
Deterministic, $H=2$ & $1/48$ & 48 & $\hat b_{\rm direct}=-3.57\times10^{-15}$ & $[-1.06\times10^{-14},\,3.47\times10^{-15}]$\\
Quadratic, $H=2$ & $1/24$ & 24 & $\hat b_{\rm direct}=7.11\times10^{-4}$ & $[-1.98\times10^{-4},\,1.62\times10^{-3}]$\\
Quadratic, $H=2$ & $1/48$ & 48 & $\hat b_{\rm direct}=1.92\times10^{-4}$ & $[-3.03\times10^{-4},\,6.86\times10^{-4}]$\\
$H=1$ boundary & $1/24$ & 24 & $C=-2.345\times10^{-3}$ & $[-4.156\times10^{-3},\,-5.33\times10^{-4}]$\\
$H=1$ boundary & $1/48$ & 48 & $C=-4.87\times10^{-4}$ & $[-3.658\times10^{-3},\,2.684\times10^{-3}]$\\
\bottomrule
\end{tabular}
\end{table}

The deterministic and quadratic intervals contain zero at both tested step sizes. For $H=1$, the interval is slightly separated from zero at $\gamma=1/24$ but contains zero at $\gamma=1/48$, with a smaller point estimate at the smaller step size.
\end{document}